\pgfplotsset{compat=1.12}
\definecolor{DarkGreen}{rgb}{0.1,0.5,0.1}
\definecolor{DarkRed}{rgb}{0.5,0.1,0.1}
\definecolor{DarkBlue}{rgb}{0.1,0.1,0.5}
\definecolor{Gray}{rgb}{0.2,0.2,0.2}
\definecolor{c1}{RGB}{38, 70, 83}
\definecolor{c2}{RGB}{42, 157, 143}
\definecolor{c3}{RGB}{233, 196, 106}
\definecolor{c5}{RGB}{231, 111, 81}
\definecolor{c4}{RGB}{244, 162, 97}
\definecolor{c1}{RGB}{38, 70, 83}
\definecolor{c2}{RGB}{42, 157, 143}
\definecolor{c3}{RGB}{233, 196, 106}
\definecolor{c5}{RGB}{231, 111, 81}
\definecolor{c4}{RGB}{244, 162, 97}
\newcommand\blfootnote[1]{%
  \begingroup
  \renewcommand\thefootnote{}\footnote{#1}%
  \addtocounter{footnote}{-1}%
  \endgroup
}
\lstdefinestyle{mystyle}{
    commentstyle=\color{DarkBlue},
    keywordstyle=\color{DarkRed},
    numberstyle=\tiny\color{Gray},
    stringstyle=\color{DarkGreen},
    basicstyle=\footnotesize,
    breakatwhitespace=false,         
    breaklines=true,                 
    captionpos=b,                    
    keepspaces=true,                 
    numbers=left,                    
    numbersep=5pt,                  
    showspaces=false,                
    showstringspaces=false,
    showtabs=false,                  
    tabsize=2
}
\def\draft{1}
\def\submit{0}
    \def\ShowAuthNotes{1}
    \def\ShowAuthNotes{0}
\newcommand{\forsubmit}[1]{#1}
\newcommand{\forreals}[1]{}
\newcommand{\forreals}[1]{#1}
\newcommand{\forsubmit}[1]{}
\newcommand{\authnote}[2]{{ \footnotesize \bf{\color{DarkRed}[#1's Note:
{\color{DarkBlue}#2}]}}}
\newcommand{\authnote}[2]{}
\newtheorem{theorem}{Theorem}[section]
\newtheorem{lemma}[theorem]{Lemma}
\newtheorem{corollary}[theorem]{Corollary}
\newtheorem{proposition}[theorem]{Proposition}
\newtheorem{fact}[theorem]{Fact}
\newtheorem{definition}[theorem]{Definition}
\newtheorem*{definition*}{Definition}
\newtheorem*{proposition*}{Proposition}
\newtheorem{result}{Theorem}
\theoremstyle{definition}
\newtheorem*{example*}{Example}
\newtheoremstyle{example_contd}
{\topsep} {\topsep}%
{}
{}
{\bfseries}
{.}
{1em}
{\thmname{#1} \thmnumber{ #2}\thmnote{#3} (continued)}
\theoremstyle{example_contd}
\newcommand{\chapterref}[1]{\hyperref[ch:#1]{Chapter~\ref{ch:#1}}}
\newcommand{\claimref}[1]{\hyperref[claim:#1]{Claim~\ref{claim:#1}}}
\newcommand{\corollarylabel}[1]{\label{cor:#1}}
\newcommand{\corollaryref}[1]{\hyperref[cor:#1]{Corollary~\ref{cor:#1}}}
\newcommand{\definitionlabel}[1]{\label{def:#1}}
\newcommand{\definitionref}[1]{\hyperref[def:#1]{Definition~\ref{def:#1}}}
\newcommand{\equationlabel}[1]{\label{eq:#1}}
\newcommand{\equationref}[1]{\hyperref[eq:#1]{Equation~\ref{eq:#1}}}
\newcommand{\factref}[1]{\hyperref[fact:#1]{Fact~\ref{fact:#1}}}
\newcommand{\figurelabel}[1]{\label{fig:#1}}
\newcommand{\figureref}[1]{\hyperref[fig:#1]{Figure~\ref{fig:#1}}}
\newcommand{\tableref}[1]{\hyperref[tab:#1]{Table~\ref{tab:#1}}}
\newcommand{\itemref}[1]{\hyperref[item:#1]{Item~(\ref{item:#1})}}
\newcommand{\lemmalabel}[1]{\label{lem:#1}}
\newcommand{\lemmaref}[1]{\hyperref[lem:#1]{Lemma~\ref{lem:#1}}}
\newcommand{\proplabel}[1]{\label{prop:#1}}
\newcommand{\propref}[1]{\hyperref[prop:#1]{Proposition~\ref{prop:#1}}}
\newcommand{\propositionlabel}[1]{\label{prop:#1}}
\newcommand{\propositionref}[1]{\hyperref[prop:#1]{Proposition~\ref{prop:#1}}}
\newcommand{\remarkref}[1]{\hyperref[rem:#1]{Remark~\ref{rem:#1}}}
\newcommand{\sectionlabel}[1]{\label{sec:#1}}
\newcommand{\sectionref}[1]{\hyperref[sec:#1]{Section~\ref{sec:#1}}}
\newcommand{\theoremlabel}[1]{\label{thm:#1}}
\newcommand{\theoremref}[1]{\hyperref[thm:#1]{Theorem~\ref{thm:#1}}}
\newcommand{\Esymb}{\mathbb{E}}
\newcommand{\Psymb}{\mathbb{P}}
\DeclareMathOperator*{\E}{\Esymb}
\DeclareMathOperator*{\ProbOp}{\Psymb r}
\renewcommand{\Pr}{\ProbOp}
\renewcommand{\hat}{\widehat}
\newcommand{\cA}{{\cal A}}
\newcommand{\cD}{{\cal D}}
\newcommand{\cH}{{\cal H}}
\newcommand{\cL}{{\cal L}}
\newcommand{\cO}{{\cal O}}
\newcommand{\cW}{{\cal W}}
\newcommand{\cX}{{\cal X}}
\newcommand{\cY}{{\cal Y}}
\newcommand{\defeq}{\stackrel{\small \mathrm{def}}{=}}
\renewcommand{\leq}{\leqslant}
\renewcommand{\le}{\leqslant}
\renewcommand{\geq}{\geqslant}
\renewcommand{\ge}{\geqslant}
\newcommand{\set}[1]{\{#1\}}
\newcommand{\card}[1]{\lvert#1\rvert}
\newcommand{\norm}[1]{\lVert#1\rVert_2}
\newcommand{\R}{\mathbb{R}}
\renewcommand{\D}{\mathcal D}
\renewcommand{\qedsymbol}{{$\blacksquare$}}
\newcommand{\ignore}[1]{}
\newcommand{\sign}{\mathrm{sign}}
\newcommand{\polylog}{{\rm polylog}}
\newcommand{\poly}{{\rm poly}}
\newcommand{\supp}{\mathrm{supp}}
\DeclareMathOperator*{\argmin}{arg\,min}
\renewcommand{\epsilon}{\varepsilon}
\newcommand{\eps}{\epsilon}
\newcommand{\Unif}{\mathrm{Unif}}
\newcommand{\remove}[1]{}
\newcommand{\calY}{\mathcal{Y}}
\newcommand{\calYhat}{\hat{\calY}}
\newcommand{\calD}{\mathcal{D}}
\newcommand{\yhat}{\hat{y}}
\newcommand{\qt}{\tilde{q}}
\newcommand{\ptilde}{\tilde{p}}
\newcommand{\cscalg}{\cA_{\mathrm{csc}}}
\newcommand{\omax}{\omega_{\max}}
\newcommand{\Hcal}{\mathcal{H}}
\newcommand{\Xcal}{\cX}
\newcommand{\Ych}{\cYh}
\newcommand{\1}{\ind}
\newcommand{\Wcal}{\mathcal{W}}
\newcommand{\rct}{\D_\mathrm{rct}}
\newcommand{\ps}{p^*}
\newcommand{\pt}{\ptilde}
\newcommand{\ys}{y^*}
\newcommand{\yt}{\tilde{y}}
\newcommand{\calH}{\mathcal{H}}
\newcommand{\cYh}{\mathcal{\hat{Y}}}
\renewcommand{\yhat}{\hat{y}}
\newcommand{\Ber}{\mathrm{Ber}}
\newcommand{\ind}{\mathbf{1}}
\newcommand{\lmax}{\ell_{\max}}
\newcommand{\ft}{\tilde{f}}
\newcommand{\unif}{\mathrm{Unif}}
\title{Making Decisions under Outcome Performativity} 
\author{Michael P. Kim \\ University of California, Berkeley\\ mpkim@berkeley.edu \and Juan C. Perdomo\\ University of California, Berkeley\\jcperdomo@berkeley.edu}
\date{\today}
\begin{document}


\maketitle

\pagenumbering{gobble}

\blfootnote{Authors listed alphabetically.}

\begin{abstract}
Decision-makers often act in response to data-driven predictions, with the goal of achieving favorable outcomes.
In such settings, predictions don’t passively forecast the future; instead, predictions actively shape the distribution of outcomes they are meant to predict.
This \emph{performative prediction} setting \cite{performative} raises new challenges for learning ``optimal'' decision rules.
In particular, existing solution concepts do not address the apparent tension between the goals of \emph{forecasting} outcomes accurately and \emph{steering} individuals to achieve desirable outcomes.

To contend with this concern, we introduce a new optimality concept---\emph{performative omniprediction}---adapted from the supervised (non-performative) learning setting \cite{omni}.
A performative omnipredictor is a single predictor that simultaneously encodes the optimal decision rule with respect to many possibly-competing objectives.
Our main result demonstrates that efficient performative omnipredictors exist, under a natural restriction of performative prediction, which we call \emph{outcome performativity}.
On a technical level, our results follow by carefully generalizing the notion of outcome indistinguishability \cite{oi,gopalan2022loss} to the outcome performative setting.
From an appropriate notion of Performative OI, we recover many consequences known to hold in the supervised setting, such as omniprediction and universal adaptability \cite{ua}.
\end{abstract}

\clearpage

\pagenumbering{arabic}
\section{Introduction}

Data-driven predictions inform policy decisions that directly impact individuals.
Proponents argue that by understanding patterns from the past, decisions can be optimized to improve future outcomes, to the benefit of individuals and institutions \cite{kleinberg2015prediction}.
In the US educational system, for instance, early warning systems (EWS) have become a key tool used by states to combat low graduation rates \cite{balfanz2019early,survey}. 
The rationale for using such systems is clear.  Given a predictor that, for each student, estimates the likelihood of graduation, school districts can identify high-risk students at a young age, directing resources to improve individuals' outcomes, and in turn, the districts' graduation rates.
Despite compelling arguments, reliably predicting life outcomes remains a largely-unsolved problem in machine learning.

A key challenge in utilizing predictions to inform decisions is that, often, predictions influence the outcomes they're meant to forecast.
In the education example above, districts consider predictions of graduation with the \emph{intention} of effecting graduation outcomes.
In this situation---where predictions determine interventions, which influence outcomes---accuracy can be a paradoxical notion.
If a predictor correctly identifies high risk individuals as likely to suffer negative outcomes, after successful interventions, the individuals' outcomes will be positive and the initial predictions will appear inaccurate.
To apply data-driven tools effectively, decision-makers must resolve an apparent tension between the objectives of \emph{forecasting} individuals' outcomes reliably and \emph{steering} individuals to achieve better outcomes.


Recent work of \cite{performative} introduced \emph{performative prediction} to contend with the fact that predictions not only forecast, but also shape the world.
Informally, a prediction problem is performative if the act of prediction influences the distribution on individual-outcome pairs.
From early warning systems, to online content recommendations, to public health advisories: across many contexts, individuals respond to predictions in a manner that changes the likelihood of possible outcomes (successful graduation, increased click rate, or decreased disease caseload).




In their original work on the subject, \cite{performative} frame the goal of performative prediction through loss minimization.
In this framing, the ultimate goal is to learn a \emph{performatively optimal} decision rule.
A decision rule $h_\mathrm{po}$ is performatively optimal if it achieves the minimal expected loss (within some class of decision rules $\Hcal$) over the distribution that it induces,
\begin{gather}
\equationlabel{eq:po}
    h_\mathrm{po} \in \argmin_{h \in \Hcal} \E_{(x,y) \sim \D(h)}[\ell(x,h(x),y)].
\end{gather}
Here, $\D(h)$ is the distribution over $(x,y)$ pairs observed as response to deploying $h$.

For generality's sake, performative prediction makes minimal restrictions on how the distribution may respond to a chosen decision rule.
In particular, the choice to deploy a hypothesis $h$, may change the joint distribution $(x,y) \sim \D(h)$ over individual-outcome pairs, essentially arbitrarily.\footnote{\cite{performative}  assume only a Lipschitzness condition, where similar hypotheses $h$ and $h'$ give rise to similar distributions $\D(h)$ and $\D(h')$, measured in Wasserstein (earth mover's) distance.}
This generality enables us to write a broad range of prediction problems---including supervised learning \cite{shalev2014understanding}, strategic classification \cite{hardt2016strategic}, and causal inference \cite{miller2020strategic}---as special cases of performative prediction.
In all, \cite{performative} establishes a powerful framework for reasoning about settings where the distribution of examples responds to the predictions.


While powerful, the framework has two noticeable limitations.
First, achieving performative optimality is hard.
Without any assumptions on the distributional response $\D(\cdot)$, achieving performative optimality requires exhaustive search over the hypothesis class $\Hcal$.
Furthermore, even under strong structural assumptions on the distributional response and choice of loss $\ell$, it is known that convex optimization does not suffice to achieve optimality \cite{performative, miller2021outside}.
Stated another way: the generality of performative prediction does not come for free.
To date, all existing methods for performative optimality require strong specification assumptions on the outcome distribution and distributional response.

The second limitation arises due to formulating performative prediction as a loss minimization problem:  the loss $\ell$ is fixed, once and for all.
In performative prediction, different losses can encode drastically different objectives:  losses are used not only to promote accuracy of predictions, but also to encourage favorable outcome distributions.
Consider a loss designed for accurate forecasting, e.g., the squared error $(\yhat - y)^2$.
In this case, the optimal decision rule will prioritize accuracy without regard for the ``quality'' of the outcome distribution.
On the other hand, consider a loss designed to steer towards positive outcomes, $1-y$.
Here, there is no notion of accuracy (the loss ignores the prediction $\yhat$), but instead, the objective is to nudge the distribution of outcomes towards $y = 1$.

Encoding the decision-making objective through a single loss function forces the learner to choose the ``correct'' objective at train time.
Downstream decision-makers, however, may reasonably want to explore different objectives according to their own sense of ``optimality''.
In the existing formulations for performative prediction, exploring different losses requires re-training from scratch.
In this work, we investigate an alternative formulation that enables decision-makers to efficiently explore optimal decision rules under many different objectives.

\subsection{Decision-Making under Outcome Performativity}

To begin, we introduce a special case of the performative prediction setting, which we call \emph{outcome performativity}.
Outcome performativity focuses on the effects of local decisions on individuals' outcomes, rather than the effect of broader policy on the distribution of individuals.
For instance, our example of graduation prediction is modeled well by outcome performativity.
For a given a student, the EWS prediction they receive affects their future graduation outcome, but does not influence their demographic features or historical test scores.
In other words, we narrow our attention to the performative effects of decisions $h(x)$ on the conditional distribution over outcomes $y$, rather than the effects of the decision rule $h$ on the distribution as a whole $\D(h)$.
This reframing of performativity still captures many important decision-making problems, but gives us additional structure to address some of the limitations in the original formulation.


On a technical level, outcome performativity imagines a data generating process over triples $(x,\yhat, \ys)$ where $x \sim \D$ is sampled from a \emph{static} distribution over inputs, then a prediction or decision $\yhat \in \Ych$ is selected (possibly as a function of $x$), and finally the true outcome $\ys \in \cY$ is sampled conditioned on $x$ and $\yhat$.
We focus on binary outcomes $\cY = \set{0,1}$.\footnote{In general, outcome performativity could be defined for larger outcome domains.  Handling such domains is possible, but technical.  We restrict our attention to binary outcomes to focus on the novel conceptual issues.}
In this setting, the outcome performativity assumption posits the existence of an underlying probability function,
\begin{align*}
\ps:\Xcal \times \Ych \to [0,1],	
\end{align*}
where for a given individual $x \in \Xcal$ and decision $\yhat \in \Ych$, the true outcome $\ys$ is sampled as a Bernoulli with parameter $\ps(x,\yhat)$.
We refer to the true outcome distribution $\ps$ as \emph{Nature}.

By asserting a fixed ``ground truth'' probability function, the outcome performativity framework does not allow for arbitrary distributional responses and limits the generality of the approach. For instance, outcome performativity does not capture strategic classification.
But importantly, by refining the model of performativity, there is hope that we may sidestep the hardness results for learning optimal performative predictors.

\begin{figure}
\begin{center}
\begin{tikzpicture}[scale=0.2]
\tikzstyle{every node}+=[inner sep=0pt]
\draw [black] (30.7,-27.5) circle (3);
\draw (30.7,-27.5) node {$x$};
\draw [black] (38.7,-18.7) circle (3);
\draw (38.7,-18.7) node {$\yhat$};
\draw [black] (46.2,-27.5) circle (3);
\draw (46.2,-27.5) node {$\ys$};
\draw [black] (32.72,-25.28) -- (36.68,-20.92);
\fill [black] (36.68,-20.92) -- (35.77,-21.18) -- (36.51,-21.85);
\draw [black] (33.7,-27.5) -- (43.2,-27.5);
\fill [black] (43.2,-27.5) -- (42.4,-27) -- (42.4,-28);
\draw [black] (40.6,-21.3) -- (44.19,-25.27);
\fill [black] (44.19,-25.27) -- (44.02,-24.34) -- (43.28,-25.02);
\end{tikzpicture}
\end{center}
\caption{Causal graphical representation of the outcome performativity data generating process.}
\end{figure}


\newcommand{\fs}{f^*}
\paragraph{Performative Omniprediction.}
We begin by observing that under outcome performativity, the true probability function $\ps$ suggests an optimal decision rule $\fs_\ell:\Xcal \to \Ych$ for any loss $\ell$.
In our setting, $\ps$ governs the outcome distribution, so given an input $x \in \Xcal$, the optimal decision $\fs_\ell(x)$ is determined by a simple, univariate optimization procedure over a discrete set $\Ych$:
\begin{gather}
\equationlabel{eq:opt_dr}
    \fs_\ell(x) \in \argmin_{\yhat \in \Ych}\E_{\ys \sim \ps(x,\yhat)}[\ell(x,\yhat,\ys)].
\end{gather}
Note that the decision rule $\fs_\ell(x)$ minimizes the loss pointwise for $x \in \Xcal$. Consequently, averaging over any static, feature distribution $\D$, the decision rule $\fs_\ell$ is performative optimal
for \emph{any} hypothesis class $\Hcal$, loss $\ell$, and marginal distribution $\cD$:
\begin{gather*}
    \E_{\substack{x \sim \D\\\ys \sim \ps(x,\fs_\ell(x))}}[\ell(x,\fs_\ell(x),\ys)] \le \min_{h \in \Hcal} \E_{\substack{x \sim \D\\\ys \sim \ps(x,h(x))}}[\ell(x,h(x),\ys)].
\end{gather*}
While the existence of $\ps$ implies the existence of optimal decision rules under outcome performativity, we make no assumptions about the learnability of $\ps$.
In general, the function $\ps$ may be arbitrarily complex, so learning (or even representing!) $\ps$ may be infeasible, both computationally and statistically.
Still, the above analysis reveals the power of modeling the probability function $\ps:\Xcal \times \Ych \to [0,1]$.
The optimal probability function $\ps$ encodes the optimal decision rule $\fs_\ell$ \emph{for every loss function } $\ell$.
This perspective raises a concrete technical question:  short of learning $\ps$, can we learn a probability function $\pt:\Xcal \times \Ych \to [0,1]$ that suggests an optimal decision rule, via simple post-processing, for many different objectives?

Recent work of \cite{omni} studied the analogous question in the context of supervised learning (without performativity), formalizing a solution concept which they call \emph{omniprediction}.
Intuitively, an omnipredictor is a single probability function $\pt$ that suggests an optimal decision rule for many different loss functions $\cL$.
The work of \cite{omni} and follow-up work of \cite{gopalan2022loss} demonstrate---rather surprisingly---that omniprediction in supervised learning is broadly a feasible concept.
For a variety of choices of loss classes $\cL$ (e.g., Lipschitz losses or convex losses), it is possible to learn an efficient predictor $\pt$ that gives optimal decisions for any loss $\ell \in \cL$.



In this work, we generalize omniprediction to the outcome performative setting.
As a solution concept, \emph{performative omniprediction} directly addresses the limiting assumption in performative prediction that the loss $\ell$ is known and fixed.
Given a performative omnipredictor, a decision-maker can explore the consequences of optimizing for different losses, balancing the desire for forecasting and steering, as they see fit.
Technically, given a predictor $\pt$, we define $\ft_\ell:\Xcal \to \Ych$ to be the optimal decision rule,
 that acts as if outcomes are governed by $\pt$.
\begin{gather*}
    \ft_\ell(x) \in \argmin_{\yhat \in \Ych}\E_{\yt \sim \pt(x,\yhat)}[\ell(x,\yhat,\yt)]
\end{gather*}
We emphasize that, for any loss $\ell$, the decision rule $\ft_\ell(x)$ is an efficient post-processing of the predictions given by $\pt(x,\yhat)$ for $\yhat \in \Ych$.
A performative omnipredictor is a model of nature $\pt:\cX \times \cYh \rightarrow [0,1]$ that induces a corresponding decision rule $\ft_\ell$ that is performatively optimal over a collection of losses $\ell \in \cL$.
\begin{definition*}[Performative Omnipredictor]
For a collection of loss functions $\cL$, hypothesis class $\Hcal$, and $\eps \ge 0$, a predictor $\pt:\Xcal \times \Ych \to [0,1]$ is an $(\cL,\Hcal,\eps)$-performative omnipredictor  for an input distribution $\cD$ if for every $\ell \in \cL$, the decision rule $\ft_\ell$ is $\eps$-performative optimal over $\Hcal$.
\begin{gather}
\equationlabel{eq:omni_guarantee}
    \E_{\substack{x \sim \D\\\ys \sim \ps(x,\ft_\ell(x)))}}[\ell(x,\ft_\ell(x)), \ys)] \le \argmin_{h \in \Hcal} \E_{\substack{x \sim \D\\\ys \sim \ps(x,h(x))}}[\ell(x, h(x),\ys)] + \eps
\end{gather}
\end{definition*}
While an intriguing prospect, omniprediction is particularly ambitious in the performative world.
Whereas most supervised learning losses have the same moral goal (to accurately forecast the outcome), losses in the performative world can encode entirely contradictory objectives.
For instance, we can define a pair of losses $\ell_0$ and $\ell_1$ that reward decisions that steer outcomes to be $0$ and $1$, respectively.
A performative omnipredictor must contend with these contradictions, providing optimal decision rules under performative effects.


Concretely, under outcome performativity, there is a certain circularity in naively determining the optimal decision $\ft(x)$ from a prediction $\pt(x,\yhat)$.
Choosing an ``optimal'' decision $\ft(x)$ causes a shift in the distribution on the outcome $\ys \sim \ps(x,\ft(x))$, which may imply a different ``optimal'' decision, which seems to lead to a continuing cycle of dependency.
In this way, any performative omnipredictor $\pt$ must encode the optimal decision rule $\ft_\ell$ for each $\ell \in \cL$, \emph{anticipating the shift} induced by the choice of $\ft_\ell$.
In this work, we ask whether---despite this key challenge---efficient performative omnipredictors exist, and if so, can we learn them?


\subsection{Our Contributions}

Our first contributions are conceptual, introducing the outcome performativity setting and the notion of performative omnipredictors.
As an abstraction, outcome performativity strikes a balance 
with enough generality to model many real-world phenomena and enough structure to give effective solutions.
For settings where the distributional response occurs predominantly as outcome performativity, the framework is well-scoped to contend with the challenges of performative prediction.
In particular, performative omnipredictors provide an effective solution concept to address the tension between different objectives under performativity.

With these conceptual contributions in place, we turn to the feasibility of omniprediction under outcome performativity.
While outcome performativity introduces a number of new challenges, we show how to apply many techniques established for omniprediction in the supervised learning setting to recover analogous guarantees under performativity.
On a technical level, we follow the \emph{Loss Outcome Indistinguishability} approach of \cite{gopalan2022loss}, demonstrating how---with the right conceptual framing---arguments for the existence of supervised omnipredictors can be translated into guarantees for the performative setting.

\paragraph{Efficient Performative Omnipredictors Exist.}
Our first technical contribution demonstrates existence of efficient performative omnipredictors.
We prove that for any class of losses $\cL$ and any hypothesis class $\Hcal$, there exists a performative omnipredictor $\pt$ of complexity that scales polynomially with the complexity of computing the losses and hypotheses.
\begin{result}
\label{result:omni}
Suppose $\D$ is a fixed distribution over $\Xcal$.
Let $\cL \subseteq \set{\ell:\Xcal\times \Ych \times \cY \to [0,1]}$ be a set of bounded loss functions, and let $\cH \subseteq \set{h:\Xcal \to \Ych}$ be a hypothesis class of decision rules. 
If the functions in $\cL$ and $\cH$ can be computed by circuits of size $s$, then there exists a $(\cL,\Hcal,\eps)$-performative omnipredictor of circuit complexity $\poly(s, |\cYh |) / \eps^2$.
\end{result}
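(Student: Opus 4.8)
The plan is to prove Result~\ref{result:omni} by routing through a performative generalization of \emph{outcome indistinguishability} (OI)~\cite{oi}, mirroring how omniprediction is obtained from OI in the supervised setting~\cite{omni,oi}. I would call a predictor $\pt:\Xcal\times\Ych\to[0,1]$ \emph{performatively OI} with respect to a family $\cA$ of distinguishers $A:\Xcal\times[0,1]^{\Ych}\times\Ych\times\cY\to[0,1]$ if for every $A\in\cA$ and every decision value $\yhat\in\Ych$,
\[
\Abs{\E_{x\sim\D}\Brac{\E_{\ys\sim\ps(x,\yhat)}A\big(x,\pt(x,\cdot),\yhat,\ys\big)-\E_{\yt\sim\pt(x,\yhat)}A\big(x,\pt(x,\cdot),\yhat,\yt\big)}}\;\le\;\eps'.
\]
Two modeling choices are built in on purpose: I would ask for indistinguishability at \emph{every} fixed $\yhat\in\Ych$ (affordable since $\Ych$ is small) rather than only along the realized decision, so that decision rules can later be plugged into the $\yhat$ slot; and I would let a distinguisher read the prediction vector $\pt(x,\cdot)\in[0,1]^{\Ych}$, so that the induced decision rules $\ft_\ell$ become ``$\pt$-readable'' ingredients of complexity $\poly(s,\abs{\Ych})$, independent of the complexity of $\pt$ itself.

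\paragraph{Step 1: Performative OI implies performative omniprediction.}
I would instantiate $\cA=\cA(\cL,\Hcal)$ to contain, for every $\ell\in\cL$, the level-set distinguishers $A^{h,\ell}(x,p,\yhat,y)=\ind[h(x)=\yhat]\cdot\ell(x,\yhat,y)$ for each $h\in\Hcal$, together with $A^{\ft_\ell,\ell}(x,p,\yhat,y)=\ind[\ft_\ell(x)=\yhat]\cdot\ell(x,\yhat,y)$, where $\ft_\ell(x)$ is recovered from $p=\pt(x,\cdot)$ by the univariate minimization defining $\ft_\ell$ (cf.~\eqref{eq:opt_dr}). Fixing $\pt$ performatively OI for this class and $\ell\in\cL$, summing the OI constraint for $A^{\ft_\ell,\ell}$ over $\yhat\in\Ych$ (exactly one indicator fires) and using the triangle inequality gives
\[
\E_{\substack{x\sim\D\\\ys\sim\ps(x,\ft_\ell(x))}}\!\big[\ell(x,\ft_\ell(x),\ys)\big]\;\le\;\E_{\substack{x\sim\D\\\yt\sim\pt(x,\ft_\ell(x))}}\!\big[\ell(x,\ft_\ell(x),\yt)\big]+\abs{\Ych}\,\eps'.
\]
By the defining property of $\ft_\ell$ (it minimizes $\yhat'\mapsto\E_{\yt\sim\pt(x,\yhat')}\ell(x,\yhat',\yt)$ pointwise in $x$), the right-hand expectation is $\le\E_{x\sim\D}\E_{\yt\sim\pt(x,h(x))}\ell(x,h(x),\yt)$ for every $h\in\Hcal$; applying the OI constraint for $A^{h,\ell}$ and again summing over $\yhat$ moves this back to Nature at cost $\abs{\Ych}\eps'$. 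Chaining the three inequalities and minimizing over $h\in\Hcal$ reproduces the performative omnipredictor guarantee~\eqref{eq:omni_guarantee} with $\eps=2\abs{\Ych}\,\eps'$, so it suffices to reach $\eps'=\eps/(2\abs{\Ych})$.

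\paragraph{Step 2: Efficient performative OI predictors exist.}
For existence I would run the standard boosting/potential argument. Initialize $\pt^{(0)}\equiv\tfrac12$ and track $\Phi(\pt)=\E_{x\sim\D}\sum_{\yhat\in\Ych}\paren{\ps(x,\yhat)-\pt(x,\yhat)}^2\le\abs{\Ych}$. As long as some $A\in\cA$ violates performative OI at some $\yhat$, the dependence of $A$ on the outcome argument $y\in\set{0,1}$ enters only through $c(x)=A(x,\pt(x,\cdot),\yhat,1)-A(x,\pt(x,\cdot),\yhat,0)\in[-1,1]$, so the violation reads $\abs{\E_{x}[(\ps(x,\yhat)-\pt(x,\yhat))\,c(x)]}>\eps'$; I would then update the $\yhat$-th coordinate by $\pt(x,\yhat)\leftarrow\mathrm{clip}_{[0,1]}\big(\pt(x,\yhat)+\eps'\,c(x)\big)$. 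The usual one-line expansion of the squared error shows $\Phi$ drops by at least $(\eps')^2$ per step (clipping only helps, as $\ps(x,\yhat)\in[0,1]$), so the process halts in at most $\abs{\Ych}/(\eps')^2$ rounds with a predictor performatively OI for $\cA$. Viewing the output as a circuit in which each round's value $\pt^{(t)}(x,\cdot)$ is computed once and shared downstream, each round adds only $\poly(s,\abs{\Ych})$ fresh gates — distinguishers in $\cA$ are $\poly(s)$-size, and recovering $\ft_\ell$ from $\pt^{(t)}(x,\cdot)$ costs $O(\abs{\Ych})$ evaluations of $\ell$ — so the total size is $\abs{\Ych}/(\eps')^2\cdot\poly(s,\abs{\Ych})=\poly(s,\abs{\Ych})/\eps^2$. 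Combined with Step~1, this yields Result~\ref{result:omni}.

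\paragraph{Expected main obstacle.}
The hard part is the circularity of outcome performativity: $\ft_\ell$ is defined from $\pt$, so ``OI along $\ft_\ell$'' is a distinguisher that moves with $\pt$, and a careless setup would either make $\cA$ self-referential or blow up the circuit size geometrically over the boosting rounds. The two modeling choices above are exactly what dissolve this — indistinguishability at \emph{every} fixed $\yhat$ decouples the argument from which decision is realized, and giving distinguishers the vector $\pt(x,\cdot)$ makes $\ft_\ell$ a bounded post-processing rather than a growing object (and, read as a circuit with shared sub-computations, the rounds compose additively). What remains I expect to be routine bookkeeping: checking that every OI constraint invoked in Step~1 lies in a $\poly(s)$-size family, breaking ties in the $\argmin$ defining $\ft_\ell$ by a fixed order on $\Ych$ so that $\ft_\ell$ is well-defined and measurable, and tracking the $\abs{\Ych}$- and constant-factor losses through $\eps=\Theta(\abs{\Ych}\,\eps')$.
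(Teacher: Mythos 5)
Your proposal is correct and follows essentially the same route as the paper: an outcome-indistinguishability criterion implies performative omniprediction via the three-step chaining argument, established by an $\ell_2$-potential boosting procedure whose circuit blow-up is tamed by caching the full prediction vector $\pt(x,\cdot)$ at each round. The only departures are cosmetic — you enforce OI per fixed decision $\yhat$ (yielding an extra $\card{\Ych}$ factor in the iteration count that is absorbed by the $\poly(s,\card{\Ych})$ bound) and fold the paper's separate POI and Decision-OI conditions into a single family of distinguishers that read the prediction vector.
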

Importantly, this result holds \emph{for any} class of bounded losses.
The collection $\cL$ may include losses for forecasting and steering, or may include losses that steer towards different outcomes.
Still, the predictor $\pt$ will encode a performative optimal decision rule for each such loss $\ell \in \cL$.
Furthermore, the complexity of this predictor scales gracefully with the complexity of the losses and hypotheses and the available decisions, \emph{independent of the complexity of Nature} $\ps$.
Even if the true probability function $\ps$ is intractably-complex, there exists a simple function $\pt$ that mimics the omniprediction behavior, provided the losses and hypotheses are sufficiently simple.


\paragraph{Learning Performative Omnipredictors Reduces to Supervised Learning.}

In fact, the proof of existence is constructive.
We establish the feasibility of performative omnipredictions by devising a boosting-style learning algorithm, inspired by the original algorithm for learning (non-performative) omnipredictors  \cite{hkrr,oi,omni,gopalan2022loss}.
As in the supervised case, we show that learning omnipredictors reduces to an \emph{auditing} task.
Despite the fact that in performative prediction, different decision rules induce different distributions, we show that given appropriately randomized data, this auditing task can be solved using only \emph{supervised learning} primitives implementable in finite samples.
That is, under outcome performativity, there is a surprising reduction from the task of learning optimal performative predictors to the task of non-performative supervised learning.

Formally, we assume that the learner has access to a collection of data triples $(x,\yhat,y) \sim \D_\mathrm{rct}$ where inputs are sampled from the data distribution $x \sim \D$, decisions $\yhat$ are assigned uniformly at random, and the outcome $\ys \sim \ps(x,\yhat)$ is sampled from Nature, for the given individual and randomly-assigned decision.
Given an efficiently bounded number of samples access from this distribution, we show how to learn performative omnipredictors assuming access to a supervised learner for the hypothesis class $\Hcal$.
We formalize this learning assumption in terms of cost-sensitive classification \cite{elkan2001foundations}.

\begin{result}[Informal]
\label{result:learning}
Assume sample access to $\D_\mathrm{rct}$  and
suppose that $\cA$ is a cost-sensitive learning algorithm for the hypothesis class $\Hcal$.
There is a polynomial-time algorithm, that, for any set of bounded losses $\cL$, returns a $(\cL, \cH, \epsilon)$-performative omnipredictor using at most $\poly(1/\epsilon,\; |\cYh|,\;\log |\cH|,\; \log|\cL|)$ many samples from $\D_\mathrm{rct}$ while also making $|\cL|\cdot \poly(1/\eps,\; |\cYh| )$ oracle calls to $\cA$.
\end{result}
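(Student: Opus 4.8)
The plan is to combine two modular ingredients. First, a \emph{Performative OI} progress lemma that converts any failure of performative optimality of the induced rule $\ft_\ell$ into an actionable ``multiaccuracy-along-a-rule'' violation of the current predictor $\pt$. Second, a reduction showing that, because decisions in $\D_\mathrm{rct}$ are drawn uniformly and independently of everything else, every population quantity we need to audit---expectations of $\ps$ reweighted along \emph{any} decision rule, and $\min_{h\in\Hcal}L_{\ps}(\ell,h)$---can be estimated from a single reweighted sample, the latter via one call to the cost-sensitive learner $\cA$. Feeding the audit into a multicalibration-style boosting loop, with the progress lemma supplying the update direction, produces the algorithm.

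For the progress lemma, fix $\pt$ and $\ell\in\cL$, and for a decision rule $g:\Xcal\to\Ych$ write $L_{\ps}(\ell,g)=\E_{x\sim\D}\E_{\ys\sim\ps(x,g(x))}[\ell(x,g(x),\ys)]$, with $L_{\pt}(\ell,g)$ the same quantity with $\ps$ replaced by $\pt$. Since $\cY=\set{0,1}$, a one-line computation gives, writing $\Delta\ell(x,\yhat):=\ell(x,\yhat,1)-\ell(x,\yhat,0)\in[-1,1]$,
\begin{gather*}
L_{\ps}(\ell,g)-L_{\pt}(\ell,g)=\E_{x\sim\D}\big[(\ps-\pt)(x,g(x))\cdot\Delta\ell(x,g(x))\big].
\end{gather*}
Because $\ft_\ell$ minimizes the $\pt$-loss pointwise, $L_{\pt}(\ell,\ft_\ell)\le L_{\pt}(\ell,h)$ for every $h$; hence if $\ft_\ell$ fails to be $\eps$-performative optimal against some $h$,
\begin{gather*}
\eps<L_{\ps}(\ell,\ft_\ell)-L_{\ps}(\ell,h)\le\big(L_{\ps}(\ell,\ft_\ell)-L_{\pt}(\ell,\ft_\ell)\big)+\big(L_{\pt}(\ell,h)-L_{\ps}(\ell,h)\big),
\end{gather*}
so at least one of $g\in\{\ft_\ell,h\}$ satisfies $\big|\E_{x}[(\ps-\pt)(x,g(x))\,\Delta\ell(x,g(x))]\big|>\eps/2$, a violation of known sign; running the chain in reverse, absence of all such violations certifies $\ft_\ell$ is $\eps$-performative optimal against $h$.

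For the reduction, the uniform randomization of decisions in $\D_\mathrm{rct}$ lets us \emph{importance-weight}: for any decision rule $g$ and bounded $w:\Xcal\times\Ych\to[-1,1]$,
\begin{gather*}
\E_{x\sim\D}\big[(\ps-\pt)(x,g(x))\,w(x,g(x))\big]=\E_{(x,\yhat,\ys)\sim\D_\mathrm{rct}}\big[\,|\Ych|\cdot\ind[\yhat=g(x)]\,(\ys-\pt(x,\yhat))\,w(x,\yhat)\,\big],
\end{gather*}
an expectation of a variable bounded by $|\Ych|$, hence estimable to accuracy $\Theta(\eps)$ from $\poly(|\Ych|,1/\eps)\log(1/\delta)$ samples; the same reweighting writes $L_{\ps}(\ell,h)$ as the expected cost of $h$ under the sparse cost vectors $c(x,\yhat,\ys)[\yhat']=|\Ych|\cdot\ind[\yhat=\yhat']\cdot\ell(x,\yhat,\ys)$, so one call to $\cA$ on the reweighted sample returns $h^\star_\ell$ with $L_{\ps}(\ell,h^\star_\ell)\le\min_{h\in\Hcal}L_{\ps}(\ell,h)+O(\eps)$ (uniform convergence over $\Hcal$ contributing the $\log|\Hcal|$ factor). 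The algorithm initializes $\pt\equiv\tfrac12$ and iterates: in each round, for every $\ell\in\cL$ form $\ft_\ell$ (an efficient post-processing of $\pt$), obtain $h^\star_\ell$ from $\cA$, and estimate the slack $L_{\ps}(\ell,\ft_\ell)-L_{\ps}(\ell,h^\star_\ell)$; if some slack exceeds $\eps$, invoke the progress lemma to extract a slice $g\in\{\ft_\ell,h^\star_\ell\}$ carrying a verified sign-$\sigma$ violation of magnitude $\Omega(\eps)$ and update $\pt(x,\yhat)\leftarrow\mathrm{clip}_{[0,1]}\!\big(\pt(x,\yhat)+\eta\,\sigma\,\Delta\ell(x,\yhat)\big)$ on $\{(x,\yhat):\yhat=g(x)\}$; otherwise halt and output $\pt$. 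With potential $\Phi(\pt)=\E_{x\sim\D,\ \yhat\sim\Unif(\Ych)}[(\ps-\pt)(x,\yhat)^2]\in[0,1]$, the choice $\eta=\Theta(\eps)$ makes each update decrease $\Phi$ by $\Omega(\eps^2/|\Ych|)$---the $|\Ych|$ because the updated slice carries only a $1/|\Ych|$ fraction of the mass---so the loop halts within $\poly(1/\eps,|\Ych|)$ rounds; at that point every $\ft_\ell$ is $O(\eps)$-optimal against $h^\star_\ell$, which is $O(\eps)$-optimal over $\Hcal$, so $\pt$ is an $(\cL,\Hcal,O(\eps))$-performative omnipredictor (rescale $\eps$). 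Accounting: $|\cL|\cdot\poly(1/\eps,|\Ych|)$ calls to $\cA$ and $\poly(1/\eps,|\Ych|,\log|\Hcal|,\log|\cL|)$ samples (the logarithms from union-bounding the estimates over the $|\cL|\cdot|\Hcal|$ relevant quantities across all rounds), and the output is a $\poly(s,|\Ych|)/\eps^2$-size circuit, recovering Theorem~\ref{result:omni}.

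\textbf{Main obstacle.} The heart of the argument is that the importance-weighting identity tolerates the circularity intrinsic to performativity: the rule $\ft_\ell$ being audited is itself a function of the current $\pt$, yet because $\D_\mathrm{rct}$ samples the decision independently of $x$ and of $\pt$, a single reweighted dataset supports consistent estimation of $L_{\ps}(\ell,g)$ for \emph{every} $g$---including the moving targets $\ft_\ell$---with sample cost independent of the (possibly intractable) complexity of $\ps$. This is the step that genuinely uses the randomized-decision assumption and is what makes the reduction to ordinary supervised learning go through. The points demanding care are propagating the $|\Ych|$ overlap factor correctly through the potential-decrease bound (it pins down the round count, hence the $|\Ych|$-dependence of both the oracle count and the circuit size) and verifying that replacing exact population optima by the empirical, approximately optimal $h^\star_\ell$ returned by $\cA$ costs only $O(\eps)$ in the final guarantee.
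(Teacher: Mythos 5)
Your proposal rests on the same core mechanics as the paper's POI-Boost algorithm: importance-weight over $\rct$ so that every performative risk is estimable from a single static dataset, run a multiaccuracy-style boosting loop on a vector-valued representation, and bound iterations via the $\ell_2$ potential $\E_{x,\yhat}[(\ps-\pt)^2]$, with each verified violation of magnitude $\Omega(\eps)$ dropping the potential by $\Omega(\eps^2/|\Ych|)$; the iteration, sample-complexity, and circuit-size accounting all match. The genuinely different choice is where $\cA$ sits. In the paper, $\cA$ is used inside the POI audit to search for a distinguisher $h$ that makes the gap $|L_{\pt}(\ell,h)-L_{\ps}(\ell,h)|$ large via a signed cost construction (\propositionref{prop:csc_auditing}), while DOI is audited by plain enumeration over the $|\cL|$ rules $\ft_\ell$; termination then certifies POI and DOI, and \theoremref{thm:omni} converts that to omniprediction. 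You instead ask $\cA$ to hand you a benchmark $h^\star_\ell\approx\argmin_{h\in\Hcal}L_{\ps}(\ell,h)$, audit the omniprediction slack $L_{\ps}(\ell,\ft_\ell)-L_{\ps}(\ell,h^\star_\ell)$ directly, and when it is large, invoke your progress lemma (essentially \theoremref{thm:omni} run in reverse) to locate a multiaccuracy violation along $\ft_\ell$ or $h^\star_\ell$. This decomposition is legitimate and even slightly more economical, since your loop can halt as soon as omniprediction holds rather than waiting for the stronger POI/DOI conditions.

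The one step that needs repair is the interface to $\cA$. The paper's \definitionref{definition:csc} specifies a \emph{weak} learner: promised some $h$ achieves cost $\le-\rho$, it returns an $h'$ with cost $\le-\rho/2$; it does not promise a near-minimizer, and a raw nonnegative cost derived from $L_{\ps}(\ell,\cdot)\ge 0$ never triggers its promise. Your argument treats $\cA$ as a regret minimizer that returns $h^\star_\ell$ with $L_{\ps}(\ell,h^\star_\ell)\le\min_{h}L_{\ps}(\ell,h)+O(\eps)$. You can recover this from the weak oracle by recentering the costs (subtracting a threshold $\tau$), binary-searching over $\tau$, and empirically verifying each returned hypothesis on the reweighted sample; this adds a $\log(\lmax/\eps)$ factor to the oracle-call count and leaves the stated $\poly$ bounds intact, but the bridge should be made explicit so that your argument rests on the same primitive the paper actually furnishes.
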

The guarantees of this algorithm represent a significant point of departure from previous work on performative prediction.
Specifically, previous algorithms for learning performatively optimal models (for a single loss) hinged on the condition that predictions had very mild, and highly-structured (e.g. linear) impact on the induced data distributions  as in \cite{miller2021outside,jagadeesan22a}.
Conversely, within the outcome performativity restriction, we make no assumptions on the way predictions influence outcomes.
Further, the omnipredictor output in the guarantee of Theorem~\ref{result:learning} has complexity scaling as stated in Theorem~\ref{result:omni}.
In other words, our learning algorithm makes no ``realizability'' assumptions and outputs an efficient predictor, regardless of the complexity of Nature.

 
\paragraph{Universally-Adaptable Omnipredictors.}
Outcome performativity focuses attention on performative shifts in the outcome distribution as a function of the chosen decision $\yhat \in \Ych$.
In particular, it excludes performative effects in the distribution over individuals $\Xcal$.
Despite this limitation, our final result shows that we can learn performative omnipredictors that are robust to \emph{exogenous} (non-performative) shifts in the distribution over individuals.

Adapting the notion of universal adaptability, introduced in the context of statistical estimation by \cite{ua}, we show how to learn \emph{univerally-adaptable} performative omnipredictors.
Whereas performative omnipredictors guarantee optimality on a fixed marginal distribution $\D$ over individuals, 
universally-adaptable omnipredictors give the same optimality guarantee, simultaneously, over a rich class of input distribution shifts $\D_\cW$.
Each distribution in $\cD_\omega\in \cD_{\cW}$ corresponds to the reweighting of probabilities in $\cD$ by some importance weight function $\omega$ in some pre-specified class $\cW$.
\begin{result}[Informal]
Let $\cL$ be a set of bounded loss functions,  $\cH$ a hypothesis class of decision rules, and let $\cW \subseteq \{\omega: \cX \rightarrow [0,\omax]\}$. 
If the functions in $\cL$, $\cH$, and $\cW$ can be computed by circuits of size $s$, then there exists a predictor $\pt$, computable by a circuit of size at most $\poly(s, |\cYh |) \cdot \omax^2/ \eps^2$, that is a $(\cL,\Hcal,\eps)$-performative omnipredictor for every distribution over individuals $\cD_\omega \in \cD_\cW$.
\end{result}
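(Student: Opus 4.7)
The strategy is to reduce universal adaptability to the single-distribution omniprediction guarantee of Theorem~\ref{result:omni}, by absorbing the reweighting into an enlarged loss class over the base distribution $\cD$. For each $\omega \in \cW$ and $\ell \in \cL$, define the rescaled loss $\ell_\omega(x, \yhat, y) \defeq (\omega(x)/\omax)\cdot\ell(x, \yhat, y)$, and let $\cL_\cW = \set{\ell_\omega : \omega \in \cW, \ell \in \cL}$. Since $\omega(x) \le \omax$ and $\ell \in [0,1]$, each $\ell_\omega$ is $[0,1]$-bounded, and each is computable by a circuit of size $O(s)$ (two circuits of size $s$, a multiplication, and a fixed division by the constant $\omax$).

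The crucial observation is that $\omega(x)$ depends only on $x$, so it factors out of the pointwise minimization defining the induced decision rule:
\begin{align*}
\ft_{\ell_\omega}(x) \in \argmin_{\yhat \in \Ych} \frac{\omega(x)}{\omax}\, \E_{\yt \sim \pt(x,\yhat)}\bigl[\ell(x,\yhat,\yt)\bigr] = \ft_\ell(x).
\end{align*}
In other words, the omnipredictor's suggested decision is invariant to the reweighting. Combined with the importance-sampling identity $\E_{x \sim \cD_\omega}[g(x)] = \E_{x \sim \cD}[\omega(x)\cdot g(x)]$, a performative omnipredictor for $\cL_\cW$ on the fixed distribution $\cD$ automatically yields optimal decisions for $\cL$ on every reweighted distribution $\cD_\omega \in \cD_\cW$.

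I would then invoke Theorem~\ref{result:omni} on the loss class $\cL_\cW$, hypothesis class $\cH$, distribution $\cD$, and error tolerance $\eps/\omax$. This produces a predictor $\pt$ of circuit complexity $\poly(s,\,|\Ych|)/(\eps/\omax)^2 = \poly(s,\,|\Ych|)\cdot \omax^2/\eps^2$, matching the claim. To verify the final guarantee, for any $\omega \in \cW$ and $\ell \in \cL$ one chains the two facts above:
\begin{align*}
\E_{\substack{x \sim \cD_\omega \\ \ys \sim \ps(x,\ft_\ell(x))}}\bigl[\ell(x,\ft_\ell(x),\ys)\bigr]
&= \omax \cdot \E_{\substack{x \sim \cD \\ \ys \sim \ps(x,\ft_{\ell_\omega}(x))}}\bigl[\ell_\omega(x,\ft_{\ell_\omega}(x),\ys)\bigr] \\
&\le \omax \cdot \Bigl(\min_{h \in \cH} \E_{\substack{x \sim \cD \\ \ys \sim \ps(x,h(x))}}\bigl[\ell_\omega(x,h(x),\ys)\bigr] + \eps/\omax\Bigr) \\
&= \min_{h \in \cH} \E_{\substack{x \sim \cD_\omega \\ \ys \sim \ps(x,h(x))}}\bigl[\ell(x,h(x),\ys)\bigr] + \eps.
\end{align*}

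The main subtlety, rather than a real obstacle, is choosing where to absorb the reweighting. One might be tempted to fold $\omega$ into the predictor directly (as in non-performative universal adaptability); but since the outcome distribution $\ps(x,\yhat)$ is governed by the chosen decision $\yhat$, post-hoc reweighting of $\pt$ would decouple $\ft_\ell$ from its induced outcome distribution. Absorbing $\omega$ into the loss sidesteps this entirely because $\omega(x)$ does not depend on $\yhat$ and therefore cancels in the argmin defining $\ft$; the only cost is the standard $1/\eps^2 \to \omax^2/\eps^2$ blow-up incurred by rescaling the target error in Theorem~\ref{result:omni}.
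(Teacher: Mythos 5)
Your proposal is correct and follows essentially the same route as the paper: augment the loss class by reweighting with $\omega$, observe that $\omega(x)$ cancels in the pointwise argmin so $\ft_{\ell_\omega}=\ft_\ell$, use the importance-sampling identity to move between $\cD$ and $\cD_\omega$, and invoke the existence/complexity theorem on the augmented class (the paper states this via Proposition~\ref{prop:adaptability} and the learning algorithm). The only cosmetic difference is that you normalize $\ell_\omega$ by $\omax$ to keep losses in $[0,1]$ and target error $\eps/\omax$, whereas the paper leaves $\ell_\omega$ unnormalized and absorbs the $\omax$ factor into the $\lmax$-dependence of the iteration bound; both yield the same $\omax^2/\eps^2$ scaling.
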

The result follows by augmenting the class of loss functions $\cL_\cW$ to account for shifts under $\cW$, and again, applying the constructive learning algorithm from Theorem~\ref{result:learning}.
We emphasize that the learner only needs to account for the class of shifts at \emph{training} time.
At evaluation time, the decision-maker need not know anything about the underlying distribution over individuals.
Indeed, the decision-maker can simply use $\pt$ as before, post-processing to decisions $\ft_\ell(x)$ for any $\ell \in \cL$ on an input-by-input basis.


%
 


\subsection{Our Techniques:  Performative Outcome Indistinguishability}
We begin our technical overview with a simple motivating example.
Consider the following performative prediction problem.

\begin{example*}
Let individuals and decisions be encoded as signed booleans $\Xcal = \set{\pm 1}$ and $\Ych = \set{\pm 1}$, assuming $\D$ is uniform over $\Xcal$.
Suppose that Nature's outcome distribution over $\cY = \set{0,1}$ is governed by the conditional probability function
\begin{gather*}
    \ps(x,\yhat) = 1/2 + \beta x \yhat 
\end{gather*}
for any $0 < \beta < 1/2$.
Consider the goal of learning an $(\cL,\Hcal)$-performative omnipredictor for the following collection of losses and hypotheses:
\begin{itemize}
    \item $\cL = \set{\ell_0,\ell_1}$ contains two opposing steering losses, which steer outcomes towards $0$ and $1$, respectively.
    \begin{align*}
        \ell_1(x,\yhat,\ys) = 1-\ys&&\ell_0(x,\yhat,\ys) = \ys
    \end{align*}
    \item $\Hcal = \set{h_+,h_-}$ contains two decision rules over $\Xcal$ that either returns $x$ or its negation.
    \begin{align*}
        h_+(x) = x&& h_-(x) = -x
    \end{align*}
\end{itemize}
\end{example*}

We begin by considering some naive attempts to achieve the goal of performative omniprediction.
Note that $\Hcal$ actually contains an optimal decision rule for each loss in $\cL$.
In particular, for losses that steer to $0$ versus $1$, the optimal decisions minimize (or maximize) the probability that $\ys = 1$.
The decision rule $h_+$ maximizes the probability $\ps(x,h_+(x)) = 1/2 + \beta$ for all $x$, whereas $h_-$ minimizes the probability $\ps(x,h_-(x)) = 1/2-\beta$.
To obtain the omniprediction guarantee, then, we must learn a probability function that encodes the best decision under $\ell_0$ and $\ell_1$.

As such, a natural approach would be to fit a function $p:\Xcal \times \Ych \to [0,1]$ that approximates the underlying probability $\ps$, which we can then post-process for a loss $\ell$ as in \equationref{eq:opt_dr} .
To fit $p$, we can simply do supervised learning directly over triples $(x,\yhat,\ys)$, where $\yhat$ is chosen uniformly at random.
We argue that without specification (realizability) assumptions, this approach also fails.
Consider, for instance, fitting $p$ using logistic regression
\begin{align*}
	p(x,\yhat) = \frac{1}{1 + \exp(-(ax + b\yhat + c))},
\end{align*}
where $a,b,c$ are parameters of the model.
In our example, when we select $\yhat$ at random, the outcome $\ys$ is uncorrelated with each of $x$ and $\yhat$ on their own.
Thus, the optimal setting of these parameters is $a = b = c = 0$.\footnote{More concretely, since $\E[x\yhat] = \E[x\ys] = 0$, one can check that $a=b=c=0$ solves the first-order optimality conditions for the logistic regression objective $\E_{x,\yhat, \ys} -[y \log \sigma(ax +b\yhat +c) + (1-y)\log(1- \sigma(ax +b\yhat +c))]$ for $\sigma(z) = 1 / (1 + \exp(-z))$.}
Consequently, the logistic model is a constant: $p(x, \yhat) = 1/2$ for all $x \in \Xcal$ and $\yhat \in \Ych$.
Such constant predictions are completely uninformative. Clearly, they cannot suggest the optimal decision rule for any loss, let alone every loss in our collection.
The negative result here, follows because the model class for $p$ was misspecified to fit $\ps$.
In this example, of course, we simply need to run regression with quadratic terms to be well-specified.
However, without any assumptions about the complexity of $\ps$, we cannot rely on approaches that require specifying Nature's model exactly, which might have unbounded complexity.

\paragraph{Performative Outcome Indistinguishability.}

Recently, \cite{oi} introduced the notion of Outcome Indistinguishability (OI) as a new solution concept for supervised learning.
In contrast to the traditional framing of learning through loss minimization, OI defines the goal of learning through the lens of indistinguishability.
In this view, a predictive model should provide outcomes that cannot be distinguished from true outcomes from Nature.
In the world of supervised learning, OI and the closely-related notion of multicalibration \cite{hkrr} have seen broad application, including in deriving supervised omnipredictors \cite{omni,gopalan2022loss}.

Towards our goal of performative omniprediction, we adapt the paradigm of learning via outcome indistinguishability to the outcome performative setting.
In particular, we leverage the variant of outcome indistinguishability explored in a recent work of \cite{gopalan2022loss}.
In the supervised learning setting, \cite{gopalan2022loss} builds a set of indistinguishability conditions called Loss OI, which they show implies omniprediction.
We show that these conditions and the argument are even more general than originally conceived, and can be applied to the outcome performative setting.

Intuitively, we say a predictor $\pt:\Xcal \times \Ych \to [0,1]$ is \emph{performative outcome indistinguishable} if outcomes drawn according to the model $\yt \sim \pt(x,h(x))$ are indistinguishable from Nature's outcomes $\ys \sim \ps(x,h(x))$ \emph{under the distribution induced by a decision rule} $h$.
To make this notion precise, we need to specify what we mean by indistinguishability and pin down the decision rules we care to reason about.

To encode omniprediction through performative OI, our goal will be to devise a set of tests of a predictor $\pt$ that---if passed---guarantee for every loss $\ell \in \cL$, the decision rule $\ft_\ell$ is as good as any $h \in \Hcal$.
Formally, we start by building a class of tests from a collection of losses $\cL$ and a hypothesis class $\Hcal$.\footnote{Throughout, we use the notational shorthand $A \approx_\eps B$ to denote that $A \in [B-\eps,B+\eps]$.}
\begin{definition*}[Performative OI]
For an input distribution $\D$, collection of losses $\cL$, hypothesis class $\Hcal$, and $\epsilon \ge 0$, a predictor $\pt:\Xcal \times \Ych \to [0,1]$ is $(\cL,\Hcal,\epsilon)$-performative outcome indistinguishable (POI) over $\D$ if for all $\ell \in \cL$ and all $h \in \Hcal$,
\begin{gather*}
   \E_{\substack{x \sim \D\\\ys \sim \ps(x,h(x))}}[\ell(x,h(x),\ys)] \approx_\eps  \E_{\substack{x \sim \D\\\yt \sim \pt(x,h(x))}}[\ell(x,h(x),\yt)].
\end{gather*}
\end{definition*}
We emphasize how this performative OI condition is a natural notion of \emph{outcome} indistinguishability.
In particular, we note \emph{where} the predictor $\pt$ occurs in the POI conditions:  it is only used to sample outcomes according to $\pt$ in the modeled world.
Importantly, the decisions $h(x)$ are used in the sampling of $\yt \sim \pt(x,h(x))$.
Using $h(x)$ as the decision associated with $x$ to sample each outcome $y$ (under Nature and the model) ensures that $\pt$ 
encodes a reliable estimate of the loss $\ell$ if the decision rule $h$ is deployed.
Performative OI ensures that $\pt$ ``knows'' these values for each loss in the collection $\ell \in \cL$ and each hypothesis in our class $h \in \Hcal$.

While this OI condition ensures that $\pt$ captures the behavior of the hypotheses in $\Hcal$, it says nothing about the losses under its own decision rules $\ft_\ell$.
Reasoning about these decision rules is essential for performative omniprediction.
As such, we introduce an additional OI condition, which we call performative \emph{decision} OI, to ensure OI under the decision rules suggested by $\pt$.
\begin{definition*}[Performative Decision OI]
For an input distribution $\D$, collection of loss functions $\cL$, and $\epsilon \ge 0$, a predictor $\pt:\Xcal \times \Ych \to [0,1]$ is $(\cL,\epsilon)$-performative decision outcome indistinguishable (DOI) over $\D$ if for all $\ell \in \cL$,
\begin{gather*}
    \E_{\substack{x \sim \D\\\ys \sim \ps(x,\ft_\ell(x))}}[\ell(x,\ft_\ell(x),\ys)] \approx_\eps \E_{\substack{x \sim \D\\\yt \sim \pt(x,\ft_\ell(x))}}[\ell(x,\ft_\ell(x),\yt)].
\end{gather*}
\end{definition*}
Here, $\pt$ is still used to sample outcomes $\yt$, but is also used to determine the decision rule $\ft_\ell$, for each $\ell \in \cL$.
Syntactically, changing from $h \in \Hcal$ to $\ft_\ell$ is a small change, but it has significant impacts on the nature of the performative DOI condition---both in terms of its costs and the strength of its guarantees.
Critically, for a given loss $\ell$, the decision rule $\ft_\ell$ is (by definition) optimal for outcomes sampled from $\yt \sim \pt(x,\ft_\ell(x))$.
As such, indistinguishability is a powerful tool here: if the losses are indistinguishable on  modeled outcomes (where $\ft_\ell$ is optimal) and on Nature's outcomes, then $\ft_\ell$ should be optimal for Nature.

We formalize this intuition, demonstrating that, as in the supervised setting, performative OI and decision OI suffice to establish omniprediction.
Consider the loss $\ell \in \cL$ obtained by any $h \in \Hcal$ on true outcomes.
We show that the loss of $\ft_\ell$ is upper bounded by that of $h$.
\begin{proposition*}[Informal]
If a predictor $\pt$ is $(\cL,\Hcal)$-POI and $\cL$-DOI, then $\pt$ is an $(\cL,\Hcal)$-performative omnipredictor.
\end{proposition*}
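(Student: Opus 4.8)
The plan is a direct four-link chain of inequalities. Fix an arbitrary loss $\ell \in \cL$ and an arbitrary competitor $h \in \Hcal$; the goal is to bound the Nature-loss of the induced rule $\ft_\ell$ by the Nature-loss of $h$ up to an additive $O(\eps)$, and then take the minimum over $h$. Two of the links are exactly the hypothesized OI conditions (used once each, so no union bound over $\cL$ or $\Hcal$ is needed since both conditions are already quantified over all of $\cL$, $\Hcal$), and the middle link is the defining pointwise optimality of $\ft_\ell$ in the modeled world.

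First I would apply performative decision OI to $\ell$: since $\pt$ is $(\cL,\eps)$-DOI,
\[
\E_{\substack{x \sim \D\\\ys \sim \ps(x,\ft_\ell(x))}}[\ell(x,\ft_\ell(x),\ys)] \approx_\eps \E_{\substack{x \sim \D\\\yt \sim \pt(x,\ft_\ell(x))}}[\ell(x,\ft_\ell(x),\yt)].
\]
Second---the conceptual crux---I would invoke the definition of $\ft_\ell$: by construction $\ft_\ell(x) \in \argmin_{\yhat \in \Ych}\E_{\yt \sim \pt(x,\yhat)}[\ell(x,\yhat,\yt)]$ separately for each $x$, and $h(x) \in \Ych$, so the modeled loss of $\ft_\ell$ is at most the modeled loss of $h$ at every $x$; integrating over $x \sim \D$,
\[
\E_{\substack{x \sim \D\\\yt \sim \pt(x,\ft_\ell(x))}}[\ell(x,\ft_\ell(x),\yt)] \le \E_{\substack{x \sim \D\\\yt \sim \pt(x,h(x))}}[\ell(x,h(x),\yt)].
\]
Third, I would apply performative OI to the pair $(\ell,h)$: since $\pt$ is $(\cL,\Hcal,\eps)$-POI,
\[
\E_{\substack{x \sim \D\\\yt \sim \pt(x,h(x))}}[\ell(x,h(x),\yt)] \approx_\eps \E_{\substack{x \sim \D\\\ys \sim \ps(x,h(x))}}[\ell(x,h(x),\ys)].
\]
Composing the three and using each $\approx_\eps$ in the direction that makes the two errors accumulate additively gives that the Nature-loss of $\ft_\ell$ exceeds that of $h$ by at most $2\eps$; minimizing over $h \in \Hcal$ shows $\ft_\ell$ is $2\eps$-performative optimal, and since $\ell \in \cL$ was arbitrary, $\pt$ is an $(\cL,\Hcal,2\eps)$-performative omnipredictor.

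The arithmetic is routine. The only care needed is bookkeeping the directions of the two $\approx_\eps$ steps (since $\approx_\eps$ is symmetric this is automatic, but one should note the errors add to $2\eps$ rather than cancel), and checking that the pointwise optimality argument in the second step is legitimate---it is, because $\Ych$ is discrete and the $\argmin$ defining $\ft_\ell$ is taken per input $x$, so there is no interchange-of-min-and-integral or measurability subtlety. The genuine content of the statement, already foreshadowed in the discussion preceding it, is that DOI is precisely the bridge that lets us ``pay'' an $\eps$ to move $\ft_\ell$'s loss into the modeled world, where its optimality is free by definition; without DOI the optimality of $\ft_\ell$ says nothing about Nature. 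If the cleaner constant is desired, one can rescale the OI parameters so the final guarantee reads $\eps$ rather than $2\eps$.
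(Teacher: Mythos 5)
Your proposal is correct and matches the paper's proof of \theoremref{thm:omni} essentially verbatim: the same three-link chain (DOI to move into the modeled world, pointwise optimality of $\ft_\ell$ there, POI to move back to Nature under $h$), accumulating $2\eps$. The paper even formalizes it exactly this way, using each OI condition in only one direction as you noted.
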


\noindent\emph{Proof sketch.}~ Once the appropriate OI conditions are written down, deriving performative omniprediction is almost immediate.
\begin{align*}
    \E_{\substack{x \sim \D\\\ys \sim \ps(x,\ft_\ell(x))}}[\ell(x,\ft_\ell(x),\ys)] &\approx \E_{\substack{x \sim \D\\\yt \sim \pt(x,\ft_\ell(x))}}[\ell(x,\ft_\ell(x),\yt)]\\
    &\le \E_{\substack{x \sim \D\\\yt \sim \pt(x,h(x))}}[\ell(x,h(x),\yt)]
    \approx \E_{\substack{x \sim \D\\\ys \sim \ps(x,h(x))}}[\ell(x,h(x),\ys)]
\end{align*}
The first equality follows by $\cL$-DOI, the second equality follows by $(\cL,\cH)$-POI, and the middle inequality follows by the fact that $\ft_\ell$ is optimal over modeled outcomes. \hfill \qedsymbol
\vspace{8pt}

In other words, we have managed to reduce the task of learning performative omnipredictors to learning models satisfying performative OI conditions.
Clearly, Nature's model $\ps$ is ``indistinguishable'' from Nature (similarly, it is clear that $\ps$ is a performative omnipredictor), but the question remains whether there exist \emph{efficient} predictors $\pt$ that satisfy the performative OI conditions.
Thus, we turn our attention to learning OI predictors under outcome performativity.

\paragraph{Learning Outcome Performative Predictors.}
Despite essential differences in the notions of OI in the supervised and performative settings, we show that many of the algorithmic techniques that have become standard in the literature on multicalibration and OI can be adapted to work in the outcome performative setting.
In particular, we demonstrate that, quite generically, learning performative OI models reduces to \emph{auditing} for distinguishability.
Concretely, if there exists a loss $\ell \in \cL$ and hypothesis $h \in \Hcal \cup \set{\ft_\ell}$, such that the performative OI conditions are violated for a predictor $\pt$, we can use these ``distinguishers'' to update the model to address the violation.
This observation immediately suggests using a boosting algorithm, in the vein of \cite{hkrr}, to learn performative OI predictors.
Provided that updating based on an $\eps$-violation makes significant ``progress'' towards satisfying performative OI, then the number of auditing steps $T \le O(1/\eps^2)$ will be bounded.

While this learning paradigm of ``audit, then update'' is intuitive, there are nontrivial challenges in maintaining the efficiency of the learned predictors.
Consider, for instance, the performative decision OI constraint.
As highlighted above, the DOI constraints require that we reason about the optimal decision rule according to $\pt$.
In particular, to update based on a violation of DOI on loss $\ell$, we need to incorporate a copy of the function $\ft_\ell(\cdot)$.
This decision rule, however, is a function of $\pt(\cdot, \yhat)$ for every $\yhat \in \Ych$ (as it requires computing the argmin over $\Ych$).
Naively, then, it would seem that in every iteration where we update the model based on some $\ft_\ell$, we need to make $\card{\Ych}$ recursive oracle calls to the existing model.
Without careful consideration, the \cite{hkrr}-style learning algorithm will build a performative OI predictor $\pt$ whose complexity $s$ scales \emph{exponentially} in the number of iterations, $s \ge \card{\Ych}^T$.

To avoid this blow-up, we need to choose a more effective representation of the probability function $\pt(\cdot, \cdot)$.
We observe that, in general, the updates required for the algorithm are \emph{sparse} in $\Ych$.
As a result of this sparsity, we can save on overall computation by implementing $\pt:\Xcal \times \Ych \to [0,1]$ as a map from individuals to vectors of probabilities $\tilde{q}:\Xcal \to [0,1]^\Ych$.
Mathematically, there is a bijection between such functions; computationally, however, the representations behave very differently.
By increasing the amount of work per update by a factor of $\card{\Ych}$, we avoid making $\card{\Ych}$ recursive calls.
This strategy is reminiscent of an approach \cite{dwork2022beyond} used to learn (supervised) OI predictors for outcomes living in a large domain.
With this representation in place, an appropriate analysis reveals that the resulting predictors can be implemented in complexity that scales only \emph{polynomially} in the number of iterations and decisions.


After addressing representation issues, we study sufficient conditions to implement the auditing task efficiently, from a polynomially bounded number of samples.
Achieving performative optimality, in general, requires exploration of the consequence of using different decision rules $h \in \Hcal$, as these decision rules effect the distribution on outcomes.
Nevertheless, we show that it suffices for this exploration to be done ``offline'' via randomized assignment of decisions $\yhat \in \Ych$.
In particular, if we collect triples $\set{(x,\yhat,\ys)}$ through a randomized control trial, assigning $\yhat$ uniformly at random for each $x \in \Xcal$, and observing $\ys \sim \ps(x,\yhat)$, we avoid the need to deploy each $h \in \Hcal$.

Given access to such RCT data, we give a reduction from the task of auditing for performative OI to the task of \emph{supervised learning} for the hypothesis class $\Hcal$.
This reduction from auditing to learning is familiar in the OI framework \cite{hkrr,oi}, but critically, we go from a performative prediction task to a non-performative task.
In all, our reductions show that if we can learn the best decision rule from $\Hcal$ in a supervised learning setting, then we can learn performative omnipredictors with respect to $\Hcal$, assuming access to appropriately sampled data.

\paragraph{Universal Adaptability under Outcome Performativity.}
The OI viewpoint enables a similarly straightforward analysis of distributional robustness, via universal adaptability.
Universal adaptability is a notion introduced by \cite{ua} in the context of statistical estimation.
In the original context, a predictor is universally adaptable if it provides an efficient way to estimate statistics across many underlying input distributions.

We translate the notion of universal adaptability to the outcome performative prediction setting.
In our context, we parameterize universal adaptability by a class of importance weight functions $\cW \subseteq \set{\Xcal \to \R_{\ge 0}}$.
For a base input distribution $\D$, we define a corresponding collection of shifted distributions $\D_{\cW}$ to be the set of distributions reachable after reweighting by some $\omega \in \cW$.
\begin{gather*}
    \D_{\cW} = \set{\D_\omega : \omega \in \cW, \supp(\D_\omega) \subseteq \supp(\cD)}  \text{ and } \forall x \in \supp(\D_\omega):~~ \D_\omega(x) = \omega(x) \cdot \D(x) 
\end{gather*}
The key observation is that for any hypothesis $h$, loss function $\ell$, and importance weight function $\omega$, the expected loss over $\D_\omega$ is equal to an expected loss over $\D$, for a loss defined in terms of $\ell$ and $\omega$. \begin{align*}
	\E_{\substack{ x\sim \cD_{\omega} \\ \yt \sim \pt(x, h(x))}} [\ell(x, h(x), \yt)] = \E_{\substack{ x\sim \cD \\ \yt \sim \pt(x, h(x))}} [\ell(x, h(x), \yt) \cdot \omega(x)]
\end{align*} 
Importantly, this equality relies on the fact that 
the outcome probability functions $\ps$ and $\pt$ are defined conditional on $x$ and $\yhat$, and thus are invariant across shifts in the input distribution.

Using the result that indistinguishability implies omniprediction, by simply enforcing that $\pt$ satisfy the POI and DOI conditions relative to $\ps$ under this enriched class of loss functions $\ell \cdot \omega$, we can neatly ensure that $\pt$ is again POI and DOI, not just over $\cD$, but over every marginal distribution $\cD_\omega$. Consequently, $\pt$ must a be an omnipredictor under all of these $\cD_\omega$. The simplicity of this analysis attests to the versatility of the OI perspective and the value it provides in domains beyond supervised learning. 

\subsection{Related Work and Discussion}

Our work lies at the intersection of several areas including performative prediction, the outcome indistinguishability  and multicalibration literature, as well as other fields studying algorithmic decision-making such as contextual bandits. We briefly discuss how our results relate to previous work within these areas and conclude with some speculation regarding broader implications of our conclusions.

\paragraph{Performative Prediction.} The performative prediction framework was introduced by \cite{performative} who defined the main solution concepts and analyzed the convergence of repeated risk minimization to \emph{performatively stable} points. A decision rule $h_{\mathrm{ps}}$ is performatively stable if it is a fixed point of risk minimization,
\begin{align*}
	h_{\mathrm{ps}} \in \argmin_{h \in \cH}\E_{(x,\ys)\sim \cD(h_{\mathrm{ps}})} [\ell(x, h(x), \ys)].
\end{align*}
Subsequent work by \cite{mendler2020stochastic,drusvyatskiy2022stochastic,brown2022performative,cutler2021stochastic}
studied stochastic optimization algorithms for finding stable points in a variety of settings. 
However, these stable solutions need not be performatively \emph{optimal} as per the definition outlined in \equationref{eq:po}. 
In fact, \cite{miller2021outside} proved that performatively stable models can achieve \emph{arbitrarily worse} loss than performative optimal points.
This observation motivated the design of algorithms for findings performatively optimal decision rules for a fixed loss $\ell$\cite{miller2021outside,izzo2021learn,izzo2022learn,narang2022multiplayer,jagadeesan22a}. 
These algorithms work in the general performative prediction setup where
the model $h$ can affect the \emph{joint} distribution over pairs $(x,y)$, but make make very strong specification assumptions on how each $h$ influences the distribution, and restrict to loss functions satisfying smoothness and strong convexity.

In contrast, our results rely only on the outcome performativity assumption and mild boundedness assumptions.
The recent  work of \cite{mendler2022predicting} has also considered the outcome performativity setting, aiming to understand when performative effects are identifiable from observational data.
Short of identifiability, it remains an interesting direction for future research to give learning algorithms for performative omnipredictors from observational data.

Beyond these optimization results, previous work in performative prediction has acknowledged
the tension in performative prediction between accurate forecasting and steering.
Concretely, \cite{miller2021outside} discuss how the choice of loss function in performative prediction should balance predictive accuracy with any externalities that arise from the impacts of prediction on the observed distribution.
In a different direction, \cite{hardt2022performative} uses performativity as a lens with which to study notions of market power in economics.
As part of their analysis, they provide a decomposition of the performative risk of a classifier into terms that represent forecasting and steering.
While we consider how the choice of loss function determines the high-level objective, \cite{hardt2022performative} considers how, even for a fixed loss function, the performative risk can be decomposed into terms associated with forecasting and steering.

\paragraph{Reinforcement Learning and Contextual Bandits.} As discussed in \cite{performative}, performative prediction, and in particular outcome performativity, can be cast as reinforcement learning (RL) or contextual bandits problems. 
Individuals $x$ correspond to the contexts, decisions $\yhat$ correspond to actions, and the loss $\ell(x,\yhat, \ys)$ is captured by the reward $r(x,\yhat)$.
Due to the breadth of their definitions, most ML problems can be written as RL problems. 

Still, important issues that arise in outcome performativity---like the tension between forecasting and steering and the desire for omnipredictors---are best seen by focusing on the specific interactions between predictions $\yhat$ and outcomes $y$.
The variety of losses that can exist for a given outcome are obscured by encapsulating all feedback within an abstract reward function $r(x,\yhat)$.
Moreover, on a technical level, performativity has a richer feedback structure that can be used to design more efficient algorithms as illustrated by \cite{jagadeesan22a}.

\paragraph{Multicalibration and  Outcome Indistinguishability.}
Originally developed by \cite{hkrr} as a notion of fairness in prediction, multicalibration has seen considerable interest and application in the broader context of supervised learning.
At a high-level, multicalibration requires predictions to be \emph{calibrated}, not just overall, but even when restricting our attention to structured subpopulations.
The goal of multicalibration and other related notions of ``multi-group'' fairness \cite{kearns2018preventing,kim2018fairness,kim2019multiaccuracy,jung2021moment} is to ensure that learning occurs within important subpopulations that might otherwise be ignored.

Intuitively, the requirements of multicalibration represent a kind of indistinguishability:  calibration requires that the predicted probabilities ``look like'' real probabilities.
\cite{oi} formalizes this intuition, introducing the notion of Outcome Indistinguishability, which generalizes multicalibration.
They show tight computational equivalences between multi-group fairness notions and variants of OI.
Subsequently, OI and multicalibration have been applied in diverse contexts beyond fairness, such as distributional robustness through universal adaptability \cite{ua} and omniprediction \cite{omni}.

\paragraph{Omniprediction.}
Our work draws directly from the work on omnipredictors \cite{omni,gopalan2022loss}.
Even in the supervised learning setting, the existence of efficient omnipredictors is not at all obvious.
The main result of \cite{omni} demonstrates the feasibility of omnipredictors over any hypothesis class $\Hcal$, for the class $\cL_\mathrm{cvx}$ of all convex and Lipschitz loss functions.
This sweeping result follows by showing that a $\Hcal$-multicalibrated predictor is a $(\cL_\mathrm{cvx},\Hcal)$-omnipredictor.

A key follow-up work of \cite{gopalan2022loss} shows that omniprediction is even more general than initially thought.
This work initiates the study of omniprediction through the lens of outcome indistinguishability.
By the equivalence of OI with multicalibration, it has been clear since the work of \cite{omni} that OI captures loss minimization and omniprediction in the context of supervised learning, albeit indirectly.
\cite{gopalan2022loss} revisits the question of omnipredictors, directly through the lens of OI, studying a refined notion, which they call Loss OI.
They derive a general recipe for omnipredictors for any class of losses $\cL$, from calibration and multiaccuracy over a class derived from $\cL$ and $\Hcal$.

On a technical level, our analysis follows the OI-based approach of \cite{gopalan2022loss}.
Indeed, our proof that Performative OI and Performative Decision OI imply Performative Omniprediction follows the strategy laid out to obtain supervised omnipredictors from Loss OI.
Despite the fact that outcome performativity requires us to reason about distributional shifts induced by the predictions, the same indistinguishability framework proves effective for learning omnipredictors in our setting.

While syntactically similar to prior formulations of OI, our notion of performative OI is the first to consider outcome indistinguishability for non-supervised learning distributions.
Our objective, in this work, was to derive a notion of performative OI sufficient to imply performative omniprediction.
No doubt, further generalizations of the original OI hierarchy \cite{oi} to the (outcome) performative setting---and beyond---may prove useful.

\subsection*{Organization}
The remainder of the manuscript is organized as follows.
In Section~\ref{sec:omni}, we define performative omniprediction and performative outcome indistinguishability.
Here, we show how appropriate performative OI conditions suffice to obtain omniprediction.
In Section~\ref{sec:ua}, we establish the universal adaptability properties of performative omnipredictors.
In Section~\ref{sec:learning}, we give a generic learning algorithm for performative omnipredictors, demonstrating concrete instantiations of the algorithm using randomized control trial data.
Finally, in Section~\ref{sec:calibration}, we discuss notions of multicalibration in the context of performative prediction.
We speculate that some notions translate to the performative setting naturally, yielding efficient approaches to performative OI, while other notions seem to resist efficient translation.

\subsection*{Notation Overview}
We denote individual's features by $x \in \cX$ and
the available decisions by $\yhat \in \cYh$.
We assume that $\cX$ and $\cYh$ are discrete sets, and assume $\card{\cYh}$ is finite.
Throughout our presentation, we assume that outcomes $\cY = \set{0,1}$ are binary. The true conditional expectation over Nature's outcomes is given by $$\ps: \cX \times \cYh \rightarrow [0,1],$$
where for every individual $x \in \Xcal$ and decision $\yhat \in \Ych$, $\ps(x,\yhat)$ gives the true probability of positive outcome.
$$\ps(x,\yhat) = \Pr[\ys=1\mid x, \yhat]$$
In analogy to $\ps$, we denote the learner's predictor (i.e., the ``model of Nature'') by
$$\pt:\cX \times \cYh \rightarrow [0,1].$$
We use the shorthand $\ys \sim \ps(x,\yhat)$ to denote true outcomes drawn from the Bernoulli distribution with parameter $\ps(x,\yhat)$,
and $\yt \sim \pt(x,\yhat)$ to represent modeled outcomes drawn from the Bernoulli distribution with parameter $\pt(x,\yhat)$.
Throughout, we use $\cD$ to denote a marginal distribution over individual features $x \in \cX$.
To denote the approximate equality of expectations, we use the notational shorthand $A \approx_\eps B$ to denote that $A \in [B-\eps,B+\eps]$.

We take a loss function to be a map from individual-decision-outcome triples to a nonnegative value,
$$\ell:\cX \times \cYh \times \cY \rightarrow \R_{\geq 0}.$$
For a collection of losses $\cL$, we let $\lmax = \sup_{x,\yhat,y} \ell(x,\yhat,y)$.
For a fixed loss $\ell$, we define $\ft_{\ell}: \cX \to \cYh$ to be the optimal post-processing of $\pt$ according to $\ell$. 
More specifically, for every $x \in \cX$,
\begin{align*}
    \ft_{\ell}(x) = \argmin_{\yhat \in \cYh} \E_{\yt \sim p(x,\yhat)}[\ell(x, \yhat,\yt)].
\end{align*}
Note that $\ft_{\ell}$ is defined pointwise, without regard to the marginal distribution over $\cX$.
We use $\cH \subseteq \set{h:\Xcal \to \cYh}$ to represent a set of decision rules (i.e., a hypothesis class) which map individuals to decisions.
Typically, we assume that hypotheses $h \in \Hcal$ have an efficient representation. 






\section{Performative Omniprediction}
\label{sec:omni}

Introduced in the supervised learning setting by \cite{omni},
an omnipredictor is an outcome probability model that suggests an optimal decision rule for any loss within a class of loss functions.
To discuss this concept more formally, we first review the definition optimality with respect to a fixed loss function, under our setting of outcome performativity.
\begin{definition}[Performative Optimality]
For input distribution $\D$, loss function $\ell$, hypothesis class $\Hcal$, and $\eps \ge 0$, a decision rule $f:\Xcal \to \Ych$ is $(\ell,\Hcal,\eps)$-performatively optimal over $\D$ if 
\begin{gather*}
\E_{\substack{x \sim \D\\\ys \sim \ps(x,f(x))}}[\ell(x, f(x),\ys)] \le \min_{h \in \Hcal}\E_{\substack{x \sim \D\\\ys \sim \ps(x,h(x))}}[\ell(x, h(x),\ys)] + \eps.
\end{gather*}
\end{definition}
That is, a decision rule $f$ is performative optimal if $f$ obtains expected loss under the induced outcome distribution (i.e. performative risk) competitive with the best hypothesis in some reference class $h \in \Hcal$.
\footnote{To review, as in \cite{performative} we refer to  $\E_{x \sim \D,\ys \sim \ps(x,h(x))}[\ell(x, h(x),\ys)]$ as the performative risk of $h$ on $\ell$.}
A simple, but key observation is that if we knew the model that generates outcomes for each individual perfectly, then performative optimality is straightforward to achieve.
Given an outcome model $\ps$ that specifies, for a given individual $x \in \Xcal$ and decision $\yhat \in \Ych$, the probability of outcome $y \sim \ps(x,\yhat)$, the optimal decision rule for any loss function can be determined by a simple optimization over the choice of decisions $\yhat$, as follows.
\begin{fact}[Optimal Decision Rule]
\label{fact:opt}
Fix an outcome model $\ps:\Xcal \times \Ych \to [0,1]$.
For a loss $\ell:\Xcal \times \Ych \times \cY \to \R_{\ge 0}$, there exists a globally optimal decision rule $f_{\ell}^*:\Xcal \to \Ych$ for $\ell$ under $\ps$, given by the pointwise loss-minimizer,
\begin{gather}
\equationlabel{eq:optimal_rule}
    f^*_{\ell}(x) = \argmin_{\yhat \in \Ych}\E_{\ys \sim \ps(x,\yhat)}[\ell(x,\yhat,\ys)].
\end{gather}
\end{fact}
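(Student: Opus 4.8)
The plan is to verify the two claims bundled into the statement: first, that the pointwise minimizer $f^*_\ell$ of \equationref{eq:optimal_rule} is a well-defined function $\Xcal \to \Ych$, and second, that it is \emph{globally} optimal, i.e.\ it simultaneously minimizes the performative risk against every competing decision rule and over every marginal distribution $\cD$ over $\Xcal$. The proof is essentially the observation that optimizing pointwise is a strictly stronger condition than optimizing in expectation, so almost all of the work is bookkeeping.

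For well-definedness: fix $x \in \Xcal$. Since $\cY = \set{0,1}$, the inner objective is the explicit affine-in-$\ps$ expression $\E_{\ys \sim \ps(x,\yhat)}[\ell(x,\yhat,\ys)] = \ps(x,\yhat)\cdot \ell(x,\yhat,1) + (1-\ps(x,\yhat))\cdot \ell(x,\yhat,0)$, a finite list of real numbers indexed by $\yhat \in \Ych$ because $\card{\Ych}$ is finite. Hence the minimum over $\Ych$ is attained, and we set $f^*_\ell(x)$ to be any minimizer, fixing a deterministic tie-breaking rule so that $f^*_\ell$ is genuinely a function (this matters since we will later compose it with a random $x \sim \cD$). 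For global optimality, I would fix an arbitrary decision rule $g:\Xcal \to \Ych$ and an arbitrary $\cD$, apply the tower rule to write the performative risk as an average of conditional losses,
\[
    \E_{\substack{x \sim \cD \\ \ys \sim \ps(x,g(x))}}[\ell(x,g(x),\ys)] = \E_{x \sim \cD}\Big[\,\E_{\ys \sim \ps(x,g(x))}[\ell(x,g(x),\ys)]\,\Big],
\]
and likewise for $f^*_\ell$. For each fixed $x$, the definition of $f^*_\ell(x)$ as the argmin over \emph{all} of $\Ych$—with $g(x) \in \Ych$ a feasible candidate—gives the pointwise inequality
\[
    \E_{\ys \sim \ps(x,f^*_\ell(x))}[\ell(x,f^*_\ell(x),\ys)] \;\le\; \E_{\ys \sim \ps(x,g(x))}[\ell(x,g(x),\ys)].
\]
Taking expectation over $x \sim \cD$ preserves the inequality by monotonicity of expectation, which is exactly the claimed global optimality; specializing $g$ to range over any $\Hcal$ recovers the $(\ell,\Hcal,0)$-performative optimality statement made in the surrounding text.

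I do not expect a genuine obstacle here—the fact is essentially definitional. The only points requiring a shred of care are (i) invoking finiteness of $\Ych$ to guarantee the argmin exists, (ii) pinning down $f^*_\ell$ as an honest function via fixed tie-breaking so that the composition with a random input is meaningful, and (iii) noting that the argument is agnostic to $\cD$ precisely because $\ps$ is specified conditionally on $(x,\yhat)$, so the pointwise comparison never references the marginal.
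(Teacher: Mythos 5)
Your proposal is correct and follows the same route the paper takes (implicitly, since the paper states this as a Fact without a formal proof environment): the observation that the pointwise minimizer over the finite set $\Ych$ exists and that pointwise optimality is preserved under averaging over any marginal $\cD$. The paper even makes the same remark about the objective reducing to a two-term affine expression in $\ps(x,\yhat)$ and being computable by linear enumeration over $\Ych$; your additional care about tie-breaking is sound but not something the paper dwells on.
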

For instance, when $\Ych$ is discrete and finite, then the optimal prediction can be computed by linearly enumerating over $\cYh$, and computing a finite sum of two terms each time.
We will often consider the optimal decision rule $\ft_{\ell}$ after post-processing an approximate outcome model, or predictor, denoted as $\pt$. 

With these definitions of performative optimality and optimal decision rule in place, we are ready to define performative omniprediction.
An omnipredictor is an outcome model $\pt$ where for \emph{every} loss $\ell \in \cL$ within some collection, the optimal decision rule derived from $\pt$ is performatively optimal with respect to some class of hypothesis $\calH$.
\begin{definition}[Performative Omniprediction]
For input distribution $\D$, collection of loss functions $\cL$, hypothesis class $\Hcal$, and $\eps \ge 0$, a predictor $\pt:\Xcal \times \Ych \to [0,1]$ is an $(\cL,\Hcal,\eps)$-performative omnipredictor over $\D$ if for all $\ell \in \cL$, the optimal decision rule
$\ft_\ell$ is $(\ell,\Hcal,\eps)$-performative optimal.
\end{definition}
Omniprediction is a very strong solution concept.
Whereas the optimal decision rule typically depends intimately on the chosen loss, an omnipredictor needs to encode the optimal decision rule for every loss in $\cL$, even if these losses encode very different preferences over predictions.
It is not hard to see that the optimal predictor is an omnipredictor for any hypothesis and loss class.
\begin{corollary}
For any input distribution $\D$, collection of loss functions $\cL$, and hypothesis class $\Hcal$, the optimal predictor $p^*:\Xcal \times \Ych \to [0,1]$ is an $(\cL,\Hcal,0)$-performative omnipredictor over $\D$.
\end{corollary}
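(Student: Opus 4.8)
The plan is to unwind the definitions: when the predictor is $\ps$ itself, the induced decision rule $\ft_\ell$ coincides with the globally optimal decision rule $\fs_\ell$ of \factref{opt}, so it suffices to show that $\fs_\ell$ is $(\ell,\Hcal,0)$-performatively optimal over $\D$ for every $\ell \in \cL$. First I would fix an arbitrary $\ell \in \cL$ and an arbitrary $h \in \Hcal$, and compare the performative risks of $\fs_\ell$ and of $h$ pointwise in $x$.

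The key step is the pointwise comparison. For each $x \in \Xcal$, the decision $\fs_\ell(x)$ is by definition a minimizer of $\yhat \mapsto \E_{\ys \sim \ps(x,\yhat)}[\ell(x,\yhat,\ys)]$ over $\yhat \in \Ych$, and $h(x)$ is one particular element of $\Ych$; since $\Ych$ is finite the minimum is attained, so
\[
    \E_{\ys \sim \ps(x,\fs_\ell(x))}\!\big[\ell(x,\fs_\ell(x),\ys)\big] \;\le\; \E_{\ys \sim \ps(x,h(x))}\!\big[\ell(x,h(x),\ys)\big]
\]
holds for every $x$. Taking expectations over $x \sim \D$ preserves the inequality, so the performative risk of $\fs_\ell$ is at most that of $h$. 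Since $h \in \Hcal$ was arbitrary, taking the minimum over $h \in \Hcal$ on the right-hand side gives exactly the $(\ell,\Hcal,0)$-performative optimality of $\fs_\ell$; ranging over all $\ell \in \cL$ yields the claim.

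I expect no real obstacle here: the statement is an immediate consequence of the pointwise optimality of $\fs_\ell$, together with the fact that any hypothesis $h$ only ever selects decisions lying in $\Ych$. The single point worth stating carefully is \emph{why} the pointwise argument is legitimate — it works precisely because, under outcome performativity, the outcome distribution $\ps(x,\yhat)$ depends only on the pair $(x,\yhat)$ and not on the global decision rule deploying $\yhat$. This is exactly the structural feature that makes this corollary trivial, whereas the analogous statement for general performative prediction (where deploying $h$ can reshape the whole joint distribution over individual-outcome pairs) fails.
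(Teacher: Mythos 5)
Your proof is correct and follows exactly the paper's approach: the paper simply cites Fact~\ref{fact:opt} and observes that $\ps$ gives the true outcome law, which is the one-line version of your pointwise-minimization argument. You've just spelled out why pointwise optimality of $\fs_\ell$ integrates to performative optimality over $\D$, which is the implicit content of that citation.
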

This corollary follows directly from Fact~\ref{fact:opt}, because the optimal predictor $p^*$ gives the true probability law governing the performative outcome distribution. 
Still, the optimal predictor may be of arbitrary complexity and is generally inaccessible.
The question remains whether \emph{efficient} performative omnipredictors exist, and if so, how to learn them.
To attack this question, we introduce a generalization of the outcome indistinguishability framework to the outcome performativity setting.

\subsection{Performative Outcome Indistinguishability}
Outcome Indistinguishability (OI) was introduced by \cite{oi} as an alternative paradigm for supervised learning.
Rather than focusing on loss minimization, OI formalizes learning as a computational indistinguishability condition.
In this view, a predictor should produce outcomes that are indistinguishable from Nature's outcome distribution.
While OI can encode classic learning goals like loss minimization, the abstraction is quite generic and amenable to modern supervised learning desiderata, like fairness \cite{hkrr} and distributional robustness \cite{ua}. 

Here, we propose an indistinguishability definition for the performative world.\footnote{In its original formulation, OI is a hierarchy of related notions. We generalize the framework to our setting, focusing on notions of performative OI that will imply performative omniprediction.  Understanding a full generalization of the OI framework to the performative setting is an interesting question for future investigations.}
This definition extends what \cite{gopalan2022loss} refer to as Hypothesis OI in the supervised setting.
\begin{definition}[Performative OI]
For input distribution $\D$, collection of losses $\cL$, hypothesis class $\Hcal$, and $\epsilon \ge 0$, a predictor $\pt:\Xcal \times \Ych \to [0,1]$ is $(\cL,\Hcal,\epsilon)$-performative outcome indistinguishable (POI) over $\D$ if for all $\ell \in \cL$ and all $h \in \Hcal$,
\begin{gather*}
   \big| \E_{\substack{x \sim \D\\\ys \sim \ps(x,h(x))}}[\ell(x,h(x),\ys)] -  \E_{\substack{x \sim \D\\\yt \sim \pt(x,h(x))}}[\ell(x,h(x),\yt)] \big| \leq \epsilon
\end{gather*}
\end{definition}
In this definition, we fix our collection of distinguishers to be parameterized by a collection of loss functions and a hypothesis class.
The POI condition states that, even when the outcome distribution can depend nontrivially on the hypothesis value $h(x)$, the outcomes $\ys$ and $\yt$ are indistinguishable, as measured by the expected loss of each hypothesis.
Note that the distinguishers take as input the individual $x$, the decision $h(x)$, and either Nature's outcome $\ys$ or the modeled outcome $\yt$.
In particular, these distinguishers do not receive access to the predictions $\pt(x,h(x))$ themselves.\footnote{In the language of \cite{oi}, this notion corresponds to the ``No-Access'' level of the OI hierarchy.  In principle, we could also extend the upper levels to the performative setting as well.  We comment this issue further within our discussion of performative calibration in Section~\ref{sec:calibration}.}

As a step towards obtaining omniprediction, we require indistinguishability between $\pt$ and $\ps$ not just under the reference decision rules $h$, but also under the optimal decision rules $\ft_{\ell}$ derived from $\pt$.
This motivates the notion of Performative Decision OI, which extends the idea of decision calibration, introduced in \cite{zhao2021calibrating}, and decision OI, introduced in \cite{gopalan2022loss}, to the performative setting.
\begin{definition}[Performative Decision OI]
For input distribution $\D$, collection of loss functions $\cL$, and $\epsilon \ge 0$, a predictor $\pt:\Xcal \times \Ych \to [0,1]$ is $(\cL,\epsilon)$-performative decision outcome indistinguishable (DOI) over $\D$ if for all $\ell \in \cL$,
\begin{gather*}
    \big| \E_{\substack{x \sim \D\\\ys \sim \ps(x,\ft_\ell(x))}}[\ell(x,\ft_\ell(x),\ys)] - \E_{\substack{x \sim \D\\\yt \sim \pt(x,\ft_\ell(x))}}[\ell(x,\ft_\ell(x),\yt)] \big| \leq \epsilon.
\end{gather*}
\end{definition}
Operationally, DOI allows us to sample outcomes $\yt \sim \pt(x,\ft_\ell(x))$ from our model of Nature, evaluate the expected loss of $\ell(x,\ft_\ell(x),\yt)$, and be confident that it is close to the loss on outcomes sampled from Nature $\ys \sim \ps(x,\ft_\ell(x))$.

Note that, technically, the indistinguishability conditions in Performative OI and Performative Decision OI look the same, but just refer to different hypothesis classes; that is, $(\cL, \eps)$-Performative Decision OI can be phrased as $(\cL, \set{\ft_\ell : \ell \in \cL}, \eps)$-Performative OI.
We make a distinction between these notions because, semantically, the hypothesis class $\set{\ft_\ell : \ell \in \cL}$ is derived from the predictor $\pt$, whereas $h \in \Hcal$ is independent of $\pt$.
As we will see later, this semantic difference manifests as a concrete difference in the computational complexity of achieving each notion of indistinguishability.


\subsection{Performative Omniprediction via OI}
With these definitions in place, we can prove our first main result: performative omniprediction from performative outcome indistinguishability.
One of the main benefits of studying the problem from the indistinguishability lens is that it enables an especially clean and simple analysis.
The proof strategy we employ here follows the proof of omniprediction in the supervised learning world by \cite{gopalan2022loss}.
Curiously, the proof only needs one direction of the indistinguishability inequalities.
\begin{theorem}
\theoremlabel{thm:omni}
Fix an input distribution $\D$, collection of losses $\cL$, hypothesis class $\Hcal$, and $\epsilon \ge 0$.
Suppose that $\pt:\Xcal \times \Ych \to [0,1]$ is $(\cL,\epsilon)$-performative decision OI and $(\cL,\Hcal,\epsilon)$-performative OI.
Then, $\pt$ is a $(\cL,\Hcal,2\epsilon)$-performative omnipredictor.
\end{theorem}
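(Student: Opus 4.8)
The plan is to fix an arbitrary loss $\ell \in \cL$ and an arbitrary competing hypothesis $h \in \Hcal$, and to bound Nature's performative risk of the induced rule $\ft_{\ell}$ by Nature's performative risk of $h$ through a three-term chain of inequalities that detours through the modeled world governed by $\pt$. Since $\ell$ and $h$ are arbitrary, taking the minimum over $h \in \Hcal$ and quantifying over $\ell \in \cL$ then yields precisely the $(\cL,\Hcal,2\epsilon)$-performative omniprediction guarantee, so the entire proof reduces to assembling this chain.

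Concretely, first I would apply $(\cL,\epsilon)$-performative decision OI to $\ell$ to move from Nature's outcomes under $\ft_{\ell}$ to $\pt$'s outcomes under $\ft_{\ell}$ at additive cost $\epsilon$:
\[
\E_{\substack{x \sim \D \\ \ys \sim \ps(x,\ft_{\ell}(x))}}[\ell(x,\ft_{\ell}(x),\ys)] \;\le\; \E_{\substack{x \sim \D \\ \yt \sim \pt(x,\ft_{\ell}(x))}}[\ell(x,\ft_{\ell}(x),\yt)] + \epsilon .
\]
Second comes the only structurally substantive step: by Fact~\ref{fact:opt}, $\ft_{\ell}$ is the \emph{pointwise} minimizer over $\yhat \in \Ych$ of $\E_{\yt \sim \pt(x,\yhat)}[\ell(x,\yhat,\yt)]$, so for every individual $x$ the modeled loss at decision $\ft_{\ell}(x)$ is at most that at $h(x)$; integrating over $x \sim \D$ gives
\[
\E_{\substack{x \sim \D \\ \yt \sim \pt(x,\ft_{\ell}(x))}}[\ell(x,\ft_{\ell}(x),\yt)] \;\le\; \E_{\substack{x \sim \D \\ \yt \sim \pt(x,h(x))}}[\ell(x,h(x),\yt)] .
\]
Third, I would apply $(\cL,\Hcal,\epsilon)$-performative OI to the pair $(\ell,h)$ to pass back from $\pt$'s outcomes under $h$ to Nature's outcomes under $h$ at a further cost of $\epsilon$. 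Composing the three displays gives
\[
\E_{\substack{x \sim \D \\ \ys \sim \ps(x,\ft_{\ell}(x))}}[\ell(x,\ft_{\ell}(x),\ys)] \;\le\; \E_{\substack{x \sim \D \\ \ys \sim \ps(x,h(x))}}[\ell(x,h(x),\ys)] + 2\epsilon ,
\]
and minimizing the right-hand side over $h \in \Hcal$ shows $\ft_{\ell}$ is $(\ell,\Hcal,2\epsilon)$-performatively optimal, as required.

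I do not anticipate a real obstacle; the content lies entirely in having chosen the POI and DOI conditions so that this telescoping works. A few remarks worth including in the write-up: only one direction of each indistinguishability bound (the ``$\le$'' side) is ever invoked, so the two-sided definitions are stronger than this implication needs; the middle step is ``free'' in that it takes place entirely inside the $\pt$-world, requiring no OI assumption and no relationship between $\ft_{\ell}$ and $\Hcal$; and one should verify the routine bookkeeping that every expectation is over the same static marginal $\D$ and that, in each term, the decision fed to $\ps$ or $\pt$ coincides with the argument at which $\ell$ is evaluated — which is exactly the form in which POI and DOI are stated.
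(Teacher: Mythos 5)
Your proof is correct and follows exactly the same three-step chain as the paper's own argument: apply performative decision OI to pass from Nature's outcomes under $\ft_\ell$ to modeled outcomes, use the pointwise optimality of $\ft_\ell$ under $\pt$ to compare against $h$, then apply performative OI to pass back to Nature's outcomes under $h$. Even your side remark that only one direction of each indistinguishability inequality is used matches the paper's own observation.
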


\begin{proof}
The proof exploits the fact that for each loss $\ell \in \cL$, $\ft_\ell$ is the optimal decision rule for $\ell$ under $\pt$.
Fix a loss $\ell \in \cL$.
First, we upper bound the loss achieved by $\ft_\ell$ on real outcomes $\ys$ in terms of the loss on modeled outcomes $\yt$. Under  $(\cL, \epsilon)$-performative decision OI,
\begin{align*}
\E_{\substack{x \sim \D \\ \ys \sim \ps(x,\ft_{\ell}(x))}}[\ell(x, \ft_{\ell}(x),\ys)]
&\le \E_{\substack{x \sim \D\\\yt \sim \pt(x,\ft_{\ell}(x))}} [\ell(x, \ft_\ell(x),\yt)] + \epsilon. \label{ineq:calibration}
\end{align*}
Next, we relate the expected loss achieved by $\ft_\ell$ on modeled outcomes $\yt \sim \pt(x,\ft_\ell(x))$ versus that of other decision rules $h$.
By its definition, $\ft_\ell(x)$ is the optimal decision over any $\yhat \in \Ych$ for the loss $\ell(x,\yhat,\yt)$ under $\yt \sim \pt(x,\yhat)$.
So, averaging over the distribution on inputs $x \sim \D$, the loss of $\ft_\ell$ is upper bounded by the loss of any other decision rule $h$, and in particular those in $\Hcal$:
\begin{align*}
\E_{\substack{x \sim \D\\\yt \sim \pt(x,\ft_{\ell}(x))}} [\ell(x, \ft_\ell(x),\yt)] &\le \E_{\substack{x \sim \D\\\yt \sim \pt(x,h(x))}}[\ell(x, h(x),\yt)]. 
\end{align*}
Finally, by $(\cL,\Hcal,\eps)$-POI, we upper bound the loss achieved by $h$ on real outcomes $\ys$ by that achieved on modeled outcomes $\yt$.
\begin{align*}
\E_{\substack{x \sim \D\\\yt \sim \pt(x,h(x))}}[\ell(x, h(x),\yt)] &\le \E_{\substack{x \sim \D\\\ys \sim \ps(x,h(x))}}[\ell(x, h(x),\ys)] + \epsilon.
\end{align*}
Combining these three inequalities, 
\begin{gather*}
    \E_{\substack{x \sim \D \\ \ys \sim \ps(x,\ft_{\ell}(x))}}[\ell(x, \ft_{\ell}(x),\ys)] \le \E_{\substack{x \sim \D\\\ys \sim \ps(x,h(x))}}[\ell(x, h(x),\ys)] + 2\epsilon,
\end{gather*}
so $\pt$ is a $(\cL,\Hcal, 2\eps)$-performative omnipredictor.
\end{proof}

\section{Universal Adaptability}
\label{sec:ua}

In addition to minimizing expected risk, performative omniprediction can also be viewed as a guarantee of robustness.
So far, we've seen how a performative omnipredictor induces optimal predictions $\yhat$ even if these predictions lead to endogenous shifts in the distribution over outcomes $\ys$. 
In this section, we argue that with little additional work, the OI framework can be adapted to yield performative omnipredictors that are robust to exogenous shifts in the marginal distribution over individuals $x$.\footnote{By endogenous we mean that the distribution shift is caused by the act of prediction itself, which is considered in the outcome performativity framework. Exogeneous shifts are not influence by predictions. They refer to changes in the data distribution caused by factors like a change in external environment, or the passage of time.}

The results here build on the recent work of \cite{ua}, who introduced a notion of \emph{universal adaptability} in the context of statistical inference problems.
In our context, universal adaptability may be interpreted as a guarantee that the performative omnipredictor properties hold, not only on the original input distribution $\D$, but also on a broad family of shifts of this input distribution $\cD$.
In particular, we show that by augmenting the class of loss functions, we can learn an outcome prediction model $\pt$ that can handle exogenous shifts in the input distribution, while still maintaining performative optimality.

We parameterize universal adaptability  by a class of importance weight functions $\cW \subseteq \set{\Xcal \to \R_{\ge 0}}$.
For a base input distribution $\D$, we define a corresponding collection of shifted distributions $\D_{\cW}$ to be the set of distributions reachable after reweighting the probabilities in $\cD$ by some $\omega \in \cW$.
\begin{gather*}
    \D_{\cW} = \set{\D_\omega : \omega \in \cW, \; \supp(\D_\omega) \subseteq \supp(\cD) }, 
    \textrm{ where } \forall x \in \supp(\D_\omega),~~ \D_\omega(x) = \omega(x) \cdot \D(x) 
\end{gather*}
Note that to yield a valid probability distribution $\D_\omega$ it is necessary and sufficient that the importance weight function $\omega$ have unit weight over $\D$; that is, for any $\omega \in \cW$, $\E_{x \sim \D}[\omega(x)] = 1$.\footnote{Other properties of $\cW$ will affect whether universal adaptability is feasible, but not its definition.
We discuss these issues further in Section~\ref{sec:learning}.}
Given an importance weight class $\cW$, we say that an omnipredictor is universally adaptable if it is an omnipredictor over any $\D_\omega \in \D_{\cW}$.
\begin{definition}[Universal Adaptability]
For input distribution $\D$, weight class $\cW$, collection of losses $\cL$, hypothesis class $\Hcal$, and $\eps \ge 0$, a performative omnipredictor is $\cW$-universally adaptable over $\D$ if $\pt$ is an $(\cL, \Hcal, \eps)$-performative omnipredictor over every $\D_\omega \in \D_{\cW}$.
\end{definition}
Note that universal adaptability guarantees robustness under \emph{exogeneous} shifts in the marginal distribution over $\cX$, not under \emph{endogeneous} shifts in the input distribution induced by the act of prediction.
The distributional robustness is with respect to shifts that are defined in advance, independent of the chosen decision rule.
Under the guarantees of universal adaptability, the prevalence of various individuals may vary, but the response of any specific individual $x$ to a prediction $\yhat$, as measured by the distribution $\ps$ governing the outcome $\ys$, remains the same.
Intuitively, this type of robustness is the best that we can hope for without explicitly modeling how the predictions $\yhat \in \Ych$ change the distribution over individuals $x \in \Xcal$, which $\pt$ does not model. If predictions $\yhat$ affect both $x$ and $\ys$, it is not at all obvious to us what invariant property of Nature we should choose to model. We believe these are important questions for future work.

 One consequence of this adaptability definition is that any model $\pt$ that is an omnipredictor for a class of distributions $\D_{\cW}$ must also be an omnipredictor for any mixture distribution with components drawn from this class.  We say that a distribution $\cD_{m}$ is a mixture distribution if for all $x \in \cX$, 
\begin{align*}
	\Pr_{\cD_m}[X=x] = \sum_{\omega} \lambda_\omega \Pr_{\cD_\omega}[X=x]  \text{ where } \cD_\omega \in \cD_{\cW},\; \lambda_\omega \geq 0 \text{ for all } \omega \text{ and } \sum_\omega \lambda_\omega=1.
\end{align*}
We denote by $\mathsf{mixt}(\cD_{\cW})$ the set of all such mixture distributions $\D_m$.

\begin{proposition}
\propositionlabel{prop:mixtures}
Let $\cD_\cW$ be a set of distributions over $\cX$. If $\pt$ is a $(\cL, \cH, \epsilon)$-performative omnipredictor over every $\cD_\omega \in \cD_{\cW}$, then it is also a $(\cL, \cH, \epsilon)$-performative omnipredictor over every $\cD_m$ in $\mathsf{mixt}(\cD_{\cW})$.
\end{proposition}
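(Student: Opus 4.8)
The plan is to unwind the definition of performative omniprediction, using the fact that the induced decision rule $\ft_\ell$ is defined pointwise in $x$ and is therefore one and the same function no matter which marginal distribution over $\cX$ we consider. Fix an arbitrary mixture $\cD_m \in \mathsf{mixt}(\cD_\cW)$, say $\Pr_{\cD_m}[X=x] = \sum_\omega \lambda_\omega \Pr_{\cD_\omega}[X=x]$ with each $\cD_\omega \in \cD_\cW$, $\lambda_\omega \ge 0$, and $\sum_\omega \lambda_\omega = 1$; and fix an arbitrary loss $\ell \in \cL$. Since $\pt$ is an $(\cL,\cH,\epsilon)$-performative omnipredictor over each component $\cD_\omega$, the rule $\ft_\ell$ is $(\ell,\cH,\epsilon)$-performatively optimal over each $\cD_\omega$:
\[
\E_{\substack{x\sim\cD_\omega\\ \ys \sim \ps(x,\ft_\ell(x))}}\!\big[\ell(x,\ft_\ell(x),\ys)\big] \;\le\; \min_{h\in\cH}\ \E_{\substack{x\sim\cD_\omega\\ \ys \sim \ps(x,h(x))}}\!\big[\ell(x,h(x),\ys)\big] + \epsilon .
\]

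Next I would record that, for any \emph{fixed} decision rule $g:\cX\to\cYh$, the performative risk is linear in the marginal: because $\ys$ is drawn from $\ps(x,g(x))$ conditioned on $x$ — and this conditional law is the same across the whole family $\cD_\cW$ — we have $\E_{x\sim\cD_m,\,\ys\sim\ps(x,g(x))}[\ell(x,g(x),\ys)] = \sum_\omega \lambda_\omega\, \E_{x\sim\cD_\omega,\,\ys\sim\ps(x,g(x))}[\ell(x,g(x),\ys)]$. Instantiating this with $g=\ft_\ell$ and applying the per-component inequality above termwise,
\[
\E_{\substack{x\sim\cD_m\\ \ys \sim \ps(x,\ft_\ell(x))}}\!\big[\ell(x,\ft_\ell(x),\ys)\big] \;\le\; \sum_\omega \lambda_\omega\, \min_{h\in\cH}\ \E_{\substack{x\sim\cD_\omega\\ \ys \sim \ps(x,h(x))}}\!\big[\ell(x,h(x),\ys)\big] + \epsilon .
\]

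The only step carrying any content is to pass from $\sum_\omega \lambda_\omega \min_{h} g_\omega(h)$ to $\min_{h} \sum_\omega \lambda_\omega g_\omega(h)$, where $g_\omega(h) \defeq \E_{x\sim\cD_\omega,\,\ys\sim\ps(x,h(x))}[\ell(x,h(x),\ys)]$; this is the elementary inequality $\sum_\omega \lambda_\omega \min_{h} g_\omega(h) \le \min_{h} \sum_\omega \lambda_\omega g_\omega(h)$, seen by taking $h^\star\in\argmin_{h}\sum_\omega \lambda_\omega g_\omega(h)$ and using $\min_{h} g_\omega(h)\le g_\omega(h^\star)$ for every $\omega$. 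Applying the linearity identity once more to rewrite $\min_{h}\sum_\omega \lambda_\omega g_\omega(h) = \min_{h}\E_{x\sim\cD_m,\,\ys\sim\ps(x,h(x))}[\ell(x,h(x),\ys)]$ then yields exactly $(\ell,\cH,\epsilon)$-performative optimality of $\ft_\ell$ over $\cD_m$. Since $\ell\in\cL$ and $\cD_m\in\mathsf{mixt}(\cD_\cW)$ were arbitrary, $\pt$ is an $(\cL,\cH,\epsilon)$-performative omnipredictor over every such mixture, proving the claim. I do not anticipate a genuine obstacle here — the argument is pure bookkeeping — and the one thing to be careful about is orienting the min/sum inequality correctly (min-of-sum $\ge$ sum-of-mins) together with noting explicitly that the single rule $\ft_\ell$, being distribution-free, serves all components and the mixture simultaneously.
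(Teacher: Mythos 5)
Your proof is correct and, modulo bookkeeping, is the same argument the paper gives: decompose the mixture, apply the per-component omniprediction guarantee, and use that the single pointwise rule $\ft_\ell$ serves every component simultaneously. The only cosmetic difference is that the paper fixes an arbitrary $h\in\cH$ at the outset and applies the per-component bound against that particular $h$ (so there is never an explicit $\min$ to move past the sum), whereas you keep the $\min$ in the per-component statement and then invoke the concavity of $\min$ to exchange it with the sum; these are the same step written in two ways.
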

\begin{proof}
Fix a loss $\ell \in \cL$, a hypothesis $h \in \cH$ and a distribution $\cD_m$ in $\mathsf{mixt}(\cD_{\cW})$. Then,
\begin{align*}
\E_{\substack{x \sim \D_m \\ \ys \sim \ps(x,\ft_{\ell}(x))}}[\ell(x, \ft_{\ell}(x),\ys)] &= \sum_\omega \lambda_\omega \cdot \E_{\substack{x \sim \cD_i \\ \ys \sim \ps(x,\ft_{\ell}(x))}}[\ell(x, \ft_{\ell}(x),\ys)]  \\ 
& \leq \sum_\omega \lambda_\omega \cdot \E_{\substack{x \sim \cD_\omega \\ \ys \sim \ps(x,h(x))}}[\ell(x, h(x),\ys)] + \sum_\omega \lambda_\omega \epsilon \\
& = \E_{\substack{x \sim \D_m \\ \ys \sim \ps(x,h(x))}}[\ell(x, h(x),\ys)] + \epsilon
\end{align*}
The first line follows by expanding the definition of the mixture distribution and the second by the omniprediction guarantee on mixture components. In the last line we again applied the definition of a mixture and the fact that the $\lambda_\omega$ sum to 1. Because the inequalities hold for every $h\in \cH$, it must be the case that $\pt$ is an $(\cL, \cH, \epsilon)$-omnipredictor for every mixture distribution .
\end{proof}


We establish universal adaptability for performative omnipredictors by augmenting the loss class $\cL$ using the weight class $\cW$.
Specifically, we define the augmented loss class $\cL_\cW$ as the class of losses $\ell \in \cL$ reweighted by importance weight functions $\omega \in \cW$.
\begin{gather*}
    \cL_{\cW} = \set{\ell_\omega : \ell \in \cL,\ \omega \in \cW}\\
    \textrm{where }\forall x \in \Xcal,\yhat \in \Ych,y \in \cY:~~~ \ell_\omega(x,\yhat,y) = \omega(x) \cdot \ell(x,\yhat,y)
\end{gather*}
With this class of losses in place, we argue that universally-adaptable performative omniprediction is, again, a consequence of performative outcome indistinguishability.
\begin{proposition}
\proplabel{prop:adaptability}
For a base input distribution $\D$, weight class $\cW$, collection of losses $\cL$, hypothesis class $\Hcal$, and $\eps \ge 0$, if a predictor $\pt$ is $(\cL_{\cW},\Hcal,\eps)$-performative OI and $(\cL_\cW,\eps)$-performative decision OI over $\D$, then $\pt$ is an $(\cL,\Hcal,2 \eps)$-performative omnipredictor that is $\cW$-universally adaptable over $\D$.
\end{proposition}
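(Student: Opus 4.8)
The plan is to reduce the statement to a single application of \theoremref{thm:omni}, by observing that the augmented loss class $\cL_\cW$ is designed so that performative OI (resp.\ decision OI) with respect to $\cL_\cW$ on the base distribution $\D$ is literally the same family of constraints as performative OI (resp.\ decision OI) with respect to $\cL$ on each shifted distribution $\D_\omega \in \D_\cW$.

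First I would record the reweighting identity: for any decision rule $g:\Xcal \to \Ych$, outcome model $q \in \set{\ps,\pt}$, loss $\ell \in \cL$, and weight $\omega \in \cW$,
\begin{align*}
    \E_{\substack{x \sim \D_\omega \\ y \sim q(x,g(x))}}[\ell(x,g(x),y)]
    = \E_{\substack{x \sim \D \\ y \sim q(x,g(x))}}[\omega(x)\, \ell(x,g(x),y)]
    = \E_{\substack{x \sim \D \\ y \sim q(x,g(x))}}[\ell_\omega(x,g(x),y)] ,
\end{align*}
which holds because $\D_\omega(x) = \omega(x)\D(x)$ on $\supp(\D_\omega)$, and because $\ps$ and $\pt$ are defined conditionally on the pair $(x,\yhat)$ and so are unaffected by shifts of the input marginal. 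Instantiating this with $g = h$ for every $h \in \Hcal$ converts the hypothesis ``$\pt$ is $(\cL_\cW,\Hcal,\eps)$-performative OI over $\D$'' into ``$\pt$ is $(\cL,\Hcal,\eps)$-performative OI over every $\D_\omega \in \D_\cW$''.

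Next I would transfer the decision-OI condition, which is the one place that needs genuine care: I must reconcile the decision rule $\ft_{\ell_\omega}$ that the $(\cL_\cW,\eps)$-decision OI hypothesis reasons about with the rule $\ft_\ell$ appearing in the conclusion. Since $\ft_\ell$ is defined pointwise (in analogy with \factref{opt}) and $\omega(x) \ge 0$ is a fixed nonnegative scalar at each $x$, scaling the inner objective $\E_{\yt \sim \pt(x,\yhat)}[\ell(x,\yhat,\yt)]$ by $\omega(x)$ does not move the $\argmin$ over $\yhat$ when $\omega(x) > 0$ (fixing any consistent tie-breaking rule), so $\ft_{\ell_\omega}(x) = \ft_\ell(x)$ there; and when $\omega(x) = 0$, the point $x$ carries no $\D_\omega$-mass, so the decision chosen there is immaterial to the DOI condition on $\D_\omega$. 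Hence, applying the reweighting identity with $g = \ft_\ell$, the $(\cL_\cW,\eps)$-performative decision OI hypothesis over $\D$ gives that $\pt$ is $(\cL,\eps)$-performative decision OI over every $\D_\omega \in \D_\cW$.

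Finally, fixing any $\D_\omega \in \D_\cW$, I now have that $\pt$ is both $(\cL,\Hcal,\eps)$-performative OI and $(\cL,\eps)$-performative decision OI over $\D_\omega$, so \theoremref{thm:omni} yields that $\pt$ is an $(\cL,\Hcal,2\eps)$-performative omnipredictor over $\D_\omega$; since this is true for every $\omega \in \cW$, $\pt$ is $\cW$-universally adaptable over $\D$, and taking the constant weight $\omega \equiv 1$ (which may be adjoined to $\cW$ at no cost since it only enlarges $\cL_\cW$) recovers the plain omnipredictor claim over $\D$ itself. I expect the main obstacle to be the decision-rule reconciliation in the third step; everything else is bookkeeping with the reweighting identity plus the already-established \theoremref{thm:omni}.
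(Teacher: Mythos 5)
Your proof takes essentially the same route as the paper's: record the reweighting identity (the paper's \equationref{eqn:reweight}), argue $\ft_{\ell_\omega}=\ft_\ell$ pointwise, and invoke \theoremref{thm:omni} once per $\D_\omega\in\D_\cW$. Your treatment of the degenerate case $\omega(x)=0$---where scaling the inner objective by $\omega(x)$ makes the argmin over $\yhat$ vacuous but the point carries no $\D_\omega$-mass---is a genuine refinement of what the paper states, which simply asserts $\ft_{\ell_\omega}=\ft_\ell$ for all $x$ without caveat; the only imprecise clause in your writeup is the parenthetical that $\omega\equiv 1$ can be ``adjoined to $\cW$ at no cost'' (enlarging $\cW$ enlarges $\cL_\cW$ and hence \emph{strengthens} the hypothesis of the proposition rather than being free), but the paper's own proof also does not separately address the over-$\D$ clause of the conclusion, so that small wrinkle is shared and inessential.
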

\begin{proof}
The proposition follows as a corollary of \theoremref{thm:omni}.
The key observation is that multiplying by the importance weight $\omega(x)$ allows us to switch from an expectation over $\D$ to an expectation over $\D_\omega$.
By the definition of $\D_\omega$, we have that for supported $x \in \Xcal$, $\omega$ is the odds ratio,
\begin{gather*}
    \omega(x) = \frac{\D_\omega(x)}{\D(x)}.
\end{gather*}
Further, by the definition of $\ell_\omega$, for any $h:\Xcal \to \Ych$ and any outcome probability model $p$, the following equality of expectations holds
\begin{gather}
\equationlabel{eqn:reweight}
    \E_{\substack{x \sim \D\\y \sim p(x,h(x))}}[\ell_\omega(x,h(x),y)]
    = \E_{\substack{x \sim \D\\y \sim p(x,h(x))}}[\omega(x) \cdot \ell(x,h(x),y)]
    = \E_{\substack{x \sim \D_\omega\\y \sim p(x,h(x))}}[\ell(x,h(x),y)],
\end{gather}
where we rely on the identity that for any function $g:\Xcal \to \R$,
\begin{gather*}
    \E_\D[g(x) \cdot \omega(x)] = \E_\D[g(x) \cdot \D_\omega(x)/\D(x)] = \E_{\D_\omega}[g(x)].
\end{gather*}
The equality in \equationref{eqn:reweight} immediately implies that if $\pt$ is $(\cL_\cW,\Hcal,\eps)$-POI over $\D$, then $\pt$ is $(\cL,\Hcal,\eps)$-POI over every $\D_\omega \in \D_\cW$.
That is, by applying the identity to the expectation under Nature's outcomes $\ys \sim \ps(x,h(x))$ and separately to the expectation under the modeled outcomes $\yt \sim \pt(x,h(x))$, $(\cL_\cW,\Hcal,\eps)$-performative OI imples that we obtain indistinguishability for all $\ell \in \cL, h \in \Hcal$ and $\D_\omega \in \D_\cW$:
\begin{gather*}
\big|\E_{\substack{x \sim \D_\omega\\\ys \sim \ps(x,h(x))}}[\ell(x,h(x),\ys)]
-
\E_{\substack{x \sim \D_\omega\\\yt \sim \pt(x,h(x))}}[\ell(x,h(x),\yt)] \big| \leq \epsilon.
\end{gather*} 
The corresponding statement for performative decision OI is a bit more subtle.
Whereas above, the decision rules $h \in \Hcal$ do not depend in any way on $\omega$, the optimal decision rule $\ft_{\ell_\omega}$ based on $\pt$, is allowed to depend on the specified loss and, thus, on $\omega$.
Still, we argue that for any $\omega$, $\ft_{\ell_\omega} = \ft_\ell$.
This equality follows by the fact that the optimal decision rule is chosen pointwise, for each $x \in \Xcal$.
In particular, for all $x \in \Xcal$, scaling the loss by $\omega(x)$ changes the scale of the optimization, but not the minimizer:
\begin{gather*}
    \ft_{\ell_\omega}(x) = \argmin_{\yhat \in \Ych} \E_{y \sim \pt(x,\yhat)}[\omega(x) \cdot \ell(\yhat,y)] = \argmin_{\yhat \in \Ych} \E_{y \sim \pt(x,\yhat)}[\ell(\yhat,y)] = \ft_{\ell}(x).
\end{gather*}
Thus, the same identities from above can be applied to prove that if $\pt$ is $(\cL_\omega,\eps)$-DOI for a fixed distribution $\cD$, then it is also $(\cL, \epsilon)$-DOI for every $\D_\omega \in \D_\cW$.
The proposition follows by applying \theoremref{thm:omni} separately over each $\D_\omega \in \D_\cW$.
\end{proof}

Before moving on, we highlight that designing omnipredictors requires the learner to account for possible shifts in the distribution at \emph{training} time, not at test time. At test time, the learner simply chooses predictions $\yhat$ according to the function $\ft_\ell$, without needing to first infer what the underlying distribution $\cD$ may be. The decision rule $\ft_\ell$ is simultaneously optimal for all of them. 
This design choice shifts the burden of technical sophistication and expertise from the user of the system to its designer. The user is free to focus on the choice of loss function $\ell$ to balance between forecasting and steering knowing that naive usage of $\ft_\ell$ is guaranteed to work.

\section{Learning Algorithms for Performative Omniprediction}
\label{sec:learning}

\begin{figure}[t!]
\setlength{\fboxsep}{2mm}
\begin{boxedminipage}{\textwidth}
\begin{center}
{\centering{\underline{Performative OI Boost (POI-Boost)}}}
\end{center}

\vspace{2mm}
{\bf Input:} Set of losses $\cL$, hypotheses $\cH$, distribution $\cD$, tolerance $\eps > 0$\\
\vspace{-2mm}

{\bf Initialize:} $q^{(1)}(\cdot) \gets [1/2, \dots, 1/2] \in [0,1]^{|\cYh|}$  
\\

{\bf For} $t = 1,2, \ldots$
\begin{itemize}
	\item {\bf If:} there exists $\mathbf{a)}$  $(h,\ell) \in \cH \times \cL$ \quad or,  \quad   $\mathbf{b)}$ $(h, \ell) \in  \{(f_{\ell, t}, \ell): \ell \in \cL \}$
	\begin{align}
	\equationlabel{eq:constraint}
		| \underbrace{\E_{\substack{x \sim \cD \\ \yt \sim \pt(x, h(x))}}[\ell(x, h(x),\yt)] - \E_{\substack{x \sim \cD \\ \ys \sim \ps(x, h(x))}}[ \ell(x, h(x),\ys)]}_{\mathsf{err}_t} | \geq \eps
	\end{align}

	{\bf Then:} Update the representation $q^{(t)}$: 
	\begin{align}
	\label{eq:predictor_update_rule}
		q^{(t+1)}(\cdot) \leftarrow \Pi \left(\; q^{(t)}(\cdot) - \eta^{(t)} \; v_{\ell, h}(\cdot) \;\right)
	\end{align}
	where $\eta_t = - \epsilon\cdot \sign(\mathsf{err}_t) \; / \; \lmax$ and 
	\begin{align}
	\equationlabel{eq:gradient_def}
	v_{\ell, h}(\cdot) =	 \begin{bmatrix}
	 (\ell(\cdot, \yhat_1, 1) - \ell(\cdot, \yhat_1, 0))  \;\ind\{h(\cdot) = \yhat_1 \}\\ 
	 \dots  \\
	 (\ell(\cdot, \yhat_k, 1) - \ell(\cdot, \yhat_k, 0)) \; \ind\{h(\cdot) = \yhat_k \}
	 \end{bmatrix}\in \R^{|\cYh|},\quad  \cYh = \{\yhat_1, \dots, \yhat_k\}
	\end{align}

	\item {\bf Else:} terminate and return the function $\pt(x,\yhat) = q^{(t)}(x)[\;\yhat\;]$
\end{itemize}
\vspace*{2mm}
\end{boxedminipage}

\caption{Algorithm for generating performative omnipredictors. 
The algorithm proceeds by repeatedly verifying whether the intermediate predictors $p^{(t)}$ satisfy the POI definition, outlined in $\mathbf{a)}$, as well as the DOI definition, outlined in $\mathbf{b)}$. 
If neither is violated, the procedure terminates. 
Otherwise, the algorithm implicitly updates the representation $q^{(t)}$ of the predictor $p^{(t)}$. Given an input $x$, $q^{(t)}(x)$ is a vector of length $|\cYh|$ whose $\yhat$ entry, $q^{(t)}(x)[\;\yhat\;]$, represents $p^{(t)}(x,\yhat)$. 
The operator $\Pi$ clips entries of its input vector to lie in $[0,1]$. 
For the sake of clarity, here we present the simplest version of the algorithm where the search outlined in \equationref{eq:constraint} is proper, however this condition can be easily relaxed as discussed in \sectionref{subsec:csc}.}
\figurelabel{fig:algorithm} 
\end{figure}



In this section, we introduce a general purpose algorithm, POI-Boost, which provably returns a performative omnipredictor $\pt$ for any class of hypothesis $\cH$ and collection of losses $\cL$.
Our algorithmic approach is centered on establishing two reductions. First, we prove that, similar to previous work in the OI literature, learning Performative OI predictors reduces to the problem of \emph{auditing} for outcome indistinguishability.

The auditing problems we reduce to involve determining whether the losses under the decision rules in $\Hcal$ and $\set{\ft_\ell}$ are the same for Nature's outcomes and our modeled outcomes \emph{under outcome performativity}.
While in the supervised learning setting we only need to reason about a single outcome distribution, in our setting, 
different different decision rules induce different distributions over outcomes, and we want to audit for indistinguishability over each of these induced distributions.
Despite this challenge, we show that, given access to appropriately randomized data, we can reduce this performative auditing problem to standard supervised learning primitives.
In this second reduction, we make use of computational and statistical assumptions:  access to an appropriate supervised learner (computational) and access to randomized control data (statistical).

While we instantiate our algorithm with specific computational and statistical assumptions, the framework for learning is completely generic and modular.
In particular, any solution to the auditing problem can be used to implement the algorithm.
It stands to reason that our assumptions could be relaxed in the future, or that in certain settings, incomparable assumptions lead to more effective auditing, which in turn would lead to more efficient learning of performative omnipredictors.

\subsection{Reducing Indistinguishability to Auditing}

We start by establishing our first reduction. We prove that the POI-Boost algorithm (\figureref{fig:algorithm}) returns a performative omnipredictor $\pt$ after a small, polynomial number of calls to an auditing subroutine (described in \equationref{eq:constraint} \& \figureref{figure:auditing}), without yet describing the runtime or sample complexity of the auditing step itself. We address these questions in the next subsection. The algorithm works for any outcome performative problem where the number of predicted labels $\cYh$ is finite and the loss functions are bounded.

\paragraph{Representing Predictors.} For the sake of our analysis, it is helpful to distinguish between the predictor $\pt:\cX \times \cYh \rightarrow [0,1]$ as a function, and the implementation of $\pt$ in code. 
In our learning algorithm, we represent the function $\pt$ in terms of vector-valued functions $\qt: \cX \rightarrow [0,1]^{|\cYh|}$.
Given $x \in \cX$, $\qt(x)$ is a vector of length $|\cYh|$ whose $\yhat$ entry, $q^{(t)}(x)[\;\yhat\;]$, represents $\pt(x,\yhat)$. 

Of course, there is a correspondence between these functions $\pt$ and $\qt$ where each $\pt$ leads to a unique $\qt$ and vice versa. 
The key difference is that $\qt(x)$ returns $\pt(x,\yhat)$ for all $\yhat \in \Ych$ in a single function call, while computing the same information using the direct $\pt$ representation would require $|\cYh|$ functions calls. 
While this might seem like a minor detail, these representations have meaningful differences in terms of the circuit complexity of performative omnipredictors.
Crucially, for any loss $\ell$, to compute $\ft_\ell(x)$, we need the value of of $\pt(x,\yhat)$ for all $\yhat \in \Ych$, and this computation can be performed using a single call to $\qt$.
In other words, we perform $\card{\Ych}$ times the work per call to $\qt$ to avoid $\card{\Ych}$ \emph{recursive} calls to the predictor within the construction.
Avoiding further calls to these functions avoids branching factors and an exponential blowup in the complexity of the resulting predictors as we illustrate.


\paragraph{Algorithm Description.} As outlined in \figureref{fig:algorithm}, POI-Boost is an iterative algorithm which nonparametrically constructs a predictor $\pt$, represented in terms of a vector-valued function $\qt$, by stringing together copies of circuits which compute losses $\ell$ and decision rules $h$. 
At each iteration, the algorithm first appeals to auditing subroutines to check if there: is $a)$ a pair $h,l$ for which the current predictor $p^{(t)}$, fails the performative OI guarantee, or $b)$ a loss function $\ell$ for which the decision rule $f_{\ell,t}$ fails the decision OI guarantee. 
If neither condition is violated, then the algorithm terminates since $p^{(t)}$ satisfies both indistinguishability conditions and consequently must be an omnipredictor as per \theoremref{thm:omni}.

On the other hand, if one of these conditions is violated, we perform an update to the representation $q^{(t)}$ of the current predictor $p^{(t)}$. These updates nudge the predictor closer to $\ps$ by essentially performing gradient descent in function space \cite{mason1999boosting}. These updates are done implicitly in the sense that we can update the representation $q^{(t)}$ for all $x$ in $\cX$ simultaneously by simply adding a copy of the circuit computing $v_{\ell,h}: \cX \rightarrow \R^{|\cYh|}$ (\equationref{eq:gradient_def}) which is defined in terms of a loss $\ell$ and decision rule $h$. By bounding the total number of updates via a potential argument, we can ensure that we don't add too many copies of these functions so that the final predictor is computationally efficient. 


\begin{proposition}
\proplabel{prop:iteration_bound}
The POI-Boost algorithm described in Figure 1 terminates in at most $   |\cYh|  \lmax^2 / \eps^2$ many iterations and returns a predictor $\pt$ that is
$(\cL, \cH, \epsilon)$-performative OI and $(\cL, \epsilon)$-performative decision OI. Consequently, $\pt$ is a $(\cL, \cH, 2\epsilon)$-performative omnipredictor.
\end{proposition}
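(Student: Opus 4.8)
The plan is to establish correctness (the returned predictor satisfies both OI conditions) and then the iteration bound via a potential argument, invoking Theorem~\ref{thm:omni} for the omniprediction consequence.

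\textbf{Correctness upon termination.} First I would observe that the algorithm only terminates via the \textbf{Else} branch, which is reached precisely when neither the condition in $\mathbf{a)}$ nor in $\mathbf{b)}$ fires. The condition in $\mathbf{a)}$ failing to fire for all $(h,\ell) \in \cH \times \cL$ is exactly the statement that $p^{(t)}$ is $(\cL,\cH,\eps)$-performative OI; the condition in $\mathbf{b)}$ failing to fire for all $\ell \in \cL$ (applied to $h = f_{\ell,t}$, the optimal decision rule derived from $p^{(t)}$) is exactly $(\cL,\eps)$-performative decision OI. Here I need to confirm that $\ft_\ell$ as defined in the notation section coincides with the $f_{\ell,t}$ used in the algorithm when $p = p^{(t)}$ — both are the pointwise argmin over $\yhat \in \cYh$ of $\E_{\yt \sim p^{(t)}(x,\yhat)}[\ell(x,\yhat,\yt)]$. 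Then Theorem~\ref{thm:omni} immediately gives that $\pt = p^{(t)}$ is a $(\cL,\cH,2\eps)$-performative omnipredictor. So the entire content of the proposition is the iteration bound.

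\textbf{Iteration bound via a potential function.} The natural potential is the squared $\ell_2$-distance between the vector-valued representation and Nature's representation, aggregated over $\cD$: $\Phi^{(t)} = \E_{x \sim \cD}\big[\, \|q^{(t)}(x) - q^*(x)\|_2^2 \,\big]$, where $q^*(x)[\yhat] = \ps(x,\yhat)$. Since all entries lie in $[0,1]$ and there are $|\cYh|$ coordinates, $\Phi^{(1)} \le |\cYh|/4$ (initializing at $1/2$) and $\Phi^{(t)} \ge 0$ always, so it suffices to show each update decreases $\Phi$ by at least $\eps^2/\lmax^2$. The key computation: for a fixed $x$, the update moves coordinate $\yhat = h(x)$ by $-\eta^{(t)}(\ell(x,\yhat,1) - \ell(x,\yhat,0))$ (before clipping) and leaves other coordinates fixed, because $v_{\ell,h}(x)$ is supported only on coordinate $h(x)$. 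The drop in potential from an unclipped gradient step is the standard $2\eta \langle v, q - q^*\rangle - \eta^2\|v\|^2$; the clipping operator $\Pi$ (projection onto $[0,1]^{|\cYh|}$, a convex set containing $q^*$) only helps, since projection onto a convex set containing the target is non-expansive toward that target. Taking the expectation over $x \sim \cD$, I would identify $\E_x \langle v_{\ell,h}(x), q^{(t)}(x) - q^*(x)\rangle$ with $\pm\,\mathsf{err}_t$ — this is the crucial algebraic step: expanding $\langle v_{\ell,h}(x), q^{(t)}(x)-q^*(x)\rangle = (\ell(x,h(x),1)-\ell(x,h(x),0))(p^{(t)}(x,h(x)) - \ps(x,h(x)))$, and noting that $\E_{\yt \sim p^{(t)}(x,h(x))}[\ell(x,h(x),\yt)] = \ell(x,h(x),0) + p^{(t)}(x,h(x))(\ell(x,h(x),1)-\ell(x,h(x),0))$ and similarly for $\ps$, so the inner product equals the difference of these conditional expectations; averaging over $x$ gives exactly $\mathsf{err}_t$. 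With $|\mathsf{err}_t| \ge \eps$ and $\|v_{\ell,h}(x)\|_2^2 \le \lmax^2$ pointwise (a single nonzero coordinate bounded by $\lmax$ in absolute value), plugging in $\eta^{(t)} = -\eps\,\sign(\mathsf{err}_t)/\lmax$ yields a per-step decrease of at least $2(\eps/\lmax)|\mathsf{err}_t| - (\eps/\lmax)^2 \lmax^2 \ge 2\eps^2/\lmax^2 - \eps^2/\lmax^2 = \eps^2/\lmax^2$. Hence the number of iterations is at most $(|\cYh|/4)/(\eps^2/\lmax^2) \le |\cYh|\,\lmax^2/\eps^2$.

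\textbf{Main obstacle.} The main subtlety is handling the projection $\Pi$ correctly: one must argue that clipping to $[0,1]^{|\cYh|}$ does not undo the progress of the gradient step. The clean way is the standard variational inequality for Euclidean projection onto a convex set — for any $z$ and any point $w$ in the set, $\|\Pi(z) - w\|_2^2 \le \|z - w\|_2^2$ — applied with $w = q^*(x)$ (which lies in $[0,1]^{|\cYh|}$ since it is a vector of probabilities). A secondary point worth spelling out is that the inner-product-equals-$\mathsf{err}_t$ identity relies on outcomes being binary, so that the conditional expectation of a loss is affine in the predicted probability; this is where $\cY = \{0,1\}$ is used. Everything else is bookkeeping. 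I would present the potential argument as the core of the proof and relegate correctness-upon-termination to a one-line appeal to the termination condition and Theorem~\ref{thm:omni}.
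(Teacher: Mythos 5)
Your approach matches the paper's proof exactly: the same potential $\Phi^{(t)} = \E_{x}\|q^{(t)}(x)-q^{*}(x)\|_2^2$, the same use of the non-expansiveness of projection onto $[0,1]^{|\cYh|}$, and the same identity $\E_x\langle q^{(t)}(x)-q^{*}(x),\,v_{\ell,h}(x)\rangle = \mathsf{err}_t$ obtained by writing the expected loss as an affine function of the predicted probability (which, as you correctly flag, uses $\cY=\{0,1\}$). The correctness-upon-termination step is also handled identically, by reading off the termination condition and invoking \theoremref{thm:omni}.

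One thing you should recheck is the final arithmetic, which as written does not yield a decrease. The unclipped change in potential is $-2\eta_t\,\mathsf{err}_t + \eta_t^2\E_x\|v_{\ell,h}(x)\|^2$; for this to be negative you need $\eta_t$ to share the sign of $\mathsf{err}_t$. Plugging in the step size $\eta_t = -\eps\,\sign(\mathsf{err}_t)/\lmax$ from the algorithm box gives $-2\eta_t\mathsf{err}_t = +\,2(\eps/\lmax)\,|\mathsf{err}_t| > 0$, i.e., the potential \emph{increases}. You also write that $(\eps/\lmax)^2\lmax^2$ equals $\eps^2/\lmax^2$, but it equals $\eps^2$, and $2(\eps/\lmax)|\mathsf{err}_t|\ge 2\eps^2/\lmax$, not $2\eps^2/\lmax^2$. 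The consistent choice is $\eta_t = +\eps\,\sign(\mathsf{err}_t)/\lmax^2$, which yields a per-step decrease of at least $2\eps|\mathsf{err}_t|/\lmax^2 - \eps^2/\lmax^2 \ge \eps^2/\lmax^2$ and hence the claimed $|\cYh|\lmax^2/\eps^2$ iteration bound. (The sign and $\lmax$-power slip also appear in the paper as written, so you may have inherited it from the algorithm's statement of $\eta_t$ rather than deriving it; nonetheless, in a standalone proof you should derive the step size from the quadratic, not transcribe it.)
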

\begin{proof}

The guarantee that $\pt$ is performative decision OI and performative OI follow directly from the termination criterion. Therefore, the proposition follows from proving that this termination criteria is met within the stated number of iterations. 


The key insight is that if the indistinguishability constraint in \equationref{eq:constraint} is violated for any $\ell$ or $h$, then updating the representation $q^{(t)}$ ensures that we will have made nontrivial progress on a common potential function. 
Since this potential is bounded from above and below, and we make nontrivial progress with every update, the total number of updates must be bounded. 
In more detail, first, note that for any model $p$, loss $\ell$, hypothesis $h$, and $\cY=\{0,1\}$:
\begin{align*}
	\E_{\substack{x \sim \D\\ y \sim p(x,h(x))}} [\ell(x, h(x), y)] &= \E_{x} \E_{y\mid x} [\ell(x, h(x), y)] \\
	&= \E_{x}[\ell(x, h(x),0) + \left(\ell(x, h(x), 1) - \ell(x, h(x), 0)\right) \cdot p(x,h(x))].
\end{align*}
From this rewriting, and the definition of  $v_{\ell, h}$ in \equationref{eq:gradient_def}, the difference in performative risks between the predictors $p^{(t)}$ and $\ps$ for a pair $\ell, h$ can be expanded as,
\begin{align}
	\E_{\substack{x \sim \D\\ y_t \sim p^{(t)}(x,h(x))}} [\ell(x, h(x), y_t)] - \E_{\substack{x \sim \D\\ \ys \sim \ps(x,h(x))}} [\ell(x, h(x), \ys)]
 =\E_x \langle q^{(t)}(x)- q^*(x),\; v_{\ell,h}(x) \rangle.
	\equationlabel{eq:test_func_diff}
\end{align}

Now consider the potential, written in terms of the representations $q^{(t)}$, $\E_x \norm{q^{(t+1)}(x) - q^*(x)}^2$. By definition of the update rule in the algorithm, this potential is equal to:
\begin{align*}
 	 \E_x \norm{\Pi \left(q^{(t)}(x)   - \eta^{(t)} v_{\ell, h}(x)  \right) -
 q^*(x)}^2.
 \end{align*}
Because the projection (or clipping) operator $\Pi$ can only decrease the distance to $\ps$, if an update is performed, the difference between potentials at adjacent time steps, 
\begin{align*}
 \E_x \norm{q^{(t+1)}(x) - q^*(x)}^2 - \E_x \norm{q^{(t)}(x) - q^*(x)}^2,
\end{align*}
is upper bounded by the sum of  two terms,
\begin{align*}
 &-2 \eta_t \E_x\langle q^{(t)}(x)- q^*(x), v_{\ell, h}(x)\rangle  + \eta_t^2 \E_{x} \norm{v_{\ell, h}(x)}^2.
 \end{align*}
 Using the identity from \equationref{eq:test_func_diff} and the definition of $v_{\ell, h}$ from \equationref{eq:gradient_def}, this is equal to:
 \begin{align*} 
 & -2 \eta_t \mathsf{err}_t + \eta_t^2 \E_{x} [(\ell(x, h(x),1) - \ell(x,h(x),0))^2].
\end{align*}
Because losses lie in $[0,\lmax]$, the second term is less than $\lmax^2\eta_t^2$. 
Furthermore, from the auditing guarantee, $|\mathsf{err}_t| > \epsilon$. By setting the step size $\eta^{(t)}$ to be $-\eps \cdot \sign(\mathsf{err}_t) \; / \; \lmax$, we conclude that the difference in potentials across adjacent time steps satisfies,
\begin{align*}
2 \eta_t \mathsf{err}_t + \eta_t^2 \E_{x} [(\ell(x, h(x),1) - \ell(x,h(x),0))^2] \leq 	-2\eta_t \mathsf{err}_t + \eta_t^2 \lmax^2 \leq -\eps^2 /\lmax^2.
\end{align*}
Since the potential is nonnegative and bounded above by $|\cYh|$, the maximum number of iterations until the termination criterion is met must be at most $|\cYh| \lmax^2 / \eps^2$.
\end{proof}

An important consequence of this result is that it reveals the existence of omnipredictors $\pt$ that admit computationally efficient approximations. 
This result is subtle, even in light of previous work on OI-style boosting algorithms. Intuitively, the final $\pt$ is built out by stringing together copies of functions in $\cH$, $\cL$, and decision rules $f_{\ell,t}$.
Because these decision rules $f_{\ell,t}$ are defined in terms of an optimization procedure involving the intermediate constructions $p^{(t)}$, which themselves depend on previous models $p^{(t-1)}$, a naive implementation of $\pt$ can result in a recursion that induces an exponentially large circuit.
Specifically, the naive implementation would make $\card{\Ych}$ recursive calls to the prior circuit in order to compute $f_{\ell,t}$, resulting in a growth rate of $\card{\Ych}^t$.

However, by carefully ordering the relevant computations and 
``caching'' previous work, we avoid this blow-up.   
The key insight is the following.
By designing a circuit that computes the value of $\pt(x,\yhat)$ for every $\yhat \in \Ych$ simultaneously, we can avoid recursive calls to the circuit.
By maintaining the intermediate computations of $q^{(t)}$, we can avoid a branching factor in the program and preserve efficiency.




\begin{theorem}
\theoremlabel{theorem:circuit_size}
Assume that the functions in $\cH$ and $\cL$ are computable by circuits of size at most $s$, then the predictor $\pt$  returned by the POI-Boost algorithm has size at most $\lmax^2 /\eps^2 \cdot \poly(s, |\cYh|)$.
\end{theorem}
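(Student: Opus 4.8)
The plan is to track the circuit size of the vector-valued representations $q^{(t)}$ built by POI-Boost, show that each iteration adds only $\poly(s,\card{\Ych})$ gates, and then invoke the iteration bound. By \propref{prop:iteration_bound}, the algorithm halts after $T \le \card{\Ych}\,\lmax^2/\eps^2$ rounds and returns the predictor $\pt(x,\yhat) = q^{(T)}(x)[\yhat]$. I would build a single circuit that on input $x$ computes the entire vector $q^{(T)}(x) \in [0,1]^{\card{\Ych}}$ and then selects coordinate $\yhat$, the selection costing $O(\card{\Ych})$ gates. The circuit for $q^{(T)}$ is the composition of the constant circuit for $q^{(1)}(\cdot) \equiv [1/2,\dots,1/2]$ (size $O(\card{\Ych})$) with ``update blocks'' realizing $q^{(t+1)}(\cdot) = \Pi\bigl(q^{(t)}(\cdot) - \eta^{(t)} v_{\ell_t,h_t}(\cdot)\bigr)$ for $t=1,\dots,T-1$. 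The offending pair $(\ell_t,h_t)$ and the step size $\eta^{(t)} = -\eps\,\sign(\mathsf{err}_t)/\lmax$ are fixed by the auditor at compile time and hard-wired into the block, so they incur no evaluation-time cost; threading the input $x$ through all blocks is just wiring.

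The crux is that each update block has size $O(s\card{\Ych})$. The coordinatewise steps --- subtracting $\eta^{(t)} v_{\ell_t,h_t}(x)$ from $q^{(t)}(x)$ and clipping via $\Pi$ --- cost $O(\card{\Ych})$ gates. To assemble $v_{\ell_t,h_t}(x)$ from \equationref{eq:gradient_def}, evaluate $\ell_t(x,\yhat_j,0)$ and $\ell_t(x,\yhat_j,1)$ for all $\yhat_j \in \Ych$: this is $2\card{\Ych}$ invocations of a size-$s$ circuit, i.e.\ $O(s\card{\Ych})$ gates, plus $O(\card{\Ych})$ for the differences. For the indicators $\ind\{h_t(x) = \yhat_j\}$: if $h_t \in \Hcal$, one size-$s$ evaluation and $\card{\Ych}$ equality tests suffice; if $h_t = f_{\ell,t}$ for some $\ell$, I use that $q^{(t)}(x)$ has already been computed and that $\cY = \set{0,1}$, so $\E_{\yt \sim p^{(t)}(x,\yhat_j)}[\ell(x,\yhat_j,\yt)] = \ell(x,\yhat_j,0) + \bigl(\ell(x,\yhat_j,1) - \ell(x,\yhat_j,0)\bigr)\,q^{(t)}(x)[\yhat_j]$, whence all $\card{\Ych}$ conditional expectations, their $\argmin$, and the resulting equality tests cost $O(s\card{\Ych})$ gates in total. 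Either way $\mathrm{size}(q^{(t+1)}) \le \mathrm{size}(q^{(t)}) + O(s\card{\Ych})$, so $\mathrm{size}(q^{(T)}) = O(s\card{\Ych}\,T) = O(s\card{\Ych}^2\lmax^2/\eps^2)$, and adding the final coordinate selection leaves this unchanged up to constants, giving circuit size $\lmax^2/\eps^2 \cdot \poly(s,\card{\Ych})$.

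\textbf{Main obstacle.} The only genuinely non-routine point is ruling out a recursive blowup caused by the decision rules $f_{\ell,t}$: a naive implementation of $f_{\ell,t}(x)$ needs $p^{(t)}(x,\yhat)$ for every $\yhat \in \Ych$, and fetching these via $\card{\Ych}$ recursive calls to the predictor-so-far would multiply the circuit size by $\card{\Ych}$ at each of the $T$ rounds, yielding a $\card{\Ych}^T$ blowup. The fix --- which the plan above must make airtight --- is that the vector representation $q^{(t)}$ exposes all these values in a single pass, so computing $q^{(1)},\dots,q^{(T)}$ once and in order, and reusing each $q^{(t)}(x)$ both in its own update block and inside $f_{\ell,t}$, keeps the per-round overhead additive rather than multiplicative.
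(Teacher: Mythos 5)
Your proof is correct and follows essentially the same route as the paper's: induct on the circuit size $S_t$ of the vector-valued representation $q^{(t)}$, show each update block adds only $\poly(s,\card{\Ych})$ gates by splitting into the cases $h_t \in \Hcal$ and $h_t = f_{\ell,t}$, reuse the single evaluation of $q^{(t)}(x)$ inside $f_{\ell,t}$ to avoid $\card{\Ych}$-fold recursion, and multiply by the $\card{\Ych}\lmax^2/\eps^2$ iteration bound from \propref{prop:iteration_bound}. Your version is, if anything, slightly more explicit (hard-wiring of $(\ell_t,h_t,\eta^{(t)})$ at compile time, concrete $O(s\card{\Ych})$ rather than $\poly(s,\card{\Ych})$ per block), but there is no substantive difference in the argument.
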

\begin{proof}
The final predictor consists of a summation of the initial prediction, followed by the update from each iteration.
We bound the growth of the circuit computing the predictor by induction.
Formally, let $S_t$ be the circuit size for computing $q^{(t)}$.
Then, we show that $S_{t+1}\leq S_t + \poly(|\cYh|, s)$ for all $t\geq 1$.
Thus, by the overall bound on the iteration complexity, the final predictor can be implemented using a circuit of size $S \le \lmax^2 /\eps^2 \cdot \poly(s, |\cYh|)$.
To begin, the initial constant predictor $q^{(1)}$ can be implemented using a circuit of size at most $S_1 = \card{\Ych} \le  \poly(s,\card{\Ych})$ by hard-coding the constant vector.

By the update rule, each update incorporates a function of the form,
\begin{align}
\equationlabel{eq:intermediate_functions}
g^{(t)}(x) \defeq\eta^{(t)} v^{(t)}_{\ell, h}(x) =	 \eta^{(t)}\begin{bmatrix}
	 (\ell^{(t)}(x, \yhat_1, 1) - \ell^{(t)}(x, \yhat_1, 0))  \;\ind\{h^{(t)}(x) = \yhat_1 \}\\ 
	 \dots  \\
	 (\ell^{(t)}(x, \yhat_k, 1) - \ell^{(t)}(x, \yhat_k, 0)) \; \ind\{h^{(t)}(x) = \yhat_k \}
	 \end{bmatrix}\in \R^{|\cYh|},
\end{align}
where $\ell^{(t)}$ and $h^{(t)}$ define the test function surfaced by the auditing subroutine at time step $t$.
Within these updates, the function $h^{(t)}$ may \emph{a}) come from $\Hcal$ due to a POI violation or \emph{b}) equal $f_{\ell,t}$ for some $\ell \in \cL$ due to a DOI violation.

In the first case where $h^{(t+1)} \in \cH$, $q^{(t+1)}(x)$ can be computed by evaluating evaluating $q^{(t)}(x)$, and then evaluating $h(x)$ and $\ell(x,\yhat, y)$, for every $\yhat$ and $y$.
By assumption, the latter operations require circuits of size at most $\poly(s, |\cYh|)$.
Paired with the inductive hypothesis, the resulting circuit size can be bounded as $S_{t+1} \le S_t + \poly(s,\card{\Ych}) \le (t+1) \cdot \poly(s,\card{\Ych})$.

For the second case, 
we recall the definition of $f_{\ell,t}(\cdot)$, we can express its computation as a minimization over $\Ych$ of expected losses that depend on $q^{(t)}(\cdot)[\yhat]$ for each $\yhat \in \Ych$.
\begin{align*}
	f_{\ell,t}(x) = \argmin_{\yhat \in \Ych}\ \set{\ell(x, \yhat,0) + \left(\ell(x, \yhat, 1) - \ell(x, \yhat, 0)\right) \cdot q^{(t)}(x)[\yhat]}.
\end{align*}
Importantly, to compute each term in the minimization, we only need to compute the vector $q^{(t)}(x)$ once.
The remaining terms, $\ell(x, \yhat, 0)$ and $\ell(x, \yhat, 1)$ (for every $\yhat$), can again be  computed by a circuit of size $\poly(|\cYh|, s)$.
Since the minimization itself can be done by linearly enumerating over $\cYh$, we again preserve the invariant that $S_{t+1} \leq S_t + \poly(|\cYh|, s)$.
\end{proof}

\subsection{Reducing Auditing to Supervised Learning}
\sectionlabel{subsec:csc}

\begin{figure}[t]
\setlength{\fboxsep}{2mm}
\begin{boxedminipage}{\textwidth}
\begin{center}
\underline{$\mathrm{Audit}(\cH, \cL, \pt, \eps)$}
\end{center}
\vspace{2mm}
\textbf{If:} there exists $(h,\ell) \in \cH \times \cL $
	\begin{align*}
		   \big|  \E_{\substack{x \sim \cD \\ \yt \sim \pt(x,h(x))}}\ell(x, h(x),\yt) - \E_{\substack{x \sim \cD \\ \ys \sim \ps(x,h(x))}}\ell(x, h(x),\ys) \big| \geq \eps
	\end{align*}
\textbf{Then:} return  $h,l$  \\

\textbf{Else:} return $\mathsf{False} $
\end{boxedminipage}
\caption{The key auditing step in the POI-Boost algorithm. In each iteration of the algorithm, we run two auditing steps: once to check for the POI condition over $\cH \times \cL$ and once to check for the DOI condition over $\{(f_{\ell, t}, \ell): \ell \in \cL \}$. See the proof of  ~\corollaryref{cor:end} for further discussion.}
\figurelabel{figure:auditing}
\end{figure}

Having shown how omniprediction reduces to an auditing problem, we now complete our analysis of the POI-Boost algorithm by showing that auditing itself reduces to cost-sensitive classification over a single, static distribution. In doing so, we address the statistical and computational complexity of solving this auditing step.


From examining the auditing condition in \figureref{figure:auditing}, perhaps the most obvious strategy is to choose a decision rule $h$, and to collect a dataset of triples $(x, \yhat, \ys)$ where $\yhat=h(x)$ for every $x$ and $\ys \sim \ps(x, h(x))$. If the loss $\ell$ is bounded, a standard application of Hoeffding's inequality shows that empirical risk of the loss concentrates around its expectation:
\begin{align*}
	 \frac{1}{n}\sum_{i=1}^n \ell(x_i, h(x_i), \ys_i) \approx\E_{\substack{x \sim \D\\ \ys \sim \ps(x,h(x))}} [\ell(x, h(x), y_*)].  
\end{align*} 
Therefore, one could implement the auditing step by enumerating over all $h$, deploying $h$ to collect a new dataset every time, and then nonadaptively computing the empirical performative risk of $h$ on every $\ell \in \cL$. This procedure would however require $\tilde{\cO}(|\cH|/\eps^2 \log |\cL|)$ many samples. 

On the other hand, if we have access to randomized predictions $\yhat$, we can estimate the empirical risk of every pair $h, \ell$ off of a \emph{single} distribution by using inverse propensity scoring. The following lemma is  well-known within various communities, and, in particular, the contextual bandits literature (see e.g. \cite{monster,dudik2011}).\footnote{This result can be generalized to the case where for every $x$, $\yhat \sim q(x)$ for some known distribution $q$, where $q\neq \Unif(\cYh)$ but where $q$ has full support over $\cYh$.}

We use the shorthand $(x,\yhat,y) \sim \D_\mathrm{rct}$ to denote the sampling process where inputs are sampled from the base distribution $x \sim \D$, decisions $\yhat$ are assigned uniformly at random, $\yhat\sim \Unif(\cYh)$, and the outcomes are sampled according to Nature's model $\ys \sim \ps(x,\yhat)$.

\begin{lemma}
\lemmalabel{lemma:ip_weighting}
Assume that $\calYhat$ is a finite set. Then, for any hypothesis $h:\cX \rightarrow \cYh$, 
\begin{align*}
\E_{\substack{x \sim \calD_x \\ \ys \sim \ps(x,h(x))}} [\ell(x, h(x), \ys) ]= |\calYhat| \cdot \E_{(x,\yhat, \ys) \sim \rct} [\ell(x, \yhat, \ys )\ind \{h(x) = \yhat\}].
\end{align*}	
\end{lemma}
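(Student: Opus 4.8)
The statement is a standard inverse-propensity-scoring identity, and the proof is a short computation via the tower property; the only thing to be careful about is the order of conditioning, since under $\rct$ the outcome $\ys$ depends on the (random) decision $\yhat$. The plan is to start from the right-hand side and peel off the expectations in the order $x$, then $\yhat$, then $\ys$.

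First I would write, using that under $\rct$ the feature $x\sim\D$ and the decision $\yhat\sim\Unif(\cYh)$ are independent,
\begin{align*}
\E_{(x,\yhat,\ys)\sim\rct}\big[\ell(x,\yhat,\ys)\,\ind\{h(x)=\yhat\}\big]
&= \E_{x\sim\D}\ \E_{\yhat\sim\Unif(\cYh)}\ \E_{\ys\sim\ps(x,\yhat)}\big[\ell(x,\yhat,\ys)\,\ind\{h(x)=\yhat\}\big].
\end{align*}
For fixed $x$, the indicator $\ind\{h(x)=\yhat\}$ vanishes unless $\yhat=h(x)$, an event that has probability exactly $1/|\cYh|$ under $\yhat\sim\Unif(\cYh)$ because $\cYh$ is finite; and on that event $\ps(x,\yhat)=\ps(x,h(x))$. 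Hence the inner two expectations collapse to
\begin{align*}
\E_{\yhat\sim\Unif(\cYh)}\ \E_{\ys\sim\ps(x,\yhat)}\big[\ell(x,\yhat,\ys)\,\ind\{h(x)=\yhat\}\big]
= \frac{1}{|\cYh|}\ \E_{\ys\sim\ps(x,h(x))}\big[\ell(x,h(x),\ys)\big].
\end{align*}

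Finally I would take the expectation over $x\sim\D$ of both sides and multiply through by $|\cYh|$, which cancels the $1/|\cYh|$ factor, giving
\begin{align*}
|\cYh|\cdot\E_{(x,\yhat,\ys)\sim\rct}\big[\ell(x,\yhat,\ys)\,\ind\{h(x)=\yhat\}\big]
= \E_{\substack{x\sim\D\\\ys\sim\ps(x,h(x))}}\big[\ell(x,h(x),\ys)\big],
\end{align*}
which is the claimed identity. There is no real obstacle here: the one point that deserves a sentence of care is justifying the tower decomposition over $(x,\yhat,\ys)$ — i.e.\ that it is legitimate to condition on $x$ and $\yhat$ before sampling $\ys\sim\ps(x,\yhat)$ — which is exactly the definition of the generative process $\rct$, so it is immediate. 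The finiteness of $\cYh$ is used only to ensure $\Unif(\cYh)$ is well defined and assigns mass $1/|\cYh|$ to each decision; as the footnote notes, the argument extends verbatim to any known sampling distribution $q$ with full support by replacing the factor $|\cYh|$ with $1/q(x)[\yhat]$ inside the expectation.
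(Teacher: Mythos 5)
Your proof is correct and uses essentially the same idea as the paper's: marginalize over the uniform decision $\yhat$ and use that $\Pr[\yhat=h(x)]=1/|\cYh|$ to extract the propensity factor. The only cosmetic difference is direction and generality: you start from the right-hand side and apply the tower property directly for arbitrary finite $\cYh$, whereas the paper starts from the left-hand side, expands via indicator functions in the binary case $\cYh=\{0,1\}$, and remarks that the general case follows the same pattern; your version is, if anything, slightly cleaner for handling general $|\cYh|$ in one pass.
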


\begin{proof}
We present the proof for the case where $\calYhat = \{0,1\}$ is binary, but the general case follows the same pattern. We expand out the left hand side as:
\begin{align*}
\E_{\substack{x \sim \calD \\ \ys \sim \ps(x,h(x))}} [\ell(x, h(x), \ys)] & = \E_{\substack{x \sim \calD \\ \ys_{(1)} \sim \ps(x,1), \ys_{(0)} \sim \ps(x,0)}} \left[\ell(x, 1, \ys_{(1)}) \ind\{h(x) = 1\} +  \ell(x, 0, \ys_{(0)}) \ind\{h(x) = 0\} \right]\\
& = \E_{\substack{x \sim \calD \\ \ys_{(1)} \sim \ps(x,1)}} \left[\ell(x, 1, \ys_{(1)}) \ind\{h(x) = 1\} \right] + \E_{\substack{x \sim \calD \\ \ys_{(0)} \sim \ps(x,0)}} \left[\ell(x, 1, \ys_{(0)}) \ind\{h(x) = 0\} \right].
\end{align*}
Reweighting the term on the right hand side, we observe our desired equality:
\begin{align*}
 \E_{\substack{x \sim \calD \\ \yhat \sim \Ber(1/2) \\ \ys \sim \ps(x,\yhat)}} \left[ \ell(x, \yhat, \ys )\ind\{h(x) = \yhat\} \right]&= \frac{1}{2}\E_{\substack{x \sim \calD \\ \ys_{(1)} \sim \ps(x,1)}} \left[\ell(x,1, \ys_{(1)}) \ind\{h(x) = 1\} \right] \\
 &+ \frac{1}{2} \E_{\substack{x \sim \calD \\ \ys_{(0)} \sim \ps(x,0)}} \left[\ell(x,1, \ys_{(0)}) \ind\{h(x) = 0\} \right]. 
\end{align*} 
\end{proof}

There are two main takeaways from this lemma. First, it shows that the statistical complexity of auditing can be exponentially better than the the naive strategy outlined previously.
\begin{corollary}
\corollarylabel{corollary:concentration}
Let $\{(x_i, \yhat_i, \yt_i)\}_{i=1}^n$ be a dataset of $n$ i.i.d samples from $\rct$. If $$n\geq \frac{2\lmax^2 |\cYh|^2 \cdot \log(2|\cH| |\cL| / \delta)}{\eps^2},$$ then with probability $1-\delta$,
\begin{align*}
\max_{h \in \cH,\ell \in \cL} \big|\E_{\substack{x \sim \calD \\ \ys \sim \ps(x,h(x))}} [\ell(x, h(x), \ys)]  - \frac{1}{n} \sum_{i=1}^n |\calYhat| \cdot  \ell(x_i, \yhat_i, \ys_i )\ind \{h(x_i) = \yhat_i\}  \big| \leq \eps.
\end{align*}
\end{corollary}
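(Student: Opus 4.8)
The plan is to combine the unbiasedness provided by Lemma~\ref{lemma:ip_weighting} with Hoeffding's inequality and a union bound over the (finite) product $\cH \times \cL$.

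First I would fix a pair $(h,\ell) \in \cH \times \cL$ and, for each sample $(x_i,\yhat_i,\ys_i) \sim \rct$, introduce the importance-weighted random variable
\[
Z_i^{h,\ell} \defeq |\cYh| \cdot \ell(x_i,\yhat_i,\ys_i)\,\ind\{h(x_i) = \yhat_i\}.
\]
Since the samples are i.i.d.\ and $\ell$ takes values in $[0,\lmax]$, the $Z_i^{h,\ell}$ are i.i.d.\ and take values in the bounded interval $[0,\,|\cYh|\lmax]$. By Lemma~\ref{lemma:ip_weighting}, their common mean is exactly the target performative risk,
\[
\E\big[Z_i^{h,\ell}\big] \;=\; \E_{\substack{x\sim\cD\\\ys\sim\ps(x,h(x))}}\big[\ell(x,h(x),\ys)\big],
\]
so $\tfrac1n\sum_{i=1}^n Z_i^{h,\ell}$ is an unbiased estimator of the quantity appearing in the corollary. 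This is the only place the structure of $\rct$ (uniform-at-random decisions with full support) is used.

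Next I would apply Hoeffding's inequality to this bounded i.i.d.\ sum: for any $t>0$,
\[
\Pr\!\left[\,\Big|\tfrac1n\textstyle\sum_{i=1}^n Z_i^{h,\ell} - \E[Z_1^{h,\ell}]\Big| > t \,\right] \;\le\; 2\exp\!\Big(\tfrac{-2nt^2}{|\cYh|^2\lmax^2}\Big).
\]
Taking $t=\eps$ and a union bound over the $|\cH|\cdot|\cL|$ pairs $(h,\ell)$ bounds the overall failure probability by $2|\cH||\cL|\exp(-2n\eps^2/(|\cYh|^2\lmax^2))$. Requiring this to be at most $\delta$ and solving for $n$ shows that $n \ge \tfrac{|\cYh|^2\lmax^2\log(2|\cH||\cL|/\delta)}{2\eps^2}$ already suffices, and hence the stated bound $n \ge \tfrac{2\lmax^2|\cYh|^2\log(2|\cH||\cL|/\delta)}{\eps^2}$ is comfortably enough; on this event the claimed uniform deviation bound holds simultaneously for all $h \in \cH,\ \ell \in \cL$.

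There is no substantive obstacle: the argument is a textbook union-bound-plus-Hoeffding once Lemma~\ref{lemma:ip_weighting} is in hand. The only points worth stating carefully are (i) that the reweighted summands have range $|\cYh|\lmax$, which is the source of the $|\cYh|^2$ factor in the sample complexity, and (ii) that $\cH$ and $\cL$ are finite, which is exactly what produces the $\log|\cH|$ and $\log|\cL|$ dependence. If one wanted to handle infinite classes, the union bound would be replaced by a covering/uniform-convergence argument, but that is not needed here.
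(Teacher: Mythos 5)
Your proposal is correct and uses essentially the same approach as the paper: Lemma~\ref{lemma:ip_weighting} for unbiasedness of the importance-weighted estimator, Hoeffding's inequality on the bounded i.i.d.\ summands, and a union bound over $\cH \times \cL$. Your observation that $n \ge |\cYh|^2\lmax^2\log(2|\cH||\cL|/\delta)/(2\eps^2)$ already suffices (so the stated constant is loose by a factor of $4$) is also accurate.
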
 
 \begin{proof}
From the previous lemma, we have that for any loss $\ell$ and decision rule $h$, 
\begin{align*}
\E_{\substack{x \sim \calD \\ \ys \sim \ps(x,h(x))}} [\ell(x, h(x), \ys)] = |\calYhat|\cdot \E_{(x,\yhat, \ys) \sim \rct} [\ell(x, \yhat, \ys )\ind \{h(x) = \yhat\}].
\end{align*}	 
Because $|\calYhat| \cdot \ell(x_i, \yhat_i, \ys_i )\ind \{h(x_i) = \yhat_i\}$ is uniformly bounded by $\lmax |\cYh|$, we can apply Hoeffding's inequality to argue that the probability that the empirical estimate is far from the true expectation
\begin{align*}
\big|\E_{\substack{x \sim \calD \\ \ys \sim \ps(x,h(x))}} [\ell(x, h(x), \ys)]  - \frac{1}{n} \sum_{i=1}^n |\calYhat| \cdot  \ell(x_i, \yhat_i, \ys_i )\ind \{h(x_i) = \yhat_i\}  \big| > \epsilon
\end{align*}
is bounded as $2\exp\left(\frac{2\eps^2}{n (\lmax \card{\Ych})^2}\right)$.
The result follows by rearranging for failure probability $\delta$, and taking a union bound over all $h \in \cH$ and $\ell \in \cL$.
 \end{proof}

Consequently, for a single iteration of the POI-Boost algorithm, the auditing step can be implemented by enumerating over all $\cL$ and $\cH$ and non-adaptively evaluating their empirical risks on a single dataset of size $\tilde{\cO}(\lmax^2 |\cYh|^2 / \eps^2 \log(|\cH| |\cL| ))$.\footnote{An analogous result applies if we replace $\ps$ by $\pt$, which we assume that the learner can easily sample from.} 
Typically, we think of $|\cYh|$ as a small constant and the class of decision rules $\cH$ as a rich collection. From this result, we see that at least statistically, we can hope to design omnipredictors that are optimal with respect to an exponential number of losses and decision rules.

Here, we present the simplest possible analysis of this result and state our bounds for finite classes $\cH$ and $\cL$. It is certainly feasible to achieve sharper results and to state bounds in terms of VC-dimension or other sharper notions of statistical complexity. 
However, the goal of our initial work on outcome performativity is not to establish the tightest bounds, but to provide a broad overview of what is possible. 
We hope future work will provide a precise understanding of the sample complexity of omniprediction in outcome performativity. 

The following proposition summarizes the sample complexity of omniprediction if the auditing steps for the POI and DOI conditions are implemented via a naive learner that linearly enumerates over all $h,\ell$ and evaluates their empirical risk on a single dataset of RCT samples.

\begin{proposition}
\propositionlabel{prop:sc_summary}
Given labeled data $(x, \yhat, \ys) \sim \rct$ drawn from Nature and unlabeled samples $x \sim \cD$, the POI-boost can be implemented using at most: 
\begin{itemize}
	\item $\cO(\lmax^2 |\cYh|^2 \log(\frac{|\cH||\cL|}{\delta}) / \epsilon^2 + \lmax^4 |\cYh|^3 \log(\frac{|\cL| \lmax |\cYh|}{\delta \epsilon}) /\epsilon^4 )$ labeled samples
	\item $\cO(\lmax^4 |\cYh|^3 \log(\frac{|\cH||\cL| \lmax |\cYh|}{\delta \epsilon}) / \epsilon^4)$ unlabeled samples
\end{itemize}
\end{proposition}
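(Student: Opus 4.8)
The plan is to combine the iteration bound of \propref{prop:iteration_bound} with the per-round auditing cost, keeping careful track of which distinguisher classes are fixed in advance and which are revealed adaptively by the algorithm. Recall that POI-Boost halts after $T \le |\cYh|\,\lmax^2/\eps^2$ rounds, and that each round makes two calls to the $\mathrm{Audit}$ subroutine of \figureref{figure:auditing}: one over the distinguisher set $\cH \times \cL$ (to certify the POI condition) and one over the set $\{(f_{\ell,t},\ell):\ell\in\cL\}$ induced by the current predictor $p^{(t)}$ (to certify the DOI condition). It therefore suffices to implement all $2T$ auditing calls to additive accuracy $\eps/2$ with total failure probability $\delta$: if the auditor declares a violation only when the empirical gap exceeds $\eps/2$, then every reported violation has true gap at least $\eps/2$ (so the potential argument of \propref{prop:iteration_bound} still drives the iteration count down, after rescaling $\eps$ by a constant) and, upon termination, $p^{(t)}$ is genuinely $(\cL,\cH,\eps)$-POI and $(\cL,\eps)$-DOI, hence a performative omnipredictor by \theoremref{thm:omni}.

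Each auditing call estimates two expectations. The Nature-side quantity $\E_{x\sim\cD,\,\ys\sim\ps(x,h(x))}[\ell(x,h(x),\ys)]$ is estimated by inverse-propensity weighting on labeled $\rct$ data: by \lemmaref{lemma:ip_weighting} it equals $|\cYh|\cdot\E_{(x,\yhat,\ys)\sim\rct}[\ell(x,\yhat,\ys)\,\ind\{h(x)=\yhat\}]$, and \corollaryref{corollary:concentration} shows $\cO(\lmax^2|\cYh|^2\log(N/\delta)/\eps^2)$ i.i.d.\ $\rct$ samples give uniform $\eps/2$-accuracy over any distinguisher set of size $N$; since \lemmaref{lemma:ip_weighting} is stated for an \emph{arbitrary} $h:\cX\to\cYh$, it applies verbatim to the data-dependent rules $f_{\ell,t}$. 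The model-side quantity $\E_{x\sim\cD,\,\yt\sim\pt(x,h(x))}[\ell(x,h(x),\yt)]$ admits the closed form $\E_{x\sim\cD}[\ell(x,h(x),0)+(\ell(x,h(x),1)-\ell(x,h(x),0))\,p^{(t)}(x,h(x))]$ already used in the proof of \propref{prop:iteration_bound}; its integrand lies in $[0,\lmax]$, so $\cO(\lmax^2\log(N/\delta)/\eps^2)$ unlabeled samples $x\sim\cD$ suffice by Hoeffding, which we may loosely upper bound by the same $\cO(\lmax^2|\cYh|^2\log(N/\delta)/\eps^2)$.

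The two claimed bounds then follow by separating the fixed from the adaptive parts. The POI Nature-side integrand $\ell(x,h(x),y)\,\ind\{h(x)=\yhat\}$ over $(h,\ell)\in\cH\times\cL$ is independent of the round, so a \emph{single} labeled dataset of $\cO(\lmax^2|\cYh|^2\log(|\cH||\cL|/\delta)/\eps^2)$ samples --- exactly \corollaryref{corollary:concentration} --- services every round; this is the first term of the labeled bound. The DOI rules $f_{\ell,t}$ depend on $p^{(t)}$, which depends on data from earlier rounds, so we sample-split: draw a fresh labeled dataset at round $t$, so that conditioned on $p^{(t)}$ it is independent and $\{f_{\ell,t}:\ell\in\cL\}$ is a fixed size-$|\cL|$ set; applying \corollaryref{corollary:concentration} with failure probability $\delta/(2T)$ and union-bounding over the $T$ rounds needs $T\cdot\cO(\lmax^2|\cYh|^2\log(|\cL|T/\delta)/\eps^2)$ labeled samples, and substituting $T=|\cYh|\lmax^2/\eps^2$ and simplifying the logarithm yields the second term. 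Unlabeled samples are accounted for identically: the model-side integrand involves $p^{(t)}$ for both audit types, so we use a fresh unlabeled dataset per round, union-bounding over $T$ rounds and the $\le|\cH||\cL|+|\cL|$ distinguishers; charging $\cO(\lmax^2|\cYh|^2\log(|\cH||\cL|T/\delta)/\eps^2)$ per round gives the total $\cO(\lmax^4|\cYh|^3\log(|\cH||\cL|\lmax|\cYh|/(\delta\eps))/\eps^4)$.

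I expect the only real obstacle to be the adaptivity: neither the DOI distinguisher class $\{f_{\ell,t}:\ell\in\cL\}$ nor the model-side integrands $x\mapsto p^{(t)}(x,\cdot)$ is fixed in advance --- both are built out of the data --- so plain uniform convergence over an a-priori hypothesis class does not apply. Sample splitting across the (bounded number of) rounds handles this at the price of a factor $T=|\cYh|\lmax^2/\eps^2$, which is exactly what turns the $|\cYh|^2/\eps^2$ of \corollaryref{corollary:concentration} into the $|\cYh|^3/\eps^4$ appearing in the statement. The remaining bookkeeping --- choosing the audit tolerance and empirical slack to be $\eps/2$ so that ``progress on a reported violation'' and ``correctness on termination'' both hold, and threading these constants through \propref{prop:iteration_bound} and \theoremref{thm:omni} --- is routine.
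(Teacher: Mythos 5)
Your proposal is correct and follows essentially the same route as the paper: you bound the iteration count by \propref{prop:iteration_bound}, observe that the Nature-side POI expectation is round-independent so a single labeled dataset union-bounded over $\cH\times\cL$ via \corollaryref{corollary:concentration} services every round, and treat the DOI distinguishers $\{f_{\ell,t}:\ell\in\cL\}$ and the model-side integrands as adaptive by collecting a fresh dataset per round (sample splitting) and paying a factor of $T=|\cYh|\lmax^2/\eps^2$, with the model side charged to unlabeled data. The only difference is cosmetic: you articulate the adaptivity obstacle and the sample-splitting fix more explicitly than the paper, which handles this implicitly by saying it collects a new dataset each iteration.
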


\begin{proof}
In each iteration of the POI-boost algorithm, we need to audit for the POI and DOI guarantees (conditions $a$ and $b$). We can implement each of the auditing steps by explicit enumeration. 

For POI, at each iteration $t$ we enumerate over $\cH$ and $\cL$ and evaluate the empirical counterparts of 
\begin{align}
\equationlabel{eq:two_expectations}
\E_{\substack{x \sim \cD \\ \yt \sim p^{(t)}(x,h(x))}}[\ell(x, h(x),\yt)] \quad \text{ and } \E_{\substack{x \sim \cD \\ \ys \sim \ps(x,h(x))}}[\ell(x, h(x),\ys)].
\end{align}
By \corollaryref{corollary:concentration}, the empirical versions of these quantities concentrate around their expectations. To get an $\eps$ approximation, with probability $1-\delta$, we require at most $\cO(\lmax^2 |\cYh|^2 \log(|\cH||\cL| / \delta) / \epsilon^2)$ many samples. At each iteration $t$,  the expectation on the left changes, since we update $p^{(t)}$. However, to evaluate this expectation we only need \emph{unlabeled} samples, since labels $\yt$ come from our own model $p^{(t)}$. On the other hand, the expectation on the right in \equationref{eq:two_expectations} does not depend on $t$, so we need not recompute it at every iteration. Because the total number of iterations is bounded by $\lmax^2 |\cYh| / \epsilon^2$, applying a union bound on $\delta$, to achieve the POI guarantee we only need a total of $\cO(\lmax^4 |\cYh|^3 \log(|\cH||\cL| \lmax |\cYh| \epsilon^{-1} \delta^{-1}) / \epsilon^4)$ unlabeled samples and $\cO(\lmax^2 |\cYh| \log(|\cH||\cL| / \delta) / \epsilon^2)$ labeled samples.

For the DOI guarantee outline in condition $b$, we instead need to approximate 
\begin{align}
\equationlabel{eq:doi_two_equations}
\E_{\substack{x \sim \D\\\yt \sim p^{(t)}(x,f_{\ell,t}(x))}}[\ell(x,f_{\ell,t}(x),\yt)]	\quad \text{ and } \E_{\substack{x \sim \D\\\ys \sim \ps(x,f_{\ell,t}(x))}}[\ell(x,f_{\ell,t}(x),\ys)].
\end{align}
Note that both of these expectations now depend on $t$, because the decision rules $f_{\ell,t}$ can change between iterations. Again, by \corollaryref{corollary:concentration}, if we enumerate over all $|\cL|$ losses and decision rules $f_{\ell,t}$ at each iteration, the empirical counterparts of these expressions on a dataset of size $\cO(\lmax^2 |\cYh|^2 \log(|\cL|/\delta) /\epsilon^2)$ concentrates. Collecting a new dataset at every iteration, we get that the total number of labeled (and unlabeled) samples is bounded by $\cO(\lmax^4 |\cYh|^3 \log(|\cL| |\cYh| \lmax \delta^{-1} \epsilon^{-1}) /\epsilon^4)$.
\end{proof}

\paragraph{Cost-Sensitive Classification.} The second main takeaway from \lemmaref{lemma:ip_weighting} is that auditing can now be rewritten as as the solution to a cost-sensitive multiclass classification problem over $|\cYh|$ many classes. This result completes our analysis showing how omniprediction can be reduced to basic supervised learning problems. 

In light of previous results, the main benefit of this reduction is that it enabled the design of oracle-efficient algorithms which can be faster than the naive learner used in \propositionref{prop:sc_summary}. We start by first defining what we mean by cost-sensitive classification.

\begin{definition}
\definitionlabel{definition:csc}
Let $\cX$ be a feature space, $\cYh$ be a finite set of $k$ classes, and $\cD$ be a distribution over $\cX \times [-1,1]^k$. For $(x,c) \sim \cD$, we say that $c$ is a cost vector whose entries $c(\yhat)$ denote the costs of predicting label $\yhat$ on feature $x$. An algorithm $\cscalg$ is an $\rho$-cost-sensitive  learner for a hypothesis class $\cH$ if for any distribution $\cD$ over $\cX \times [-1,1]^{k}$, promised that there exists $h \in \cH$ such that $\E_{(x,c) \sim \cD} c(h(x)) \leq -\rho$, $\cscalg$ returns a hypothesis $h'$ such that $\E_{(x,c) \sim \cD} c(h'(x)) \leq -\rho/2 $.
\end{definition}

Cost-sensitive classification is a well-studied supervised learning problem for which many, both passive and active learning algorithms, have been designed \cite{beygelzimer2009error,abe2004iterative,langford2005sensitive, krishnamurthy2017active,elkan2001foundations}. 
There are a number of software packages that can be used to solve applied cost-senstive classification problems \cite{catboost}. 
Like many problems in computational learning theory, cost-sensitive classification is known to be hard in the worst-case, but can be solved effectively in practice.
As such, our goal is to design \emph{oracle-efficient} learning algorithms, that make an small number of calls to cost-sensitive learner.

Here, we frame a ``weak'' version of the problem where the learning need not be exact, but where the search is proper, in the sense that $\cscalg$ returns a hypothesis in $\cH$. This latter condition can easily also be relaxed without changing the overall results. However, we opt to keep it as is for the sake of simplifying the presentation.
The following proposition completes our reduction of auditing to supervised learning. 

\begin{proposition}
\proplabel{prop:csc_auditing}
Let $\cscalg$ be a cost-sensitive learner as per \definitionref{definition:csc}. Then, given access to RCT samples $(x, \yhat, \ys) \sim \rct$, we can solve the auditing problem outlined in \figureref{figure:auditing} using $2|\cL|$ many calls to $\cscalg$ with parameters $\rho  = \eps / (4\lmax |\cYh|)$.
\end{proposition}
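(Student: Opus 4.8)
The plan is to reduce the $(\cH,\cL)$-auditing task of \figureref{figure:auditing} to $2|\cL|$ instances of cost-sensitive classification over the single, static RCT distribution $\rct$: one instance for each pair $(\ell,\sigma)$ with $\ell\in\cL$ and $\sigma\in\{+1,-1\}$, the sign handling the two-sided $|\cdot|$ in the auditing condition. Fix $\ell\in\cL$ and a sign $\sigma$. Since outcomes are binary, the modeled expected loss has the closed form $\E_{\yt\sim\pt(x,\yhat)}[\ell(x,\yhat,\yt)] = \ell(x,\yhat,0)+(\ell(x,\yhat,1)-\ell(x,\yhat,0))\cdot\pt(x,\yhat)$, which the learner can evaluate exactly for every $\yhat\in\cYh$; write $m_\ell(x,\yhat)$ for this value. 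Define a distribution over feature/cost-vector pairs $(x,c)$ as follows: draw $(x,\yhat,\ys)\sim\rct$, and set the $\yhat'$-th coordinate of $c$ to
\[
 c(\yhat') \;=\; \sigma\cdot\alpha\cdot\Bigl(m_\ell(x,\yhat') \;-\; |\cYh|\cdot\ell(x,\yhat,\ys)\cdot\ind\{\yhat'=\yhat\}\Bigr),
\]
for a scaling constant $\alpha>0$ fixed below. Because $m_\ell(x,\yhat')\in[0,\lmax]$ and $|\cYh|\cdot\ell(x,\yhat,\ys)\in[0,|\cYh|\lmax]$, taking $\alpha$ of order $1/(\lmax|\cYh|)$ forces every entry of $c$ into $[-1,1]$, so this is a legitimate cost-sensitive instance in the sense of \definitionref{definition:csc}.

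The heart of the argument is the identity that, for any decision rule $h:\cX\to\cYh$,
\[
 \E_{(x,c)}\bigl[c(h(x))\bigr] \;=\; \sigma\alpha\Bigl(\E_{x\sim\cD}[m_\ell(x,h(x))] - |\cYh|\cdot\E_{(x,\yhat,\ys)\sim\rct}[\ell(x,\yhat,\ys)\ind\{h(x)=\yhat\}]\Bigr) \;=\; \sigma\alpha\cdot\mathsf{gap}_\ell(h),
\]
where $\mathsf{gap}_\ell(h) = \E_{x\sim\cD,\,\yt\sim\pt(x,h(x))}[\ell(x,h(x),\yt)] - \E_{x\sim\cD,\,\ys\sim\ps(x,h(x))}[\ell(x,h(x),\ys)]$ is exactly the signed quantity inside the absolute value of the auditing condition. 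The first equality uses that the $x$-marginal of $\rct$ is $\cD$ and that $m_\ell(x,h(x))$ depends on the triple only through $x$; the second applies \lemmaref{lemma:ip_weighting} to rewrite the inverse-propensity term $|\cYh|\cdot\E_{\rct}[\ell(x,\yhat,\ys)\ind\{h(x)=\yhat\}]$ as $\E_{x\sim\cD,\,\ys\sim\ps(x,h(x))}[\ell(x,h(x),\ys)]$, and then recognizes $m_\ell$ as the corresponding $\pt$-expectation. So minimizing the cost-sensitive objective over $\cH$ is, up to the scalar $\sigma\alpha$, exactly minimizing (or, for the other sign, maximizing) the signed indistinguishability gap over $\cH$.

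Given this correspondence, the rest proceeds along standard lines. If the auditing instance admits any violation --- some $(h^*,\ell^*)$ with $|\mathsf{gap}_{\ell^*}(h^*)|\ge\eps$ --- then for $\ell=\ell^*$ and the sign $\sigma=-\sign(\mathsf{gap}_{\ell^*}(h^*))$ the associated instance satisfies the hypothesis of \definitionref{definition:csc} with $\rho=\alpha\eps$, so $\cscalg$ returns a hypothesis $h'$ with $\E[c(h'(x))]\le-\rho/2$, i.e. $|\mathsf{gap}_{\ell^*}(h')|\ge\eps/2$, a genuine violation that we output. Running both signs for every $\ell\in\cL$ uses exactly $2|\cL|$ oracle calls. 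Since $\cscalg$ must be fed the empirical version of the distribution above rather than the exact one, one incurs an additive error on the order of $\rho$ in matching expectations; budgeting this estimation slack (and the factor-two weak-learning slack) by shrinking the scaling constant yields the stated promise parameter $\rho=\eps/(4\lmax|\cYh|)$, and a cheap empirical re-check of each returned pair $(h',\ell)$ --- using the same RCT sample and \corollaryref{corollary:concentration} --- guarantees soundness, namely that the routine never outputs a non-violating pair and outputs $\mathsf{False}$ only when no $\eps$-violation exists.

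I expect the main obstacle to be this first reduction rather than any of the bookkeeping: the performative auditing objective a priori involves a \emph{different} outcome distribution $\ps(\cdot,h(\cdot))$ for every candidate $h$, and the content of the argument is to re-express it as a single expectation over the one fixed distribution $\rct$ in the cost-sensitive template $\E_{(x,c)}[c(h(x))]$. Inverse-propensity weighting (\lemmaref{lemma:ip_weighting}) is precisely what makes this collapse possible, and the fact that the $h$-dependence then enters only through the selectors $\ind\{h(x)=\yhat'\}$ is exactly what lets an off-the-shelf cost-sensitive learner for $\cH$ be invoked as a black box. The remaining pieces --- the two-instance trick for the absolute value, the boundedness computation fixing $\alpha$, and the estimation/weak-learning slack --- are routine.
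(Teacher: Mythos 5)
Your proof is correct and turns on exactly the same key ideas as the paper's: use inverse-propensity weighting (\lemmaref{lemma:ip_weighting}) to collapse the $h$-dependent performative expectation into a single expectation over the static distribution $\rct$, and handle the absolute value by running two signed instances per loss, for $2|\cL|$ oracle calls total. The one substantive variation is in how you build the cost vector. The paper IPW-reweights \emph{both} sides of the indistinguishability gap: it samples $(x,\yhat,\yt,\ys)$ with a random action $\yhat\sim\Unif(\cYh)$, a simulated label $\yt\sim\pt(x,\yhat)$, and Nature's $\ys\sim\ps(x,\yhat)$, and defines a cost vector that is nonzero only at the realized coordinate $\yhat$, equal to $\sigma(\ell(x,\yhat,\yt)-\ell(x,\yhat,\ys))$ there. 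You instead observe that the model-side expectation is available in closed form, $m_\ell(x,\yhat') = \ell(x,\yhat',0)+(\ell(x,\yhat',1)-\ell(x,\yhat',0))\pt(x,\yhat')$, so you IPW-reweight only the $\ps$ term and deterministically evaluate $m_\ell(x,\yhat')$ at every coordinate, giving a dense cost vector. Both give $\E[c(h(x))]=\sigma\alpha\cdot\mathsf{gap}_\ell(h)$ and hence the same reduction; your version is a valid low-variance (Rao--Blackwellized) variant of the paper's estimator at the expense of a cost vector with all entries nonzero. The exact constant in $\rho=\eps/(4\lmax|\cYh|)$, and the factor-two weak-learning slack in matching the $\ge \eps$ auditing threshold, are handled by both you and the paper at roughly the same level of informality, absorbed into boosting-loop constants.
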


\begin{proof}
By \lemmaref{lemma:ip_weighting} we have that the difference in performative risk between $\ps$ and $\pt$,
\begin{align*}
	  \E_{\substack{x \sim \cD \\ \yt \sim \pt(x,h(x))}}[\ell(x, h(x),\yt)] - \E_{\substack{x \sim \cD \\ \ys \sim \ps(x,h(x))}}[\ell(x, h(x),\ys)],
\end{align*}
is equal to:
\begin{align*}
|\cYh| \cdot \E_{\substack{x \sim \calD \\ \yhat \sim \Unif(\calYhat) \\ \yt \sim \pt(x, \yhat) ,\ys \sim \ps(x,\yhat)}} \left[\ind \{h(x) =\yhat\}(\ell(x, h(x),\yt) - \ell(x, h(x),\ys))\right].
\end{align*}
Now, we note that terms inside the expectation can be written as entries in a cost vector $c$ where for every sample $(x,\yhat, \ys, \yt)$ we define the corresponding vector $c$ to be, 
\begin{align*}
	c_{\sigma}(h(x)) = \begin{cases}
		\sigma\cdot (\ell(x, h(x),\yt) - \ell(x, h(x),\ys)) &\text{ if } h(x) = \yhat \\ 
		0 &\text{ o.w}
	\end{cases},
\end{align*}
Here, $\sigma \in \{\pm 1\}$
and we set $\sigma=1$ to get the desired equality. Hence, for a fixed loss $\ell$, we can transform RCT samples, $x\sim \cD, \yhat \sim \unif(\cYh), \ys \sim \ps(x,\yhat)$ to a cost sensitive classification problem such that for every $h \in \cH$, 
\begin{align*}
\E_{\substack{x \sim \cD'}} [c_{+1}(h(x))]  = |\cYh| \E_{\substack{x \sim \calD \\ \yhat \sim \Unif(\calYhat) \\ \yt \sim \pt(x, \yhat) ,\ys \sim \ps(x,\yhat)}} [\ind \{h(x) =\yhat\}(\ell(x, h(x),\yt) - \ell(x, h(x),\ys))].
\end{align*}
To solve the auditing problem outlined in \figureref{figure:auditing}, we need to check whether the absolute value of the difference is larger than $\epsilon$. To do this, it therefore to suffices to run $\cscalg$ twice (once with $\sigma=1$ and once with $\sigma=-1$) for every loss $\ell \in \cL$ to check if there exists a decision rule $h \in \cH$ such that:
\begin{align*}
	\E_{\substack{x \sim \cD'}} [c_{+1}(h(x))] \leq -\eps \text{ or } \E_{\substack{x \sim \cD'}} [c_{-1}(h(x))] \leq -\eps.
\end{align*}
Because we normalize the cost vectors to have entries in $[-1,1]$ in \definitionref{definition:csc}, we can scale the vectors $c_\sigma$ by $1/ (4|\cYh|\lmax)$ and divide the tolerance parameter $\eps$ by the corresponding amount to match the desired interface.
\end{proof}

\subsection{End-to-End Analysis}

Having now presented these reductions showing how omniprediction can be reduced to cost sensitive classification, we now summarize our results so far and establish end-to-end bounds on the runtime and sample complexities of achieving omniprediction. 

\begin{corollary}
\corollarylabel{cor:end}
Assume that $h \in \Hcal$ and $\ell \in \cL$ can be evaluated in time $\poly(\log(|\cH|))$ and $\poly(\log(|\cL|))$, respectively.
Let $\cscalg$ be a $\rho$-cost-sensitive weak learner for $\cH$ as per \definitionref{definition:csc}.
Assume that for any distribution $\cD_{\mathrm{csc}}$ over pairs $(x,c) \in \cX \times [-1,1]^k$, $\cscalg$ runs in time $\poly(\log(|\cH|), 1/\rho)$ and uses at most $\poly(\log(|\cH|), 1 / \rho)$ many samples drawn from $\cD_{\mathrm{csc}}$.\footnote{Here, we have avoided discussion on the failure probability parameter $\delta$ for the $\cscalg$. However, it is clear that the relevant complexity bounds should depend only on $\log(1/\delta)$ and that applying a simple union bound would not change the nature of the resulting analysis. We therefore assume that the algorithms succeed with probability 1 for the sake of simplicity.}
If the learner has access to samples drawn according $(x,\yhat, \ys) \sim \rct$, then, the POI-Boost algorithm:
\begin{itemize}
	\item runs in time $\mathcal{O}\left(|\cL| \cdot \poly\left(1/\eps,\; \lmax, \;|\cYh|, \;\log|\cL|,\;\log|\cH|\right) \right) $
	\item uses at most $\cO\left( \poly\left(1/\eps, \;\lmax, \;|\cYh|,\;\log|\cL|,\; \log|\cH|\right)\right)$ many samples
\end{itemize}
\end{corollary}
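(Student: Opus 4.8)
The plan is to stitch together the three components already established---the iteration bound of \propref{prop:iteration_bound}, the circuit-size bound of \theoremref{theorem:circuit_size}, and the reduction of a POI audit to $2|\cL|$ cost-sensitive learning calls in \propref{prop:csc_auditing}---account for the cost of a single iteration of POI-Boost, and multiply by the number of iterations. First I would observe that, since each $h \in \cH$ and $\ell \in \cL$ is evaluable in time $\poly(\log|\cH|)$ and $\poly(\log|\cL|)$ respectively, we may take the circuit-size parameter in \theoremref{theorem:circuit_size} to be $s = \poly(\log|\cH|, \log|\cL|)$; hence at each iteration $t$ the current predictor $p^{(t)}$, stored in its vector form $q^{(t)}$, is a circuit---and so evaluable---in time $\lmax^2/\eps^2 \cdot \poly(s, |\cYh|) = \poly(1/\eps, \lmax, |\cYh|, \log|\cH|, \log|\cL|)$, and likewise each $f_{\ell,t}(x)$, being an $\argmin$ over $\cYh$ of a $\poly(|\cYh|, s)$-size post-processing of the vector $q^{(t)}(x)$, is computable within the same bound.

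Next I would bound the work of iteration $t$, which consists of a POI audit, a DOI audit, and one model update. For the POI audit, \propref{prop:csc_auditing} reduces it to running $\cscalg$ twice per loss with $\rho = \eps/(4\lmax|\cYh|)$; by hypothesis each invocation costs $\poly(\log|\cH|, 1/\rho) = \poly(1/\eps, \lmax, |\cYh|, \log|\cH|)$ time and that many samples from its cost-sensitive distribution $\cD_{\mathrm{csc}}$, and each such sample is fabricated from a single RCT triple $(x, \yhat, \ys) \sim \rct$ by evaluating $\ell$ and drawing $\yt \sim p^{(t)}(x, \yhat)$---one evaluation of the $p^{(t)}$ circuit, hence cost $\poly(1/\eps, \lmax, |\cYh|, \log|\cH|, \log|\cL|)$ per sample. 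The crucial bookkeeping point is that one fresh RCT batch can be relabelled to feed the learner for all $|\cL|$ losses, so the RCT sample count for the iteration stays $\poly(1/\eps, \lmax, |\cYh|, \log|\cH|, \log|\cL|)$ while only the running time absorbs the unavoidable factor $|\cL|$. For the DOI audit there are only $|\cL|$ hypothesis--loss pairs $(f_{\ell,t}, \ell)$, each determined by its loss, so no search (and no $\cscalg$) is needed: I would estimate both expectations in \equationref{eq:doi_two_equations} directly, using \lemmaref{lemma:ip_weighting} to turn the Nature side into an RCT expectation and sampling from $p^{(t)}$ for the modeled side, to additive error $\eps$ via \corollaryref{corollary:concentration}; this costs $\tilde{\cO}(\lmax^2|\cYh|^2\log|\cL|/\eps^2)$ samples (shared across the pairs) and $|\cL| \cdot \poly(1/\eps, \lmax, |\cYh|, \log|\cH|, \log|\cL|)$ time. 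The update appends one $v_{\ell,h}$ circuit, at cost $\poly(|\cYh|, s)$. Hence a single iteration runs in $|\cL| \cdot \poly(1/\eps, \lmax, |\cYh|, \log|\cH|, \log|\cL|)$ time using $\poly(1/\eps, \lmax, |\cYh|, \log|\cH|, \log|\cL|)$ RCT samples.

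Multiplying by the $\cO(|\cYh|\lmax^2/\eps^2)$ iterations of \propref{prop:iteration_bound} (with the usual constant-factor slack in $\eps$ introduced by the weak cost-sensitive learner) and folding those factors into the polynomials gives total running time $\cO(|\cL| \cdot \poly(1/\eps, \lmax, |\cYh|, \log|\cL|, \log|\cH|))$. For the sample bound I would draw fresh RCT samples each iteration and take a union bound over the $\cO(|\cYh|\lmax^2/\eps^2)$ iterations and the at most $|\cL|$ sub-problems per iteration, which adds only logarithmic factors (the Nature-side POI expectation could instead be estimated once and reused, as in \propref{prop:sc_summary}, but this is not needed); the total RCT sample requirement is then $\cO(\poly(1/\eps, \lmax, |\cYh|, \log|\cL|, \log|\cH|))$, with no polynomial dependence on $|\cL|$ or $|\cH|$. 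Finally, by \propref{prop:iteration_bound} (and \theoremref{thm:omni}) the predictor returned once neither audit fires is a performative omnipredictor, up to the constant-factor loss in $\eps$.

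The step I expect to require the most care is keeping the cost of \emph{evaluating} $p^{(t)}$ polynomial, since that evaluation is invoked both when manufacturing cost-sensitive samples for the POI audit and when computing each $f_{\ell,t}$ for the DOI audit. This is exactly where \theoremref{theorem:circuit_size} and its vector/caching representation must be used honestly: a naive recursive implementation of $f_{\ell,t}$ would issue $|\cYh|$ recursive calls to the previous model, blowing up the circuit---and hence the evaluation time---as $|\cYh|^t$ and destroying every bound above. The second delicate point is the sample-reuse accounting for the POI audit: a single RCT batch must be routed to all $|\cL|$ cost-sensitive invocations within an iteration so that $|\cL|$ enters the runtime but only $\log|\cL|$ enters the sample complexity.
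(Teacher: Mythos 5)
Your proposal follows essentially the same route as the paper's own proof: bound the iteration count by \propref{prop:iteration_bound}, invoke \theoremref{theorem:circuit_size} to keep each call to $p^{(t)}$ and each $f_{\ell,t}$ evaluation polynomial in $(s,|\cYh|,1/\eps,\lmax)$, handle the POI audit via the $2|\cL|$ cost-sensitive calls of \propref{prop:csc_auditing}, handle the DOI audit by direct enumeration over the $|\cL|$ pairs $(f_{\ell,t},\ell)$ using \lemmaref{lemma:ip_weighting} and \corollaryref{corollary:concentration}, and multiply by the iteration count. Your explicit remarks on batching one RCT sample set across all $|\cL|$ cost-sensitive calls and on the union bound over iterations make the $|\cL|$-in-runtime-but-only-$\log|\cL|$-in-samples accounting slightly more explicit than the paper's prose, and your flag about the caching representation's role in keeping $p^{(t)}$ evaluable in polynomial time is the right delicate point, but these are clarifications rather than a different argument.
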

\begin{proof}
To bound the runtime, we note that by \propref{prop:iteration_bound}, the maximum number of iterations for POI-Boost is at most $\lmax^2 |\cYh| / \eps^2$. In each iteration, we solve the auditing via two subroutines. One to check for the POI guarantee (condition $a$ in \figureref{fig:algorithm}) and another routine to check the DOI guarantee (condition $b$ in \figureref{fig:algorithm}).
To audit for the POI guarantee, we call the cost-sensitive learner $2|\cL|$ times with parameters $\rho  = \eps / (4\lmax |\cYh|)$ as per \propref{prop:csc_auditing}, using labels derived from calculating each $\ell \in \cL$ and evaluating $p^{(t)}(x,\yhat)$ in at most $\poly(\log(\card{\cL})) + \poly(\log(\card{\cH}),1/\eps,\lmax,\card{\Ych})$ time. With the labels calculated, each of these calls has run time and sample complexity at most $\poly(\log(|\cH|, 1/\epsilon, \lmax, |\cYh|))$.

To audit for the DOI guarantee over the $(h,\ell) \in \{(f_{\ell,t}, \ell): \ell \in \cL\}$, at each iteration, we use the naive strategy outlined in \sectionref{subsec:csc} where we enumerate over all $|\cL|$ losses and evaluate the performative risk of each pair $(f_{\ell,t}, \ell)$ on a single dataset of RCT samples of size $\cO(\lmax^2  / \eps^2 |\cYh|^2 \log(|\cL|))$ as per \corollaryref{corollary:concentration}. Each auditing step for DOI therefore runs in time $|\cL| \cdot \poly(1/\epsilon, \lmax, |\cYh|)$ and uses $\poly(1/\epsilon, \lmax, |\cYh|, \log(|\cL|))$ many samples. All calls to the intermediate predictors $p^{(t)}$ also run in polynomial time as per \theoremref{theorem:circuit_size}. The final guarantees come from multiplying the sample and run time complexity of each iteration of the POI-boost algorithm by the bound on the total number of iterations.
\end{proof}

The main take away from this result is that if the cost-sensitive classification problem can be solved efficiently, in the sense that the the relevant statistical and computational complexities scale as $\polylog|\cH|$, then the overall POI-Boost algorithm runs in time linear in $|\cL|$, poly-logarithmically in the size of $\cH$ and with at most $\polylog |\cH| |\cL|$ many samples. Therefore, we can hope to develop efficient omnipredictors that are optimal for exponentially many decision rules, and polynomially many losses.

Note that, because of the result outlined in \propref{prop:adaptability},  this theorem also bounds the statistical and computational complexity of achieving universally adaptable omnipredictors. 
More specifically, the number of samples and the runtime for achieving universally adaptable omnipredictors are also bounded by the quantities in \corollaryref{cor:end} where we now replace the class $\cL$ by augment collection $\cL_\Wcal$ as defined in \propref{prop:adaptability}. The main difference is that the relevant runtime and sample complexity bounds replace dependence on $|\cL|$ by dependence on $|\cL||\cW|$ and replace dependence on $\lmax$ by $\lmax \omax$. Here, $\omax$ is the the worst case density ratio for the class $\cW$.
\begin{align*}
	   \max_{\omega \in \cW} \max_{x \in \supp(\D_\omega)}  \omega(x) = \frac{\D_\omega(x)}{\D(x)}.
\end{align*}
This complexity measure capture the intuition that if individuals $x$ are poorly represented over the distribution $\cD$ we are learning over, then we need more samples (and consequently runtime), to learn universally adaptable omnipredictors. 
We think of these complexity parameters like $\omax$ as a first step. It is an interesting question for future work to provide sharper notions of problem complexity and to find ways of designing omnipredictors for exponentially large collections of importance weights $\cW$.


\section{Connections to Multicalibration}

So far, we have studied how  extensions of  outcome indistinguishability definitions enable the design of omnipredictors for performative settings. 
In the world of supervised learning, \cite{oi} established tight connections between outcome indistinguishability and various notions of multicalibration \cite{hkrr}.
Given the complementary relationship between these two concepts in the supervised world, it is natural to speculate that generalizing multicalibration to the outcome performative setting might be fruitful.

In this section, we begin to examine these questions and 
discuss analogues of multiaccuracy and multicalibration for the performative setting. 
We start by showing that multiaccuracy naturally, and efficiently, extends to performative contexts, and provides an effective way to achieve performative outcome indistinguishability in a loss-independent fashion. 
Conversely, we illustrate how naive translations of multicalibration to performative prediction result in definitions whose complexity blows up exponentially in the number of predictions $\card{\Ych}$.
We conclude with some discussion of alternatives to calibration-style guarantees that could, in principle, be used to obtain efficient omnipredictors.

\paragraph{On Multiaccuracy.}
\theoremref{thm:omni} shows how $(\cL,\Hcal, \epsilon)$-performative omniprediction arises as a consequence of $(\cL, \Hcal, \epsilon)$-performative OI and $(\cL, \epsilon)$-decision OI, where the OI distinguishers explicitly account for the collection of loss functions $\cL$.
Here, we show an efficient approach for obtaining POI for the class of all bounded input-oblivious loss functions.

We say that a loss function is \emph{input-oblivious} if it only depends on the input $x \in \Xcal$ via the decision $h(x)$. That is, for all $x$ and $x'$ and pairs $(\yhat, y)$, $\ell(x,\yhat, y) = \ell(x', \yhat,y)$. Equivalently, these functions have domain $\cYh \times \cY$ instead of $\cX\times \cYh \times \cY$.
We use 
\begin{align*}
\mathcal{L}_\mathrm{io} = \set{\ell:\Ych \times \set{0,1} \to [0,1]}
\end{align*} 
to denote the class of all bounded input-oblivious loss functions.
Our first result for this section proves that a performative analogue of multiaccuracy \cite{hkrr,kim2019multiaccuracy}  implies POI for $\cL_\mathrm{io}$.
\begin{definition}[Multiaccuracy]
For a distribution $\D$, hypothesis class $\Hcal$, and $\eps \ge 0$,
a predictor $\pt:\Xcal \times \Ych \to [0,1]$ is $(\Hcal,\eps)$-multiaccurate under outcome performativity over $\D$ if for all $h \in \Hcal$ and $\yhat \in \Ych$
\begin{gather*}
    \big|\E_{\substack{x \sim \calD \\ \ys \sim \ps(x,\yhat)}} [\ys \cdot \1\{h(x) = \yhat\}] - \E_{\substack{x \sim \calD \\ \yt \sim \pt(x,\yhat)}}[\yt \cdot \1\{h(x) = \yhat\}] \big|
  \leq \eps.
\end{gather*}
\end{definition}
Here, we require that the expectation of our modeled outcome $\yt \sim \pt(x,h(x))$ is accurate after deploying each $h \in \Hcal$, even when restricting our attention to the individuals $x \in \Xcal$ such that $h(x) = \yhat$.
While seemingly simpler than performative OI, we show that multiaccuracy in fact implies performative OI for all input-oblivious losses.
\begin{lemma}
\label{lem:MA2POI}
If $\pt:\Xcal \times \Ych \to [0,1]$ is $(\Hcal,\eps)$-multiaccurate, then $\pt$ is $(\cL_\mathrm{io},\Hcal,2\eps)$-performative OI.
\end{lemma}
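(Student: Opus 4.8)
The plan is to reduce the $(\mathcal{L}_\mathrm{io},\Hcal,2\eps)$-performative OI inequality, for each fixed input-oblivious loss $\ell$ and each $h\in\Hcal$, to the multiaccuracy tests that control how well $\pt$ tracks $\ps$ on the groups cut out by $h$. The key first step is a linearization available because outcomes are binary: writing $\ell(x,\yhat,y)=\ell(\yhat,y)$ by input-obliviousness, for any $q\in[0,1]$ we have $\E_{y\sim\Ber(q)}[\ell(\yhat,y)]=\ell(\yhat,0)+q\cdot\big(\ell(\yhat,1)-\ell(\yhat,0)\big)$. Applying this pointwise in $x$ once with $q=\ps(x,h(x))$ and once with $q=\pt(x,h(x))$, taking $\E_{x\sim\D}$ of each, and subtracting, the $x$-only term $\ell(h(x),0)$ is common to both and cancels, so
\[
\E_{\substack{x\sim\D\\\ys\sim\ps(x,h(x))}}[\ell(x,h(x),\ys)]-\E_{\substack{x\sim\D\\\yt\sim\pt(x,h(x))}}[\ell(x,h(x),\yt)]=\E_{x\sim\D}\Big[\big(\ell(h(x),1)-\ell(h(x),0)\big)\cdot\big(\ps(x,h(x))-\pt(x,h(x))\big)\Big].
\]

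Next I would bound the right-hand side by $2\eps$. Split it by the triangle inequality into the $\ell(h(x),1)$ and $\ell(h(x),0)$ contributions, so that it suffices to show $\big|\E_{x\sim\D}[\ell(h(x),b)\cdot(\ps(x,h(x))-\pt(x,h(x)))]\big|\le\eps$ for each $b\in\{0,1\}$. For fixed $b$ the map $x\mapsto\ell(h(x),b)$ is constant on each group $\{x:h(x)=\yhat\}$ with value $\ell(\yhat,b)\in[0,1]$, and on that group $\ps(x,h(x))=\ps(x,\yhat)$ and $\pt(x,h(x))=\pt(x,\yhat)$; since the groups partition $\Xcal$,
\[
\E_{x\sim\D}\big[\ell(h(x),b)\cdot(\ps(x,h(x))-\pt(x,h(x)))\big]=\sum_{\yhat\in\Ych}\ell(\yhat,b)\cdot\E_{x\sim\D}\big[\1\{h(x)=\yhat\}\cdot(\ps(x,\yhat)-\pt(x,\yhat))\big],
\]
and the inner expectation is exactly the quantity multiaccuracy bounds (by $\eps$) for the pair $(h,\yhat)$. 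Invoking multiaccuracy as the statement that $\ps$ and $\pt$ are indistinguishable against every $[0,1]$-valued reweighting that is constant on the groups of $h$ — which the group indicators, combined with the loss weights $\ell(\yhat,b)\in[0,1]$, produce — each $b$-term is at most $\eps$; summing over $b\in\{0,1\}$ yields $2\eps$, which is precisely the $(\mathcal{L}_\mathrm{io},\Hcal,2\eps)$-performative OI bound, and the lemma follows by letting $\ell$ range over $\mathcal{L}_\mathrm{io}$ and $h$ over $\Hcal$.

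The step I expect to require the most care is exactly this last one: passing from ``multiaccurate on each group indicator $\1\{h(x)=\yhat\}$'' to ``indistinguishable under the bounded reweighting $x\mapsto\ell(h(x),b)$'' without the error accumulating across the $\card{\Ych}$ groups. The right way to read the multiaccuracy hypothesis here is as a guarantee against the whole family of bounded functions of $h(x)$ — equivalently, against the convex hull of the signed group indicators with unit-bounded weights — rather than merely against each indicator in isolation; the loss being input-oblivious is exactly what keeps the relevant test functions $\ell(h(\cdot),b)$ inside that family. Everything else — the binary-outcome linearization and the cancellation of the input-only term — is routine; the substance of the lemma is this translation from group-wise accuracy to loss-wise indistinguishability, with the factor $2$ coming from the two outcomes $y\in\{0,1\}$.
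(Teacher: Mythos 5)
Your argument follows the same route as the paper's own proof: expand the performative risk over the level sets $\{x : h(x)=\yhat\}$, use $\E_{y\sim\Ber(q)}[\ell(\yhat,y)]=\ell(\yhat,0)+q\big(\ell(\yhat,1)-\ell(\yhat,0)\big)$ to reduce to a comparison of the conditional means of $\ps$ and $\pt$, and invoke the $(h,\yhat)$-indexed multiaccuracy constraints groupwise. The decomposition and binary-outcome linearization are identical to the paper's.

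The step you flag as ``requiring the most care'' is, however, not settled by the argument you offer, and that is exactly where the bound is lossy. Writing $a_{\yhat}:=\E_{x\sim\D}\big[\1\{h(x)=\yhat\}\,\big(\ps(x,\yhat)-\pt(x,\yhat)\big)\big]$, the multiaccuracy hypothesis gives $|a_{\yhat}|\le\eps$ \emph{for each $\yhat$ separately}. The test function $x\mapsto\ell(h(x),b)=\sum_{\yhat}\ell(\yhat,b)\,\1\{h(x)=\yhat\}$ has coefficients $\ell(\yhat,b)\in[0,1]$, i.e.\ it sits in the $\ell_\infty$-ball of combinations of group indicators, \emph{not} in their convex hull, which would require $\sum_{\yhat}|\ell(\yhat,b)|\le 1$. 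So per-pair multiaccuracy only yields $\big|\sum_{\yhat}\ell(\yhat,b)\,a_{\yhat}\big|\le\card{\Ych}\,\eps$, and your split over $b\in\{0,1\}$ produces $2\card{\Ych}\eps$, not $2\eps$ (using $|\ell(\yhat,1)-\ell(\yhat,0)|\le 1$ directly gives the slightly better $\card{\Ych}\eps$). Your closing claim that input-obliviousness keeps $\ell(h(\cdot),b)$ inside the convex hull of signed indicators is therefore not correct. To be fair, the paper's own one-line justification of its third inequality elides this same issue and carries the same hidden $\card{\Ych}$ factor, so you are not alone in this; but since you singled it out as the substance of the lemma, you should be precise that the per-pair definition of multiaccuracy bridges to the loss-weighted test only with a $\card{\Ych}$ loss, and a genuine $2\eps$ conclusion would require either a groupwise guarantee of the form $\sum_{\yhat}|a_{\yhat}|\le\eps$ or a tolerance of $\eps/\card{\Ych}$ in the multiaccuracy hypothesis.
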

\begin{proof}
The proof shows an approximate equality between the loss $\ell \in \mathcal{L}_\mathrm{io}$ of any $h \in \Hcal$ under $\yt \sim \pt(x,h(x))$ and $\ys \sim \ps(x,h(x))$. For $\cY = \{0,1\}$,
\begin{align*}
\E_{\substack{x \sim \calD \\ \yt \sim \pt(x,\yhat)}}[\ell(h(x),\yt)]
&= \sum_{\yhat \in \Ych} \Pr_{\cD}[h(x) = \yhat] \cdot \E_{\substack{x \sim \calD \\ \yt \sim \pt(x,\yhat)}}[\ell(\yhat,\yt) ~\vert~ h(x) = \yhat]\\
&= \sum_{\yhat \in \Ych} \Pr_{\cD}[h(x) = \yhat] \cdot \E_{\substack{x \sim \calD \\ \yt \sim \pt(x,\yhat)}}\left[\yt \cdot \ell(\yhat,1) + (1-\yt) \cdot \ell(\yhat,0) ~\vert~ h(x) = \yhat\right]\\
&\leq \sum_{\yhat \in \Ych} \Pr_{\cD}[h(x) = \yhat] \cdot \E_{\substack{x \sim \calD \\ \ys \sim \ps(x,\yhat)}}\left[\ys \cdot \ell(\yhat,1) + (1-\ys) \cdot \ell(\yhat,0) ~\vert~ h(x) = \yhat\right] + 2 \eps\\
&= \sum_{\yhat \in \Ych} \Pr_{\cD}[h(x) = \yhat] \cdot \E_{\substack{x \sim \calD \\ \ys \sim \ps(x,\yhat)}}[\ell(\yhat,\ys) ~\vert~ h(x) = \yhat] + 2 \eps\\
&=\E_{\substack{x \sim \calD \\ \ys \sim \ps(x,\yhat)}} [\ell(h(x),\ys)] + 2 \eps.
\end{align*}
The the third line follows under the assumption that $\pt$ is $(\Hcal,\eps)$-multiaccurate and the bound on the magnitude of $\card{\ell(\yhat,b)} \le 1$ for all $\yhat \in \Ych$ and $b \in \set{0,1}$. Given that an identical argument can be used to show the opposite inequality, we conclude that $\pt$ is indeed POI.
\end{proof}
Inspecting the performative multiaccuracy condition, we can see that it is similarly possible to reduce the problem of auditing for multiaccuracy to supervised learning.
This auditing procedure can be viewed as a special case of the auditing step from Section~\ref{sec:learning} or as a generalization of previous auditing procedures from work on multiaccuracy in the supervised learning setting \cite{hkrr,kim2019multiaccuracy}. In more detail, the relevant auditing problem for performative multiaccuracy is to determine whether there exists an $h \in \cH$ and $\yhat \in \cYh$ such that
\begin{gather*}
         \big|\E_{\substack{x\sim \cD \\ \ys \sim \ps(x,\yhat)}}[(\ys - \pt(x,\yhat)) \cdot \1\{h(x) = \yhat\}] \big|> \eps.
\end{gather*}
As before, this auditing step reduces to a cost-sensitive classification problem (\definitionref{definition:csc}). Auditing over a hypothesis class $\cH$ can be done with $2|\cYh|$ many calls to a cost-sensitive learner $\cscalg$ with tolerance parameter $\cO(\eps)$. We omit a formal statement of this result since it follows the exact pattern from \propref{prop:csc_auditing}.

In other words, in order to achieve $(\cL,\Hcal,O(\eps))$-POI for any class of input-oblivious losses $\cL \subseteq \cL_\mathrm{io}$, it suffices to audit for and enforce $(\Hcal,\eps)$-multiaccuracy.
In this sense, for input-oblivious losses, there is a single auditing procedure that works for all losses, so we can replace $\card{\cL}$-factors by $|\cYh|$ factors in the auditing complexity for performative OI. 

\paragraph{On Multicalibration.}
\label{sec:calibration}

Going beyond multiaccuracy, the original work of \cite{omni} established omniprediction in the supervised setting as a consequence of multicalibration.
As such, we might wonder whether there exists an analogous notion of multicalibration for performative prediction that enables a similar result.
Defining an efficient notion of calibration under performativity (let alone, multicalibration), turns out to be a subtle task.

In supervised learning, calibration requires that the expectation of $\pt$ is accurate, even when we partition the inputs $x \in \Xcal$ based on the predicted value $\pt(x) = v$.
Specifically, the constraints quantify over each supported $v \in \supp(\pt)$:
\begin{gather*}
    \E_{\substack{x \sim \D\\\ys \sim \ps(x)}}[\ys \cdot \1\{\pt(x) = v\}] \approx \E_{\substack{x \sim \D\\\yt \sim \pt(x)}}[\yt \cdot \1\{\pt(x) = v\}] = v \cdot \Pr[\pt(x) = v].
\end{gather*}
Such calibration-style constraints suffice to establish omniprediction because for any loss $\ell$, the optimal decision $\ft_\ell(x)$ is completely determined by $\pt(x)$.

In performative prediction, quantifying over the supported values of $\pt$ requires considering the decisions $\yhat \in \Ych$ as well.
In particular, the optimal decision $\ft_\ell(x)$ is a function of the vector of predictions $\tilde{q}(x) \in [0,1]^{\card{\Ych}}$, which gives the predicted probability $\yt \sim \pt(x,\yhat)$ for each $\yhat \in \Ych$ (recall the $q(\cdot)$ notation from \propref{prop:iteration_bound}).
Thus, using the naive translation of the calibration constraints for omniprediction, we must partition $\Xcal$ based on the vector-valued predictions,
$\tilde{q}(x) = \overrightarrow{v}$.
The cardinality of this calibration partition of $\Xcal$ scales exponentially in the number of decisions $\card{\Ych}$, even for binary outcomes $\cY$.

Still, we may consider more efficient calibration-style conditions that suffice to imply omniprediction in the performative setting.
Rather than aiming for full performative calibration, we focus on adapting the notion of decision calibration \cite{zhao2021calibrating} to the performative setting.
Decision calibration was introduced to avoid  exponential blow-up in the calibration constraints due to multi-class prediction.
We show that the notion can equally be adapted to the performative setting to deal with blow-up due to many actions $\yhat \in \Ych$.
We define decision calibration with respect to the class of input-oblivious losses.
\begin{definition}[Decision Calibration]
For a distribution $\D$ and $\eps \ge 0$,
a predictor $\pt:\Xcal \times \Ych \to [0,1]$ is $\eps$-decision calibrated under outcome performativity over $\D$ if for every loss $\ell \in \cL_\mathrm{io}$, and for all $\yhat \in \Ych$,
\begin{gather*}
   \big| \E_{\substack{x \sim \D\\\ys \sim \ps(x,\ft_\ell(x))}}[\ys \cdot \1\{\ft_\ell(x) = \yhat\}] - \E_{\substack{x \sim \D\\\yt \sim \pt(x,\ft_\ell(x))}}[\yt \cdot \1\{\ft_\ell(x) = \yhat\}] \big| \leq \eps.
\end{gather*}
\end{definition}
With this definition in place, the proof of Lemma~\ref{lem:MA2POI} can be adapted to show that decision calibration suffices to establish performative decision OI.
\begin{lemma}
\label{lem:DC2DOI}
If $\pt:\Xcal \times \Ych \to [0,1]$ is $\eps$-performative decision calibrated, then $\pt$ is $(\cL_\mathrm{io},2\eps)$-performative decision OI.
\end{lemma}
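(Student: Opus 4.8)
The plan is to transcribe the proof of Lemma~\ref{lem:MA2POI}, with the self-induced decision rule $\ft_\ell$ playing the role of the reference hypothesis $h \in \Hcal$ and $\eps$-performative decision calibration playing the role of performative multiaccuracy; this is natural since, as noted after the definitions, $(\cL_\mathrm{io},2\eps)$-performative decision OI is exactly performative OI specialized to the hypothesis family $\set{\ft_\ell : \ell \in \cL_\mathrm{io}}$, and decision calibration is the corresponding analogue of multiaccuracy. Fix a loss $\ell \in \cL_\mathrm{io}$. The first step is to condition the modeled expectation on the value $\ft_\ell(x) = \yhat$ and, using $\cY = \set{0,1}$, linearize the loss as $\ell(\yhat,\yt) = \yt\,\ell(\yhat,1) + (1-\yt)\,\ell(\yhat,0)$:
\[
  \E_{\substack{x \sim \D\\ \yt \sim \pt(x,\ft_\ell(x))}}[\ell(\ft_\ell(x),\yt)]
  = \sum_{\yhat \in \Ych} \Pr_\D[\ft_\ell(x) = \yhat] \cdot \E_{\substack{x \sim \D\\ \yt \sim \pt(x,\yhat)}}\big[\yt\,\ell(\yhat,1) + (1-\yt)\,\ell(\yhat,0) ~\vert~ \ft_\ell(x) = \yhat\big].
\]
After this linearization, the predictor $\pt$ enters only through the ``soft mass'' $\Pr_\D[\ft_\ell(x) = \yhat] \cdot \E_{\yt \sim \pt(x,\yhat)}[\yt ~\vert~ \ft_\ell(x) = \yhat] = \E_{x \sim \D,\ \yt \sim \pt(x,\ft_\ell(x))}[\yt \cdot \1\{\ft_\ell(x) = \yhat\}]$, which is precisely the quantity that $\eps$-performative decision calibration certifies is within $\eps$ of its Nature counterpart $\E_{x \sim \D,\ \ys \sim \ps(x,\ft_\ell(x))}[\ys \cdot \1\{\ft_\ell(x) = \yhat\}]$.

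\noindent The second step is to apply decision calibration to swap $\pt$ for $\ps$ in the display above. The factors $\Pr_\D[\ft_\ell(x) = \yhat]$ are unchanged, as they do not involve the outcome, and since $\ell(\yhat,0),\ell(\yhat,1) \in [0,1]$ for every $\yhat \in \Ych$, the same boundedness argument as in the proof of Lemma~\ref{lem:MA2POI} absorbs the aggregate perturbation of replacing the modeled soft masses by Nature's soft masses into an additive $2\eps$ — one $\eps$ from the $\yt\,\ell(\yhat,1)$ part and one from the $(1-\yt)\,\ell(\yhat,0)$ part. Recombining $\ys\,\ell(\yhat,1) + (1-\ys)\,\ell(\yhat,0) = \ell(\yhat,\ys)$ and undoing the conditioning then gives
\[
  \E_{\substack{x \sim \D\\ \yt \sim \pt(x,\ft_\ell(x))}}[\ell(\ft_\ell(x),\yt)] \le \E_{\substack{x \sim \D\\ \ys \sim \ps(x,\ft_\ell(x))}}[\ell(\ft_\ell(x),\ys)] + 2\eps ,
\]
and running the identical argument in the other direction supplies the reverse inequality, so $\pt$ is $(\cL_\mathrm{io},2\eps)$-performative decision OI.

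\noindent There is no genuine obstacle here — the statement is the ``decision-rule'' counterpart of Lemma~\ref{lem:MA2POI} — but three points warrant care. First, the quantifier structure already lines up: $\eps$-performative decision calibration quantifies over all $\ell \in \cL_\mathrm{io}$ and, for each, controls the soft masses under the self-induced rule $\ft_\ell$, which is exactly what the expansion consumes, so no stronger hypothesis is needed. Second, input-obliviousness of $\ell$ is used exactly when pulling $\ell(\yhat,0)$ and $\ell(\yhat,1)$ out of the conditional expectation as constants; for a loss genuinely depending on $x$ this step fails, which is why the statement is restricted to $\cL_\mathrm{io}$. Third, since $\ft_\ell$ is well-defined pointwise as the argmin over $\Ych$ of the modeled expected loss, it is a bona fide decision rule, and the only role performativity plays is that both expectations index their outcome distributions, $\ps(x,\ft_\ell(x))$ and $\pt(x,\ft_\ell(x))$, by the chosen decision $\ft_\ell(x)$ — the same feature that makes decision calibration the right object to invoke.
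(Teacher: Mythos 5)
The paper does not spell out a proof for this lemma, offering only the remark that ``the proof of Lemma~\ref{lem:MA2POI} can be adapted,'' and your proof carries out precisely that adaptation: condition on $\ft_\ell(x)=\yhat$, linearize $\ell(\yhat,\cdot)$ over $\cY=\{0,1\}$, apply $\eps$-performative decision calibration to swap the modeled soft masses $\E[\yt\cdot\1\{\ft_\ell(x)=\yhat\}]$ for Nature's, and recombine, with the reverse inequality obtained symmetrically. This matches the paper's intended argument, including the same $2\eps$ accounting and the same use of input-obliviousness to pull $\ell(\yhat,0),\ell(\yhat,1)$ out of the conditional expectations, so it is correct and takes essentially the same approach.
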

As an immediate corollary of ~\theoremref{thm:omni} and Lemmas~\ref{lem:MA2POI} \& \ref{lem:DC2DOI}, we obtain sufficient conditions for omniprediction with respect to all bounded input-oblivious losses.
\begin{corollary}
Suppose $\pt:\Xcal \times \Ych \to [0,1]$ is $(\Hcal,\eps)$-multiaccurate and $\eps$-decision calibrated under outcome performativity.
Then, $\pt$ is a $(\cL_\mathrm{io},\Hcal,4\eps)$-performative omnipredictor.
\end{corollary}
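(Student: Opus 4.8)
The plan is to obtain the corollary by chaining the three results already established: Lemma~\ref{lem:MA2POI}, Lemma~\ref{lem:DC2DOI}, and Theorem~\ref{thm:omni}. Concretely, suppose $\pt$ is $(\Hcal,\eps)$-multiaccurate and $\eps$-decision calibrated under outcome performativity. First I would apply Lemma~\ref{lem:MA2POI} to conclude that $\pt$ is $(\cL_\mathrm{io},\Hcal,2\eps)$-performative OI. Next I would apply Lemma~\ref{lem:DC2DOI} to conclude that $\pt$ is $(\cL_\mathrm{io},2\eps)$-performative decision OI. Finally, I would invoke Theorem~\ref{thm:omni} with loss class $\cL=\cL_\mathrm{io}$ and error parameter $2\eps$ in place of $\eps$: the theorem then outputs that $\pt$ is a $(\cL_\mathrm{io},\Hcal,2\cdot 2\eps)=(\cL_\mathrm{io},\Hcal,4\eps)$-performative omnipredictor, which is exactly the claim. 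So the corollary itself is a one-line composition once the two lemmas are in hand.

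The only genuine content is in Lemma~\ref{lem:DC2DOI}, whose proof the text defers to an adaptation of Lemma~\ref{lem:MA2POI}; I would carry out that adaptation verbatim, replacing the fixed reference decision rule $h\in\Hcal$ everywhere by the induced decision rule $\ft_\ell$. Fixing an input-oblivious loss $\ell\in\cL_\mathrm{io}$, I would partition on the value of the decision and write
\[
\E_{\substack{x\sim\cD\\\yt\sim\pt(x,\ft_\ell(x))}}[\ell(\ft_\ell(x),\yt)] = \sum_{\yhat\in\Ych}\Pr_\cD[\ft_\ell(x)=\yhat]\cdot\E_{\substack{x\sim\cD\\\yt\sim\pt(x,\yhat)}}\bigl[\yt\cdot\ell(\yhat,1)+(1-\yt)\cdot\ell(\yhat,0)\mid \ft_\ell(x)=\yhat\bigr],
\]
and the analogous identity with $\ys\sim\ps(x,\ft_\ell(x))$. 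The $\eps$-decision-calibration condition, applied for each $\yhat\in\Ych$ with the indicator $\1\{\ft_\ell(x)=\yhat\}$, lets me swap $\yt$ for $\ys$ at an additive cost of at most $2\eps$ (using $|\ell(\yhat,b)|\le 1$ and that the two terms $\ell(\yhat,1)$, $\ell(\yhat,0)$ each incur $\eps$). Running the identical argument in the reverse direction gives the two-sided bound, so $\pt$ is $(\cL_\mathrm{io},2\eps)$-performative decision OI. The one subtlety worth flagging — and the only place where this deviates from Lemma~\ref{lem:MA2POI} — is that the partition $\1\{\ft_\ell(x)=\yhat\}$ depends on $\pt$ itself (through $\ft_\ell$); this is harmless because the decision-calibration definition is stated with respect to exactly this $\pt$-induced partition, so the substitution step is legitimate.

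I do not expect a real obstacle: the corollary is a pure composition, and the main care needed is bookkeeping of the $\eps$ constants (tracking that each lemma contributes a factor-of-two loss, and Theorem~\ref{thm:omni} doubles again, yielding $4\eps$) and checking that $\ft_\ell$ is well-defined for every $\ell\in\cL_\mathrm{io}$ — which it is by Fact~\ref{fact:opt}, since $\Ych$ is finite and the pointwise argmin exists. No realizability or distributional assumptions beyond the fixed marginal $\cD$ are needed.
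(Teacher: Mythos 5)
Your proof is correct and matches the paper's argument exactly: the paper also obtains this corollary as an immediate composition of Lemma~\ref{lem:MA2POI}, Lemma~\ref{lem:DC2DOI}, and Theorem~\ref{thm:omni} with the error parameter $2\eps$, and your sketch of Lemma~\ref{lem:DC2DOI} (replacing $h$ by $\ft_\ell$ in the argument of Lemma~\ref{lem:MA2POI}) is the adaptation the paper has in mind. The $\eps$ bookkeeping ($2\eps$ from each lemma, doubled to $4\eps$ by the theorem) is also exactly right.
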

In other words, if we can obtain multiaccuracy and decision calibration under performativity, then we have a direct pathway to obtain omniprediction for all bounded, input-oblivious losses.

\paragraph{On Decision Calibration.}
Note, however, that unlike the case of multiaccuracy, the decision rules that arise in the decision calibration condition are loss-dependent.
That is, the optimal decision rules $\ft_\ell$ depend on $\ell \in \cL_\mathrm{io}$.

Motivated by the strong guarantee, we consider the feasibility of auditing for decision calibration.
Note that for any $\ell \in \cL_\mathrm{io}$, the loss is defined by the loss for each outcome $y \in \set{0,1}$ and decision $\yhat \in \Ych$.
Thus, to audit decision calibration over all input-oblivious losses, it suffices to audit whether there exist a $\yhat \in \Ych$ and $w_{a,0},w_{a,1} \in [-1,1]$ such that:
\begin{gather*}
    \big|\E_x[\left(\pt(x,\yhat) - \ys\right) \cdot \1\{\argmin_{a \in \Ych}\set{ w_{a,1} \cdot \pt(x,a) + w_{a,0} \cdot (1-\pt(x,a))} = \yhat\}] \big| > \eps
\end{gather*}
where the weights $w_{a,0}$ and $w_{a,1} \in [-1,1]$ represent the choice of $\ell(a,0)$ and $\ell(a,1)$ corresponding to the choice of $a \in \Ych$.

Naively, searching for such a violated loss might require time exponential in $\card{\Ych}$.
For instance, by explicitly enumerating over some appropriately-fine net of $[-1,1]^{2\card{\Ych}}$, then we can simply consider ``every'' possible loss.
Improving the computational complexity of such a search is an interesting question, which may benefit from the techniques utilized in \cite{zhao2021calibrating}.

Even without an improvement in the runtime complexity of learning, note that once an auditor succeeds, and we have a violated loss function, there is an efficient update to the prediction function.
In particular, we simply need to record the chosen $2 \cdot \card{\Ych}$ parameters $\set{w_{a,0},w_{a,1}}$ and the $\yhat \in \Ych$, then execute the update from POI-Boost.
In all, we can conclude that performative omnipredictors for $\cL_\mathrm{io}$ exist in complexity independent of the complexity of $\cL_\mathrm{io}$.
\begin{corollary}
Suppose $\Hcal$ is a hypothesis class with size-$s$ circuits.
Then, for any $\eps > 0$, there exist an $(\cL_\mathrm{io},\Hcal,\eps)$-performative omnipredictor implemented by a circuit of size  $\poly(s, \card{\Ych})/\eps^2$.
\end{corollary}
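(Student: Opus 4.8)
The plan is to derive this corollary purely by assembling results already established, so little genuinely new argument is needed. First, by \theoremref{thm:omni} together with \lemmaref{MA2POI} and \lemmaref{DC2DOI}, instantiated at tolerance $\eps/4$, any predictor $\pt$ that is simultaneously $(\Hcal,\eps/4)$-multiaccurate and $(\eps/4)$-decision calibrated under outcome performativity is a $(\cL_\mathrm{io},\Hcal,\eps)$-performative omnipredictor over $\D$. So the whole task reduces to exhibiting such a $\pt$ of circuit size $\poly(s,\card{\Ych})/\eps^2$.

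To produce it, I would run the POI-Boost algorithm of \figureref{fig:algorithm} with its auditing step restricted to the two structured families of tests relevant here: the multiaccuracy tests indexed by pairs $(h,\yhat)\in\Hcal\times\Ych$, and the decision-calibration tests indexed by a decision $\yhat\in\Ych$ together with coefficients $\{w_{a,0},w_{a,1}\}_{a\in\Ych}\in[-1,1]^{2\card{\Ych}}$ describing an input-oblivious loss. The key observation — and the reason the existing analysis transfers with essentially no change — is that each such test is a special case of the general performative OI/DOI tests POI-Boost already handles, for an input-oblivious loss bounded by $1$: a multiaccuracy violation for $(h,\yhat)$ is a violation of \equationref{eq:constraint} with the loss $\ell(a,y)=y\cdot\1\{a=\yhat\}$ and hypothesis $h$; a decision-calibration violation for $(\yhat,\{w_{a,b}\})$ is a violation with that same loss $\ell(a,y)=y\cdot\1\{a=\yhat\}$ paired with the decision rule $\ft_{\ell'}$ induced by $\ell'(a,y)=w_{a,1}y+w_{a,0}(1-y)$. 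In both cases the update vector $v_{\ell,h}(\cdot)$ of \equationref{eq:gradient_def} collapses to $\1\{h(x)=\yhat\}\cdot e_{\yhat}$, and since $\lmax=1$ on $\cL_\mathrm{io}$, the potential argument of \propref{prop:iteration_bound} goes through verbatim: each update decreases the potential $\E_x\norm{q^{(t)}(x)-q^*(x)}^2\le\card{\Ych}$ by at least $(\eps/4)^2$, so the procedure terminates after $O(\card{\Ych}/\eps^2)$ updates with a $\pt$ passing every multiaccuracy and decision-calibration test at tolerance $\eps/4$.

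The remaining step is the circuit-size accounting, which mirrors the induction in the proof of \theoremref{theorem:circuit_size}. A multiaccuracy update appends the size-$s$ circuit for $h\in\Hcal$ plus an indicator and a coordinate selection, i.e.\ $\poly(s,\card{\Ych})$ extra gates. A decision-calibration update must compute $\ft_{\ell'}(x)=\argmin_{a\in\Ych}\{w_{a,1}\cdot q^{(t)}(x)[a]+w_{a,0}\cdot(1-q^{(t)}(x)[a])\}$; since the $2\card{\Ych}$ coefficients are hard-coded, this needs only one evaluation of $q^{(t)}(x)$ — already available from the current circuit — followed by $\card{\Ych}$ comparisons, again $\poly(\card{\Ych})$ extra gates. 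I expect the one point that genuinely needs care is exactly this: checking that the decision-calibration updates make \emph{no} recursive calls to the predictor, so that they fit the ``evaluate $q^{(t)}(x)$ once, then post-process'' representation of \propref{prop:iteration_bound} and no branching blow-up occurs; this is also precisely why the final complexity is independent of the infinite class $\cL_\mathrm{io}$, rather than scaling with a net over $[-1,1]^{2\card{\Ych}}$. Granting that, $S_{t+1}\le S_t+\poly(s,\card{\Ych})$, and over $O(\card{\Ych}/\eps^2)$ iterations the output has circuit size $\poly(s,\card{\Ych})/\eps^2$. Note finally that the corollary claims only \emph{existence}, so the potentially $2^{\Omega(\card{\Ych})}$ cost of the auditor used inside POI-Boost is irrelevant to the bound we need.
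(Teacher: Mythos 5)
Your proposal is correct and matches the paper's intended argument essentially exactly: the paper derives this corollary from the preceding one (multiaccuracy plus decision calibration at tolerance $\eps$ implies $(\cL_\mathrm{io},\Hcal,4\eps)$-omniprediction) together with its informal discussion that each multiaccuracy violation and each decision-calibration violation can be turned into a POI-Boost update with loss-magnitude at most $1$, that recording the $\yhat$ and the $2\card{\Ych}$ hard-wired coefficients $\set{w_{a,0},w_{a,1}}$ suffices to implement the decision-calibration update, and that the bound on iterations and circuit growth from \propref{prop:iteration_bound} and \theoremref{theorem:circuit_size} then give a predictor of size $\poly(s,\card{\Ych})/\eps^2$ independent of the (infinite) class $\cL_\mathrm{io}$. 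You have correctly identified the two details that actually require verification: that $v_{\ell,h}$ collapses to $\1\{h(x)=\yhat\}\cdot e_{\yhat}$ (so $\lmax=1$ and the potential argument applies with $\card{\Ych}/\eps^2$ iterations at tolerance $\eps/4$), and that the decision-calibration update, while it pairs the indicator test loss with the decision rule $\ft_{\ell'}$ for a \emph{different} $\ell'$, still fits the single-call-to-$q^{(t)}$ representation and hence the $S_{t+1}\le S_t+\poly(s,\card{\Ych})$ induction, precisely because the potential argument in \propref{prop:iteration_bound} never uses the coupling between the test loss and the hypothesis that happens to hold in the formal DOI condition. Your remark that the auditor's possibly exponential search over $[-1,1]^{2\card{\Ych}}$ is irrelevant to an existential statement also matches the paper's discussion.
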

This preliminary analysis leaves open the possibility of learning performative omnipredictors via techniques that are independent of the loss class, as in the original work on $(\cL_\mathrm{cvx},\Hcal)$-omniprediction from $\Hcal$-multicalibration.
We leave a more thorough investigation of these ideas to future work.

\clearpage

\subsection*{Acknowledgments}
The authors thank Parikshit Gopalan, Moritz Hardt, Celestine Mendler-Dunner, Omer Reingold, and Tijana Zrnic for helpful discussions throughout the development of the project.
\textbf{MPK} is supported by the Miller Institute for Basic Research in Science and, in part, by the Simons Collaboration on Algorithmic Fairness.

\bibliographystyle{alpha}
\bibliography{refs}







\end{document}